\definecolor{niceblue}{rgb}{0.10, 0.14, 0.76} 
\newcommand{\dataset}{{\cal D}}
\newcommand{\reals}{\mathbb{R}}
\newcommand{\loss}{\mathcal{L}}
\newcommand{\Sphere}{\sqrt{d} \, \mathbb{S}^d}
\newcommand{\IL}{\mathcal{IL}}
\newcommand{\E}{\mathbb{E}}
\newcommand{\fracpartial}[2]{\frac{\partial #1}{\partial  #2}}
\newcommand{\bigO}{\mathcal{O}}
\newcommand{\appropto}{\mathrel{\vcenter{
  \offinterlineskip\halign{\hfil$##$\cr
    \propto\cr\noalign{\kern2pt}\sim\cr\noalign{\kern-2pt}}}}}
\newcommand{\PreserveBackslash}[1]{\let\temp=\\#1\let\\=\temp}
\newcolumntype{C}[1]{>{\PreserveBackslash\centering}p{#1}}
\theoremstyle{plain}
\newtheorem{theorem}{Theorem}
\newtheorem{lemma}{Lemma}
\newtheorem{corollary}{Corollary}
\newtheorem{definition}{Definition}
\newtheorem{approximation}{Approximation}
\newenvironment{manualthm}[1]{%
  \manthmin
}{\endmanthmin}
\newenvironment{manualcorollary}[1]{%
  \mancorin
}{\endmancorin}
\icmltitlerunning{Feature Learning and Signal Propagation in Deep Neural Networks}
\begin{document}

\twocolumn[

\icmltitle{Feature Learning and Signal Propagation in Deep Neural Networks}

\icmlsetsymbol{equal}{*}

\begin{icmlauthorlist}
\icmlauthor{Yizhang Lou}{stjc}
\icmlauthor{Chris Mingard}{dc,comp}
\icmlauthor{Yoonsoo Nam}{comp}
\icmlauthor{Soufiane Hayou}{sing}
\end{icmlauthorlist}

\icmlaffiliation{stjc}{St John's College, University of Oxford, Oxford, UK}
\icmlaffiliation{dc}{PTCL, University of Oxford, Oxford, UK}
\icmlaffiliation{comp}{Department of Physics, University of Oxford, UK}
\icmlaffiliation{sing}{Department of Mathematics, National University of Singapore}

\icmlcorrespondingauthor{Yizhang Lou}{yizhang.lou@sjc.ox.ac.uk}
\icmlcorrespondingauthor{Soufiane Hayou}{hayou@nus.edu.sg}

% You may provide any keywords that you
% find helpful for describing your paper; these are used to populate
% the "keywords" metadata in the PDF but will not be shown in the document

\icmlkeywords{Machine Learning, ICML}
\vskip 0.3in
]

\printAffiliationsAndNotice{}

\begin{abstract}
Recent work by \citet{baratin2021implicit} sheds light on an intriguing pattern that occurs during the training of deep neural networks: some layers \emph{align}
% \footnote{The \emph{alignment} is defined as the euclidean product of the tangent features matrix and the data labels matrix. See \cref{sec:tangent_features} for more details.}
much more with data compared to other layers
(where the \emph{alignment} is defined as the euclidean product of the tangent features matrix and the data labels matrix). 
The curve of the alignment as a function of layer index (generally) exhibits an ascent-descent pattern where the maximum is reached for some hidden layer.
%Modern Deep Neural Networks (DNNs) exhibit impressive generalization properties on a variety of tasks without any regularization method, suggesting the existence of hidden regularization effects. Recent work by \citet{baratin2021implicit} sheds light on an intriguing implicit regularization effect, showing that some layers are much more \emph{aligned} with data labels than other layers. 
In this work, we provide the first explanation for this phenomenon. We introduce the \emph{Equilibrium Hypothesis} which connects this alignment pattern to signal propagation in deep neural networks. Our experiments demonstrate an excellent match with the theoretical predictions.
% The empirical success of modern Deep Neural Networks (DNNs) has sparked a growing interest in the theoretical understanding of these models. Even without an additional regularization term, DNNs exhibit good generalization on a variety of tasks, suggesting the existence of hidden regularization effects. 
% An interesting topic in this direction is the study of \emph{implicit} regularization induced by the training algorithm. Recent work by \cite{baratin2021implicit} sheds light on an intriguing implicit regularization effect through the lens of tangent features \citep{jacot2018NTK}, and showed that some layers are much more \emph{aligned} with data labels compared to other layers. 
% This suggests that as the network grows in depth and width, an implicit \emph{layer selection} phenomenon occurs during training. However, no theoretical explanation for this effect can be found in the literature. In this work, we use a hybrid analysis that combines signal propagation theory and feature evolution during training to provide the first explanation for this \emph{alignment hierarchy}. We introduce and empirically validate the \emph{Equilibrium Hypothesis} which states that the layers that achieve some balance between forward and backward information loss are the ones with the highest alignment to data labels.
\end{abstract}
\vspace{-1em}

\section{Introduction}
The empirical success of modern Deep Neural Networks (DNNs) has sparked a growing interest in the theoretical understanding of these models. An important development in this direction was the introduction of the Neural Tangent Kernel (NTK) framework by \citet{jacot2018NTK}, which provides a dual view of Gradient Descent (GD) in function space. The NTK is the dot product kernel of Tangent Features (gradient features), given by
% \vspace{-0.1cm}
$$
K(x, x') = \nabla_{\theta} f(x) \nabla_{\theta} f(x')^T,
$$
%\vspace{-0.1cm}
%\\ &= \sum_{l=1}^L \nabla_{\theta_l} f_\theta(x) \nabla_{\theta_l} f_\theta(x')^T \in \mathbb R^{o \times o}.
where $f$ is the network output and $\theta$ is the vector of model parameters. The NTK has been the subject of an extensive body of literature, both in the NTK regime where the NTK remains constant during training, e.g. \citep{jacot2018NTK, jacot2020asymptotic, hayou_ntk, ghorbani2019investigation, yang_tensor3_2020, yang2019tensor_i}, and when the NTK changes during training, e.g \citep{yang2021tensor_iv, baratin2021implicit}. The latter is called the \emph{feature learning} regime since the tangent features $\nabla_{\theta} f(x)^T$ evolve during training in some data-dependent directions; we say that tangent features and the NTK \emph{adapt} to the data. A simple way to quantify \emph{how much these features adapt to data} is by measuring the \emph{alignment} between the tangent features and data labels \citep{baratin2021implicit}, which is given by the normalized euclidean product between the tangent kernel matrix $\hat{K}$ and data labels matrix $YY^T$ (see \cref{sec:tangent_features}). 
\begin{figure}[t]
\centering
\begin{subfigure}{.2\textwidth}
  \centering
  \includegraphics[width=\textwidth]{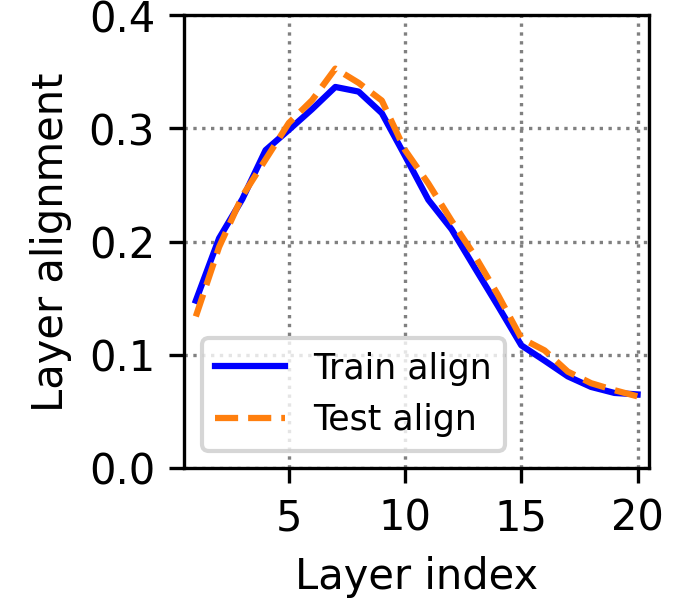}
\end{subfigure}
\begin{subfigure}{.2\textwidth}
  \centering
  \includegraphics[width=\textwidth]{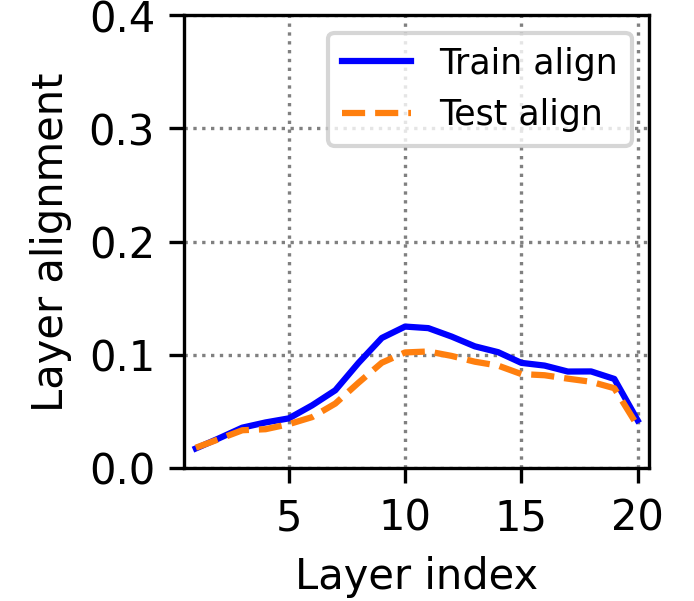}
\end{subfigure}
\caption{
%\small
\small{Example Alignment Hierarchies for 20 layer FFNNs trained on Fashion MNIST (L) and CIFAR10 (R).}}
\vspace{-0.5cm}
\label{fig:intro}
\end{figure}
% In the context of DNNs, another important aspect is how each layer adapts to the data during training. Indeed, for a depth $L$ network, the NTK can be expressed as a sum of the layer-wise NTK kernels (see \cref{sec:tangent_features})
% $
% \mathbf{K} = \sum_{l=1}^L \mathbf{K}^{l},
% $
% where $\mathbf{K}(x, x') = \nabla_{\theta_l} f(x) \nabla_{\theta_l} f(x')^T$ and $\theta_l$ is the collection of the parameters in layer $l$. 
\paragraph{Role of hidden layers.}
In the context of DNNs, hidden layers act as successive embeddings that evolve in data-dependent directions during training. However, it is still unclear how different layers contribute to model performance. 
A possible approach in this direction is the study of feature learning in each layer, i.e. how features adapt to data as we train the model. A simple way to do this is by measuring the change in the values of pre-activations as we train the network; this was used by \citet{yang2021tensor_iv} to engineer a network parameterization that maximizes feature learning. To capture the dynamics of GD, \citet{baratin2021implicit} studied the change in tangent features instead of pre-activations. For each layer $l$, they measured the alignment between the layer tangent kernel matrix $\hat{K}_l$ (where $K_l(x,x') = \nabla_{\theta_l} f(x) \nabla_{\theta_l} f(x')^T$, $\theta_l$ being the vector of parameters in the $l^{th}$ layer). The kernel $\hat{K}_l$ can be seen as the NTK of the network if only the parameters of the $l^{th}$ layer are allowed to change (other layers are frozen, see \cref{sec:quantifying_role} for a detailed discussion). The authors demonstrated the existence of an alignment pattern in which the tangent features of some hidden layers are significantly more aligned with data labels compared to other layers. We call this pattern the \emph{Alignment Hierarchy} (illustrated in \cref{fig:intro}). In general, the alignment reaches its maximum for some hidden layer and tends to be minimal in the external layers (first and last). 
To the best of our knowledge, no explanation has been provided for this phenomenon in the literature. 
In this paper, we propose an explanation based on the theory of signal propagation in randomly initialized neural networks. Our intuition is based on the observation that tangent kernels can be decomposed as an Hadamard product of two quantities linked to how signal propagates in DNNs.

\paragraph{Forward-Backward\hspace{0.1cm}Decomposition.} We show in \cref{sec:EH} that for a depth $L$ neural network, with proper normalization, the $l^{th}$ layer tangent kernel matrix can be written as
\begin{equation}\label{eq:intro_decomposition}
\hat{K}_l =  \overrightarrow{K}_l \circ \overleftarrow{K}_l
\end{equation}
where $\overrightarrow{K}_l$ is a kernel that depends only on the $l$ first layers, and $\overleftarrow{K}_l$ is a kernel that is essentially governed by the last $L-l+1$ layers. The $\circ$ denotes the Hadamard product. Intuitively, this decomposition suggests that tangent features are the result of the collaboration between the \emph{forward} kernel $\overrightarrow{K}_l$ and the \emph{backward} kernel $\overleftarrow{K}_l$. For DNNs, it is expected that depth has some layer-dependent effect on the kernels $\overrightarrow{K}_l$ and $\overleftarrow{K}_l$ and therefore on $K_l$. Understanding the effect of depth on how information propagates has been the subject of a large body of work which constitute what is known as the theory of signal propagation in DNNs. We briefly introduce this theory in the next paragraph (a more detailed discussion is provided in \cref{sec:EH}).
\vspace{-0.3cm}
\paragraph{Signal propagation in DNNs.} A recent line of research (e.g \citet{poole, samuel2017, lee_gaussian_process, yang_tensor3_2020, hayou2019impact, hayou_ntk, yang2021tensor_iv}) studied the dynamics of signal propagation in randomly initialized DNNs. Under mild conditions, in the infinite (layer) width limit, each neuron $y(.)$ in the network converges in distribution to a Gaussian process (GP) \citep{neal, matthews}. Hence, in this limit, the covariance kernel captures all the properties of the neurons at initialization \citep{neal, lee_gaussian_process}\footnote{GPs are fully characterized by their covariance kernel.}. From a geometric perspective, the correlation/covariance measures the angular distortion between vectors. Hence, the covariance represents a \emph{geometric information}\footnote{Here, the information refers purely to the covariance between two vectors, and is different from the information-theoretic definition of information.}. As shown in \cref{sec:EH}, the kernels $\overrightarrow{K}_l$ and $\overleftarrow{K}_l$ are covariance kernels; $\overrightarrow{K}_l$, resp. $\overleftarrow{K}_l$, represents a notion of forward, resp. backward, geometric information (covariance). Inspired by this observation, we provide an explanation of the feature alignment pattern based on how the geometric information, encoded in $\overrightarrow{K}_l$ and $\overleftarrow{K}_l$, changes with depth. Notably, we prove that these geometric information become degenerate in the limit of large depth which translates to the information being lost as we increase depth. We say that there is a \emph{information loss} as we increase depth and we characterize this loss in the case of fully-connected DNNs. Could this information loss be the reason behind the observed alignment pattern? More precisely, could some notion of balance between information loss in kernels $\overrightarrow{K}_l$ and $\overleftarrow{K}_l$ explain the alignment pattern? We formulate this intuition as the \emph{Equilibrium hypothesis} (EH) which we introduce in \cref{sec:EH}. 
% To explain these findings, we introduce the Equilibrium Hypothesis which conjectures that this effect is a result of a notion of balance between forward and backward information propagation at initialization.
\paragraph{Our contributions are three-fold.} Firstly, we introduce and empirically validate the Equilibrium Hypothesis, which provides an explanation for the alignment pattern. More precisely, we give an explanation for the fact that the alignment peaks at some hidden layer. Secondly, we provide a comprehensive analysis of this hypothesis in the case of fully-connected neural networks. Most notably, we prove that layers with indices $\mathbf{l = \Theta(L^{3/5})}$ achieve a notion of \emph{equilibrium} in geometric information in the limit of large depth $L$. Our experiments yield excellent match between theoretical and empirical results. Finally, we provide an empirical analysis of the connection between the Alignment Hierarchy (illustrated in \cref{fig:intro}) and the generalization error. 
% we discuss some algorithmic implications of the Equilibrium Hypothesis.
\section{Feature Learning in DNNs}\label{sec:tangent_features}
Consider a neural network model consisting of $L$ layers of widths $(N_l)_{1 \leq l \leq L}$, $N_0 = d$, and let $\theta = (\theta_l)_{1 \leq l \leq L}$ be the flattened vector of weights indexed by the layer's index, and $P$ be the dimension of $\theta$. Given an input $x \in \reals^d$, the network is described by the set of equations
$$
z_l(x) = \mathcal{F}_l(\theta_l, z_{l-1}(x)), \quad 1\leq l\leq L,
$$
where $\mathcal{F}_l$ is a mapping that defines the $l^{th}$ layer, e.g. fully-connected, convolutional, etc.\\
% a Lipschitz, twice differentiable nonlinearity function $\phi: \mathbb{R} \rightarrow \mathbb{R}$, with bounded second derivative. We focus on
% This paper focuses on the ANN realization function $F^{(L)}: \mathbb{R}^{P} \rightarrow \mathcal{F}$, mapping parameters $\theta$ to functions $f_{\theta}$ in function space $\mathcal{F}$.
The network output function $f$ is given by $f_\theta(x) = \nu(z_L(x)) \in \reals^o$ where $\nu : \mathbb{R}^{N_L} \rightarrow \mathbb{R}^o$ is a mapping of choice, and $o$ is the dimension of the output, e.g. the number of classes for a classification problem.\\
We consider a loss function $\mathbb{L}: \mathbb{R}^{o} \times \mathbb{R}^{o} \rightarrow \mathbb{R}$ and a dataset $\dataset = \{(x_1,y_1), \ldots, (x_{n},y_n)\}$. The network is then trained by minimizing the empirical loss $\loss: \mathbb{R}^P \to \mathbb{R}$ given by
\begin{align*}
\loss(\theta) = \frac{1}{n} \sum_{i =1}^{n} \mathbb{L}(f_{\theta}(x_i), y_i).
\end{align*}
% A single step full batch gradient descent with learning rate $\eta$ back propagated to the parameters $\theta$ could be written as:
% \begin{align*}
%     \theta(t+1) = \theta(t) - \eta \frac{\partial \mathcal{L}}{\partial \theta^T} 
% \end{align*}
% for time step $t = 0,1, \ldots$. 
\vspace{-0.5cm}
\paragraph{Tangent Features.} \cite{jacot2018NTK} introduced the Neural Tangent Kernel (NTK), which provides a dual view of the training procedure; it links gradient updates in parameter space to a kernel gradient descent in function space. The NTK is given by
\begin{equation}\label{eq:ntk_decomposition}
    \begin{aligned}
   K^L_{\theta}(x, x') &= \nabla_{\theta} f_\theta(x) \nabla_{\theta} f_\theta(x')^T\\ &= \sum_{l=1}^L \nabla_{\theta_l} f_\theta(x) \nabla_{\theta_l} f_\theta(x')^T \in \mathbb R^{o \times o}.
   \end{aligned}
\end{equation}
The tangent features are the feature maps of the NTK, given by the output gradients w.r.t the network parameters, namely
\begin{equation}
    \begin{aligned}
    \Psi_{\theta}(x):=\nabla_{\theta} f_{\theta}(x)^T \in \mathbb{R}^{P \times o}.
    \end{aligned}
\end{equation}
For the sake of simplicity, we remove  $\theta$ and $L$ in the kernel notation and define $\mathbf{\Psi} \in \mathbb{R}^{P \times on}$, the tangent feature matrix over the training dataset $\mathcal{D}$. $\mathbf{\Psi}$ is the horizontal concatenation of $\Psi(x_1), \ldots, \Psi(x_{n})$. The corresponding tangent kernel matrix is given by $\hat{\mathbf{K}}=\mathbf{\Psi}^T  \mathbf{\Psi} \in \mathbb{R}^{on \times on}$. 
\vspace{-0.3cm}
\subsection{Quantifying the role of each layer}\label{sec:quantifying_role}
\citet{lee2020finite,lee2019wide,valle2018deep,mingard2021sgd} demonstrated that neural network based kernel methods (e.g.\ infinite width NTK regime) can achieve near parity with finite width networks, and have near identical posterior distributions, over a range of architectures (e.g.\ LSTMs, WideResNet) and datasets (e.g.\ Cifar10).
However, optimizer hyperparameters are known to affect generalization, suggesting an extra layer of complexity \citep{bernstein2021implicit,mingard2021sgd}.
Furthermore, \citet{hayou_ntk} proved that the large depth limit of the NTK regime is trivial in the sense that the limiting NTK has rank 1. This suggests that this kernel regime, where tangent features are fixed at initialization, cannot explain the inductive bias of ultra deep neural networks, and that feature learning (tangent features evolve during training) could be the backbone of generalization in very deep networks. 
Finite width CNNs operating in the feature learning regime have also been shown to generalise better than their infinite width counterparts \citep{lee2020finite}.
A question that arises is that of the role of each layer in feature learning -- unfortunately there is no consensus on how feature learning should best be measured. One way to approach this question is by analyzing the behaviour of a network where only the parameters in a given layer are allowed to change with gradient updates. We call this approach `parameter freezing'.
\vspace{-0.3cm}
\paragraph{Parameter freezing.} 
The derivations of the results in this paragraph are provided in \cref{app:justif_tf}. We omitted the details in the main text to meet space constraints.\\
Consider a classification task with $o=k$ classes and assume that the dataset is balanced, i.e. $\frac{1}{n} \sum_{i=1}^n y_i \approx \frac{1}{k} \mathbf{1},$ where $\mathbf{1} \in \reals^k$ is the vector of ones.\\
\emph{\underline{Intuition}: One way to measure feature learning in the $l^{th}$ layer is by freezing the parameters in the other $L-1$ layers and tracking the change in network output when we update the parameters of the $l^{th}$ layer.}\\
Given a layer index $l$, suppose that we freeze all the weights in the other $L-1$ layers and allow the parameters in the $l^{th}$ layer to be updated with a gradient step. Consider the vector $f_t(X) \in \reals^{kn}$ which consists of the concatenation of the sequence $(f_t(x_i))_{1 \leq i \leq n}$ (here $X$ refers to the concatenation of $x_1, x_2, \dots, x_n$). Then, with one gradient step, the update $\delta f_t(X)$ is given by 
$$\delta f_t(X) = -\eta \hat{\mathbf{K}}_l (Z_t - Y),$$
where $\hat{\mathbf{K}}_l$ is the tangent kernel matrix for layer $l$, $Z_t: = (\textup{softmax}(f_t(x_i)))_{1 \leq i \leq n}, Y \in \reals^{on}$, and $\eta$ is the normalized learning rate (i.e. $\eta = \textup{LR}/n$). Hence, the kernel matrix $\hat{\mathbf{K}}_l$ controls the change in the vector $f_t(X)$. \\
To understand the interaction between $\hat{\mathbf{K}}_l$ and $Y$, let us see what happens at the first step of gradient descent. At initialization, the output function $f$ is random and has an average accuracy of an random classifier, i.e. a random guess with uniform probability $1/k$ for each class. In this case, the average update is given by
\begin{equation}\label{eq:dynamics_classif}
    \delta f_t(X) \approx -\eta \hat{\mathbf{K}}_l \left(\frac{1}{k} \mathbf{1} - Y\right) \approx \eta \hat{\mathbf{K}}_l \tilde{Y} ,
\end{equation}
where $\tilde{Y} = \left(I - \frac{1}{kn} \mathbf{1} \mathbf{1}^T\right) Y$. $\tilde{Y}$ is a centered version of $Y$. Using \cref{eq:dynamics_classif}, we obtain
$$
\|\delta f_t(X) \| \leq \eta \textup{Tr}(\hat{\mathbf{K}}_l) \|\tilde{Y}\|,
$$
with equality if and only if $\hat{\mathbf{K}}_l$ and $\tilde{Y}$ are aligned, i.e. $\hat{\mathbf{K}}_l \propto \tilde{Y} \tilde{Y}^T$. Hence, the maximum update of the network output is induced by a perfect alignment between $\hat{\mathbf{K}}_l$ and $\tilde{Y} \tilde{Y}^T$.\\
\emph{\underline{Conclusion}:} \emph{Assume that only parameters in the $l^{th}$ layer are updated with gradient descent. Then, the alignment between the tangent kernel matrix $\hat{\mathbf{K}}_l$ and the centered data labels matrix $\tilde{Y} \tilde{Y}^T$ controls the magnitude of change in $f_t(X)$ at early training.}\\
Although this analysis is performed at early training, we hypothesize that this alignment quantifies feature learning in each layer during training\footnote{ \cref{fig:align_example} shows that the increase in alignments occurs mostly during the first few epochs.}. Hence, we propose to use this alignment to quantify the role of each layer. This alignment can also be interpreted as a measure of how informative each layer's gradient update is. In fact, if we update all the layers (no parameter freezing), the change in function update projected on data labels vector is approximately
\begin{equation}\label{eq:dynamics_align}
    \langle\tilde{Y}, \delta f_t(X)\rangle \approx \eta \tilde{Y}^T \hat{\mathbf{K}} \tilde{Y} = \sum_{l} \eta \textup{Tr}(\hat{\mathbf{K}}_l \tilde{Y} \tilde{Y}^T),
\end{equation}
Hence, $\textup{Tr}(\hat{\mathbf{K}}_l \tilde{Y} \tilde{Y}^T)$ quantifies how gradient updates in each layer contribute to output function moving in the direction of the training target.

\subsection{Centered Kernel Alignment (CKA)} From the analysis in \cref{sec:quantifying_role}, we define the \emph{centered kernel alignment} between two kernel matrices $\mathbf{K}, \mathbf{K}^{\prime} \in \mathbb{R}^{on \times on}$ by
\begin{equation}
    \begin{aligned}
     A\left(\mathbf{K}, \mathbf{K}^{\prime}\right)=\frac{\operatorname{Tr}\left[\mathbf{K}_{c} \mathbf{K}_{c}^{\prime}\right]}{\left\|\mathbf{K}_{c}\right\|_{F}\left\|\mathbf{K}_{c}^{\prime}\right\|_{F}}
    \end{aligned}
\end{equation}
where $\mathbf{K}_{c} = \mathbf{C}\mathbf{K}\mathbf{C}$, $\mathbf{C} = \mathbf{I} - \frac{1}{on} \mathbf{1}\mathbf{1}^T$ is the centering matrix ($\mathbf{1}$ is a vector with all entries being $1$), and $\|.\|_F$ is the Frobenius norm. The CKA was used by \cite{baratin2021implicit} as a measure of feature learning.\\
\emph{Remark 1.} For all kernels $\mathbf{K}, \mathbf{K}^{\prime}$, we have $A\left(\mathbf{K}, \mathbf{K}^{\prime}\right) \in [0,1]$ with $A\left(\mathbf{K}, \mathbf{K}^{\prime}\right) = 1$ if and only if the kernel matrices are colinear.\\
To quantify the role of the $l^{th}$ layer ($1\leq l \leq L$), we use $\mathbf{K} = \hat{\mathbf{K}}_l = \mathbf{\Psi}^T_l \mathbf{\Psi}_l$ where $\mathbf{\Psi}_l \in \reals^{P_l \times on}$ are the tangent features of layer $l$ (the horizontal concatenation of the tangent features $(\Psi_l(x_i) = \nabla_{\theta_l}f_\theta(x_i))_{1\leq i \leq n}$), $P_l$ is the dimension of $\theta_l$, and $\mathbf{K}' = YY^T$ where $Y \in  \mathbb{R}^{on}$ is the horizontal concatenation of output vectors in the dataset $ \mathcal{D}$.
% we use  $\mathbf{K} = \mathbf{\Psi}^T \mathbf{\Psi}$, the tangent kernel matrix defined above, and $\mathbf{K}' = YY^T$ where $Y \in  \mathbb{R}^{on}$ is the horizontal concatenation of output vectors in the dataset $ \mathcal{D}$.
% % In this case, the CKA can be expressed in terms of dot products instead of the trace operator. 
% Let $\tilde{Y} = \mathbf{C}Y$ be the centered labels. Using the relation $\mathbf{C}^2 = \mathbf{C}$, the CKA is given by
% \begin{equation}
%     \begin{aligned}
%     A\left(\mathbf{K}, \mathbf{K}^{\prime}\right)
%     % =\frac{y^T \mathbf{C}\mathbf{\Psi}^T \mathbf{\Psi} \mathbf{C} y}{\left\|\mathbf{C} \mathbf{\Psi}^T \mathbf{\Psi} \mathbf{C}\right\|_{F}\left\|y\right\|^2} 
%     = \frac{\tilde{Y}^T \mathbf{\Psi}^T \mathbf{\Psi} \tilde{Y}}{\left\|\mathbf{C} \mathbf{\Psi}^T \mathbf{\Psi} \mathbf{C}\right\|_{F}\left\|\tilde{Y}\right\|^2}
%     \end{aligned}
% \end{equation}
% which simplifies to
% \begin{equation}
%     \begin{aligned}
%     A\left(\mathbf{K}, \mathbf{K}^{\prime}\right)=\frac{y^T \mathbf{\Psi}^T \mathbf{\Psi} y}{\left\|\mathbf{C} \mathbf{\Psi}^T \mathbf{\Psi} \mathbf{C}\right\|_{F}\left\|y\right\|^2}
%     \end{aligned}
% \end{equation}
\begin{equation}
    \begin{aligned}
     A_l:= A\left(\mathbf{K}_l, \mathbf{K}_l^{\prime}\right)=\frac{\tilde{Y}^T \mathbf{\Psi}_l^T \mathbf{\Psi}_l \tilde{Y}}{\left\|\mathbf{C} \mathbf{\Psi}_l^T \mathbf{\Psi}_l \mathbf{C}\right\|_{F}\left\|\tilde{Y}\right\|^2}
    \end{aligned}
\end{equation}
% This can also be written as 
% \begin{equation}
%     A_l =\frac{y^T \mathbf{\Psi}^T \mathbf{M}_l \mathbf{\Psi} y}{\left\|\mathbf{C} \mathbf{\Psi}^T \mathbf{M}_l \mathbf{\Psi} \mathbf{C}\right\|_{F}\left\|y\right\|^2}, \quad 1 \leq l \leq L,
% \end{equation}
% where $\mathbf{M}_l$ is a diagonal $0$-$1$ matrix, with the diagonal elements corresponding to the parameters in layer $l$ being equal to $1$ and $0$ otherwise. 
% In the following we will denote $\operatorname{CKA}$, $\operatorname{CKA}_l$ value at training step $t$ as $A(t)$, $A_l(t)$ respectively.\fTBD{$A_l$ you can just define the alignement using $A$ from the beginning instead of CKA}.
\begin{figure}[h]
\centering
\begin{subfigure}{.48\textwidth}
  \centering
  \includegraphics[width=\textwidth]{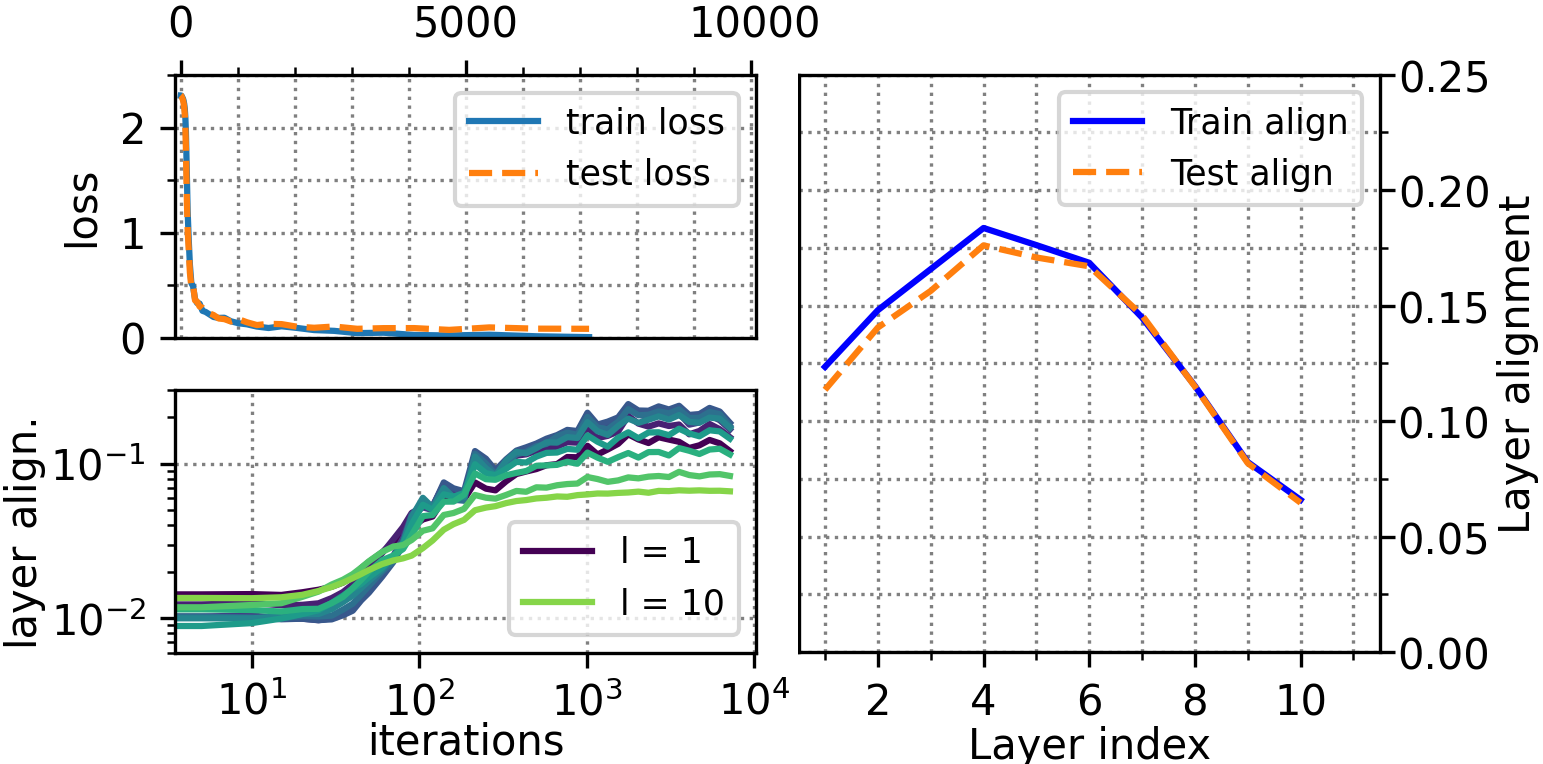}
  \caption{MNIST}
  \label{fig:sfig1_1}
\end{subfigure}
\begin{subfigure}{.48\textwidth}
  \centering
  \includegraphics[width=\textwidth]{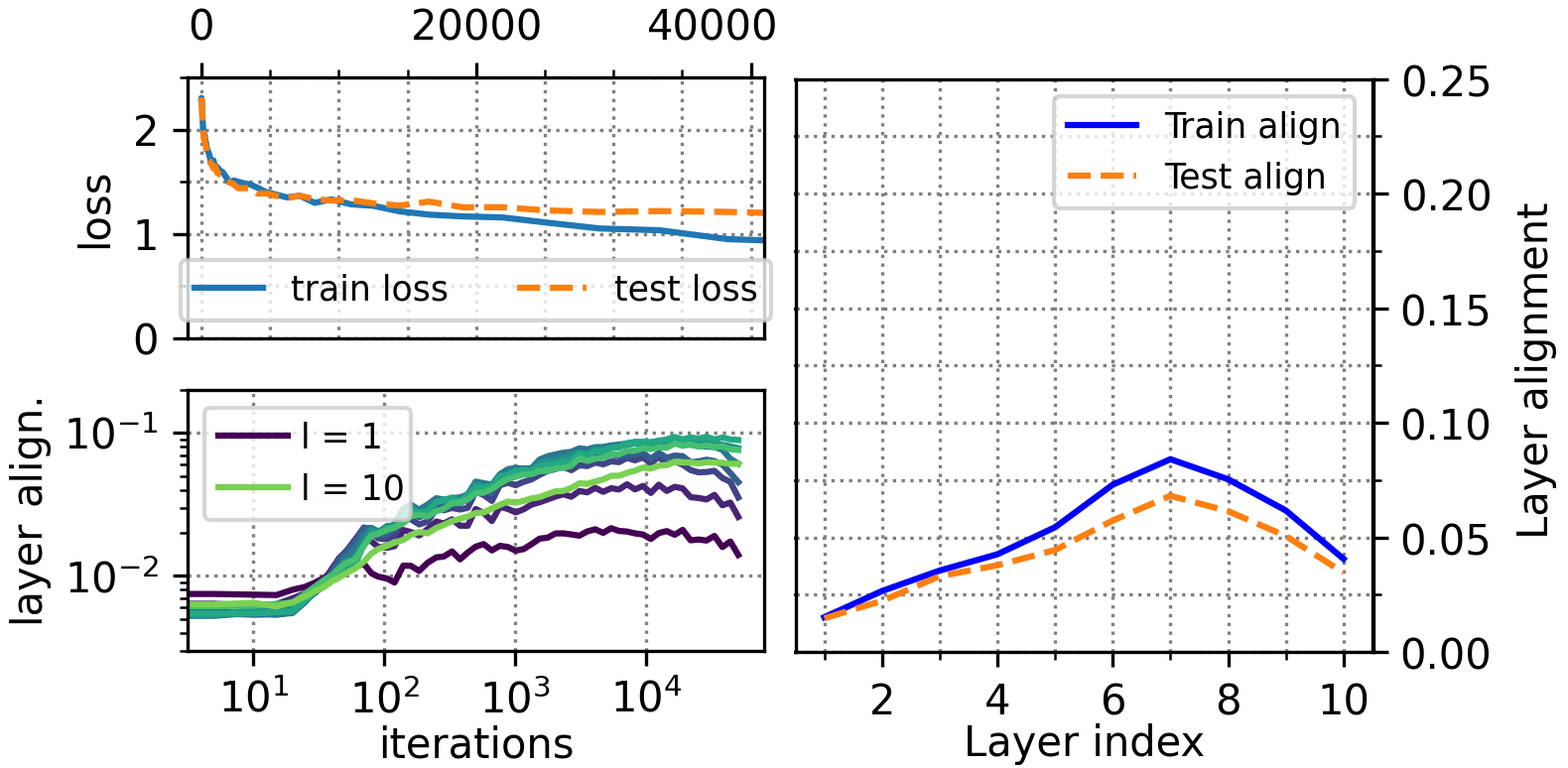}
  \caption{CIFAR10}
  \label{fig:sfig1_2}
\end{subfigure}
\caption{
%\small
\small{Layerwise alignment hierarchy for the MNIST and CIFAR10 datasets when trained on an FFNN with depth 10 and width 256. Left hand panels show progression of loss and layer alignment with iterations of SGD. Right hand panel shows layer alignment at the end of training.}}
\vspace{-0.5cm}
\label{fig:align_example}
\end{figure}

% \soufiane{Chris can you make this more clear? the conclusion is a bit unclear}
% \chris{I removed it as it distracts from your parameter freezing interpretation and spatial constraints}
% $\overrightarrow{K}_l$ has a straightforward interpretation -- it measures the angular separation of different inputs in the $l$'th layer's intermediate representation. Stronger `feature learning' would be associated with larger $A(\overrightarrow{K}_l,\tilde{Y}^T \tilde{Y})$. However,

% $K_l$ provides a fine-grained perspective of gradient descent: it measures how similar the direction (in parameter-space) of the greatest changes in $f(x_i)$ and $f(x_j)$. This measures, in a sense, how aligned $x_i$ and $x_j$ are -- if $K_l(x_i,x_j)$ is large, it means moving $\Delta\theta$ in parameter-space will mean $\Delta f(x_i;\theta)\sim \Delta f(x_j;\theta)$ -- when it is near zero it means $\Delta\theta$ exists where $\Delta f(x_i;\theta)$ can be much larger or smaller than $\Delta f(x_j;\theta)$.
% With this interpretation, `feature leaning' in the $l$'th layer means $l$'th layer parameters can be perturbed in directions which only affect one class.

\subsection{Alignment Hierarchy}(AH)
The alignment $A_l$ acts as a measure of how much layers contribute to the performance of the network (\cref{sec:quantifying_role}). \citet{baratin2021implicit} observed an interesting hierarchical structure in the alignments $A_l$ for different neural network architectures. During the course of training, the increase in $A_l$ for some middle layers is sharp and significantly larger than the alignments of other layers. We illustrate this pattern in
\cref{fig:align_example} on MNIST and CIFAR10 datasets with fully-connected networks. It appears that alignments of some layers increase much more effectively with gradient updates over others \footnote{This also suggests the existence of an implicit layer selection phenomenon during training}. 
We call this pattern the \emph{Alignment Hierarchy}, and aim to understand the reason why the alignment peaks at some hidden layer. For both datasets in \cref{fig:align_example}, the pattern is similar and shows large alignments for some middle layers. Further empirical results on K-MNIST and FashionMNIST datasets and VGG19/ResNet18 architectures are provided in \cref{app:further_experiments}. Motivated by this empirical findings, we formulate the Equilibrium Hypothesis in the next section, where we give an explanation of the Alignment Hierarchy using tools from the theory of signal propagation at initialization.
% \begin{itemize}
%     \item neural network model
%     \item define tangent features
%     \item show some examples of Tangent Alignment Hierarchy
% \end{itemize}
\section{The Equilibrium Hypothesis}\label{sec:EH}
In this section, we aim to understand a specific aspect of the AH: \emph{why does the alignment peak in some intermediate layer?} We argue that this is a result of the dynamics of signal propagation in DNNs at initialization. For the sake of simplicity, we restrict our theoretical analysis to fully-connected DNNs, although our results can in principle be extended to other architectures. 
\vspace{-0.3cm}
\paragraph{Fully-connected Feedforward Neural Network (FFNN).} 
Given an input $x \in \reals^d$, and a set of weights $(W_l)_{1 \leq L}$, we consider the following neural network model
\begin{equation}
\begin{aligned}
z_1(x) &= W_1 x\\
z_l(x) &= W_l \phi(z_{l-1}(x)), \quad 2 \leq l \leq L,
\end{aligned}
\label{eq:ffnn_1}
\end{equation}
where $W_{l} \in \mathbb{R}^{N \times N}$ with $o = N_{L} = 1$\footnote{For simplicity, we restrict our analysis to rectangular networks with 1D output networks.}, $W_1 \in \mathbb{R}^{N \times d}$, $N$ is the network width, and $\phi$ is the ReLU activation function given by $\phi(v) = (\max(v_i,0))_{1\leq i \leq p}$ for $v \in \reals^p$.
% The dimension of the parameter space is $P=\sum_{l=0}^{L-1}N_l N_{l+1}$ where we denote $N_0:=d$. 
For each layer, the weights are initialized with i.i.d Gaussian variables $W \sim \mathcal{N}(0,\frac{2}{\textup{fan\_in}})$, where `fan\_in' refers to the dimension of the previous layer. This standard initialization scheme is known as the He initialization \citep{he_relu} or the Edge of Chaos initialization \citep{poole,samuel2017, hayou2019impact}.
\subsection{Tangent Kernel decomposition} 
The tangent kernel at hidden layer $l$ can be expressed as
\begin{align*}
    K_l(x,x') &= \nabla_{\theta_l}f(x) \cdot \nabla_{\theta_l}f(x')\\
    &= \sum_{i,j} \phi(z_{l-1}^j(x))\phi(z_{l-1}^j(x')) \, \fracpartial{f}{z^{i}_l}(x)  \fracpartial{f}{z^{i}_l}(x').
\end{align*}
Since $K_l$ is a sum over $N^2$ terms, we consider the average kernel $\bar{K}_l$ given by 
$
\bar{K}_l = \frac{1}{N^2} K_l.
$
In matrix form\footnote{Bold characters in \cref{eq:kernel_forward_backward_decomposition} refer to kernel matrices and not kernels}, $\bar{K}_l$ can be written as the Hadamard product of two kernels 
\begin{equation}\label{eq:kernel_forward_backward_decomposition}
\bar{\mathbf{K}}_l = \overrightarrow{\mathbf{K}}_l \circ \overleftarrow{\mathbf{K}}_l,
\end{equation}
where $\overrightarrow{K}_l(x,x') = \frac{1}{N}\phi(z_{l-1}(x)) \cdot \phi(z_{l-1}(x'))$ is the \emph{forward} features kernel and $\overleftarrow{K}_l(x,x') = \frac{1}{N} \fracpartial{f_{l:L}}{z}(z_l(x)) \cdot \fracpartial{f_{l:L}}{z}(z_l(x'))$ is the \emph{backward} tangent features kernel, where $f_{l:L}$ is the function that maps the $l^{th}$ layer to the network output. The above decomposition illustrates the \emph{collaborative} roles played by kernels $\overrightarrow{K}$ and $\overleftarrow{K}$ in constructing the tangent features at layer $l$. To depict the role of each kernel, we use some tools from the theory of signal propagation at initialization.
\subsection{Signal propagation at initialization}
Consider an FFNN of type \eqref{eq:ffnn_1}. The weights $W$ are randomly initialized. Hence, the network neurons and output are random processes at initialization. Understanding the properties of such processes is crucial for both training and generalization \citep{samuel2017, hayou2019impact}.
It turns out that in the limit of infinite width $N \to \infty$, the neurons act as Gaussian processes. To see this, consider the simple case of a two layers FFNN. Since the weights are i.i.d, neurons $\{z_1^i(.)\}_{i\in[1:N]}$ are also iid Gaussian processes with covariance kernel given by
$\E_W[z_1^i(x) z_1^i(x')] = \frac{2 x \cdot x'}{d} $. Using the Central Limit Theorem, as $N \rightarrow \infty$, $z^i_{2}(x)$ is a Gaussian variable for any input $x$ and index $i \in [1:o]$. Moreover, the random variables $\{z^i_2(x)\}_{i \in [1:o]}$ are iid. Hence, the processes $z^i_{2}(.)$ can be seen as independent (across $i$), centered Gaussian processes with some covariance kernel $q_2$. This Gaussian process limit of FFNNs was first proposed by \citet{neal} in the single layer case and was extended to  multi-layer networks by \citet{matthews} where the authors showed that in $l^{th}$ layer, neurons become i.i.d Gaussian processes with covariance kernel $q_l$ in the limit $N \to \infty$. This result holds for all standard neural network architectures \citep{yang2019tensor_i}. A more complete review of this theory is provided in \cref{app:infinite_width_limit}. The covariance kernel $q_l(x,x')$ is a measure of the angular distortion between the vectors $z_l^i(x)$ and $z_l^i(x')$. Thus, the covariance kernel carries some information on how inputs propagate within the network. We formalize this notion of information in the next definition.
\begin{definition}[Geometric information]\label{def:geometric_information}
Given random weights $W$, we say that a kernel function $k$ is a geometric information if it can be expressed as $k(x,x') = \mathbb{E}_W[g(W,x)g(W,x')]$ for some function $g:\reals\times \reals^d \to \reals$.
\end{definition}
Hereafter, we simply use `information'\footnote{This is different from the information-theoretic definition of information.} to refer to the geometric information in \cref{def:geometric_information}. Recall the kernel decomposition given by \cref{eq:kernel_forward_backward_decomposition}
\begin{equation*}
\bar{\mathbf{K}}_l = \overrightarrow{\mathbf{K}}_l \circ \overleftarrow{\mathbf{K}}_l.
\end{equation*}
The kernels $\overrightarrow{K}_l$ and $\overleftarrow{K}_l$ depend on random weights $W$ and thus are random. We propose to study the average behaviour instead, where we consider the average kernels. For $\overrightarrow{K}_l$, the average kernel is given  $\E_W[\overrightarrow{K}_l(x,x')] = \E_W[\phi(z^1_{l-1}(x)) \phi(z^1_{l-1}(x'))]$ (since the neurons $z_l^j$ are identically distributed) which represents a geometric information as per \cref{def:geometric_information}. We call this average kernel the forward information. A standard result in signal propagation is that kernels $\overrightarrow{K}_l$ and $\overleftarrow{K}_l$ converge to their corresponding expected value in the limit of infinite width \citep{yang_tensor3_2020,samuel2017, hayou2019impact} which justifies our choice of the average kernel $\E_W[\overrightarrow{K}_l(x,x')]$. A similar result holds for $\overleftarrow{K}_l$. Let us formalize these definitions.
\begin{definition}
Given a layer index $l$, we define the forward information $I^f_{l,N}$ by
\begin{align*}
I^f_{l, N}(x,x') = \mathbb{E}_W\left[\phi(z_{l-1}^1(x))\phi(z_{l-1}^1(x'))\right],
\end{align*}
where the expectation is taken w.r.t $W$. Similarly, the backward information $I^b_{l,N}(x,x')$ is defined by 
\begin{align*}
I^b_{l, N}(x,x') = \mathbb{E}_W\left[\fracpartial{f_{l:L}}{z^1}(z_l(x))  \fracpartial{f_{l:L}}{z^1}(z_l(x'))\right].
\end{align*}
$I^f_{l, N}$ and $I^b_{l, N}$ are geometric information in accordance with \cref{def:geometric_information}.
\end{definition}

\subsection{Information loss in the large depth limit}
A classical result in the theory of signal propagation is that the information deteriorates with depth $L$ \citep{samuel2017, hayou2019impact} in the sense that the covariance kernels converge to trivial kernels (e.g. constant kernels) in the limit of infinite depth. This is a natural result of the randomness that adds to the neurons with each additional layer. This deterioration occurs with some rate (convergence rate to the trivial kernel w.r.t to $L$) which we call the information loss in the following definition. For two non-negative sequences $(a_n)_{n\geq 0}, (b_n)_{n\geq 0}$, we write $a_n = \Theta_n(b_n)$ if there exists two constant $M_1, M_2 > 0$ such that for all $n$, $M_1 \, b_n \leq a_n \leq M_2 \, b_n$.
\begin{definition}[Information Loss ($\IL$)]\label{def:il}
Let $(g_n(.))_{n \geq 0}$ be a sequence of real-valued functions defined on some set $\mathcal{C} \subset \mathbb{R}^m, m\geq 1$. Assume that $g_n$ converges uniformly to some constant $\kappa$ as $n \to \infty$ and that there exists a non-negative sequence $(r_n)_{n\geq 0}$ such that $\sup_{t \in \mathcal{C}}|g_n(t) - \kappa| = \Theta_n(r_n)$. We say that $g_n$ has an information loss of order $r_n$.
\end{definition}
The $\IL$ characterizes the rate at which the sequence $(g_n(t))_{n\geq 1}$ `forgets' the input $t$ since, by definition, the limiting value $\kappa$ is independent of $t$\footnote{Note that $\IL$ is unique up to a $\Theta$ factor, e.g. an $\IL$ of $n^{-1}$ is the same as an $\IL$ of $n^{-1} \times (2 + n^{-1})$}. 
In our case, we would expect similar behaviour of the forward information $I^f_{l, N}$ in the limit $l \to \infty$\footnote{Note that $I^f_{l, N}$ does not depend on the network depth $L$.}, and the backward information $I^b_{l, N}$ in some $(l,L)-$dependent limit(see \cref{sec:EH_FFNN}). For instance, assume that $L$ is large and consider a small $l$. Then, the forward information has minimal information loss (forward information loss occurs in the limit $l \rightarrow \infty$) while the backward information suffers from deterioration as it depends on the $L-l+1$ last layers, and thus it suffers from the accumulated randomness as it travels back through the network. The opposite happens when $l$ is large, e.g. $l \sim L$. This antagonistic roles of the forward/backward information is key in understanding the behaviour of the tangent kernel $K_l$: there exists a layer index $l_0$ for which the information loss for forward and backward information is comparable. By \cref{eq:intro_decomposition}, we expect $K_l$ to suffer from the deterioration that affects either the forward/backward information. Hence, we hypothesize that a layer with comparable forward/backward information loss is better conditioned to align with data.
\paragraph{The Equilibrium Hypothesis (EH).}\emph{ Let $\IL^f_{l,N}$, resp. $\IL^b_{l,N}$, be the information loss of the sequence $(I^f_{l,N})_{l\geq 1}$, resp. $(I^b_{l,N})_{l\geq 1}$. The layers with the highest alignments with data labels are the ones that satisfy the equilibrium property}$$\IL^f_{l,N} = \Theta\left(\IL^b_{l,N}\right).$$
The EH conjectures that balanced information loss between forward and backward information is related to high alignment with data. Our intuition is as follows: balance between forward and backward information at layer $l$ relates to informative updates of layer $l$'s parameter $\theta_l$, which corresponds to informative updates in the tangent feature $\mathbf{\Psi}_l$ leading to greater alignment with data. To see this for an FFNN, consider function update: 
\begin{align*}
 \delta f_t(X) &= -\eta \hat{\mathbf{K}} \nabla_{f}\mathbb{L}(f(X), Y) \\
 &= -\eta \sum_{l} \hat{\mathbf{K}_l} \nabla_{f}\mathbb{L}(f(X), Y),  
\end{align*}
where the contribution of updating $\theta_l$ to the overall change in $f_t(X)$ is  $- \eta \hat{\mathbf{K}_l} \nabla_{f}\mathbb{L}(f(X), Y)$, where $\eta =\textup{LR}/n$ is the normalized learning rate. Since the kernel matrices satisfy $\bar{\mathbf{K}}_l = \overrightarrow{\mathbf{K}}_l \circ \overleftarrow{\mathbf{K}}_l$, we expect that any deterioration of the kernels $\overrightarrow{K}_l$ and $\overleftarrow{K}_l$ would affect $K_l$. Our intuition is that the parameter update in layer $l$ benefits from the equilibrium property which guarantees that none of the kernels is more deteriorated than the other. This ensures that the function space update benefits from both forward and backward geometric information.  Note that at initialization, with the same analysis of \cref{sec:quantifying_role}, the function update due to $\theta_l$ can be approximated by $\delta f_t(X) \approx \eta \hat{\mathbf{K}_l} \tilde{Y}$. This suggest that the high alignment between tangent features $\mathbf{\Psi}_{l}(X)$ and data labels $Y$ is associated with informative parameter update. A more in-depth discussion of this result is provided in \cref{app:EH_early_training}. In addition, informative parameter update is associated with informative tangent feature update (\cref{app:EH_early_training}).

\begin{figure*}
\centering
\begin{subfigure}{0.32\linewidth}
  \includegraphics[width=\linewidth]{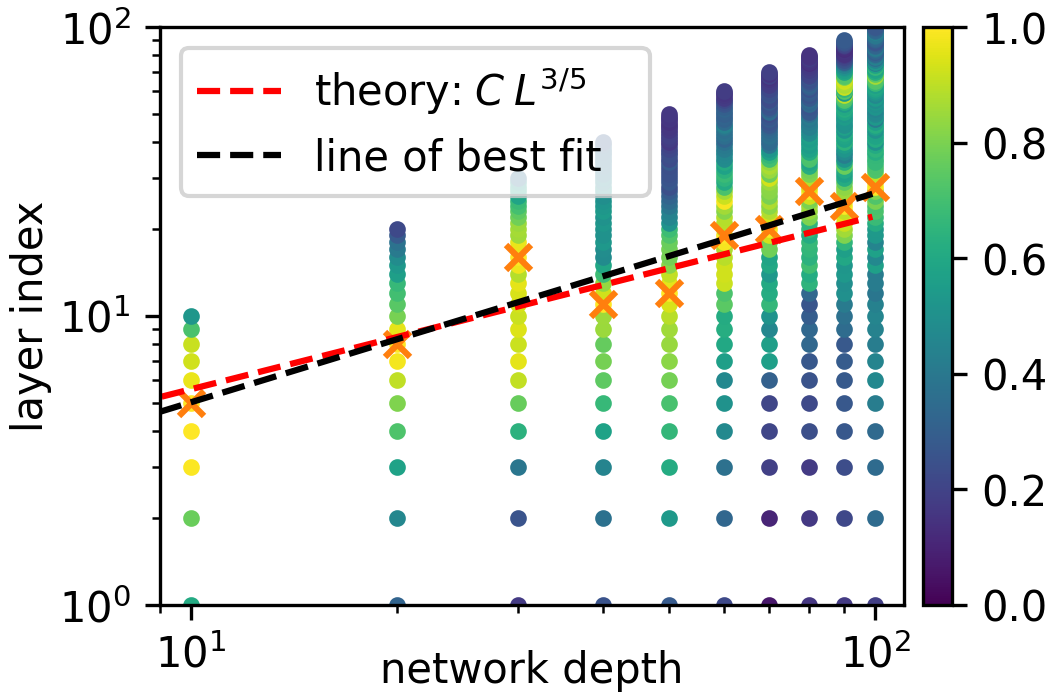}
  \caption{MNIST}\label{fig:MNIST_35}
\end{subfigure}
\begin{subfigure}{0.32\linewidth}
  \includegraphics[width=\linewidth]{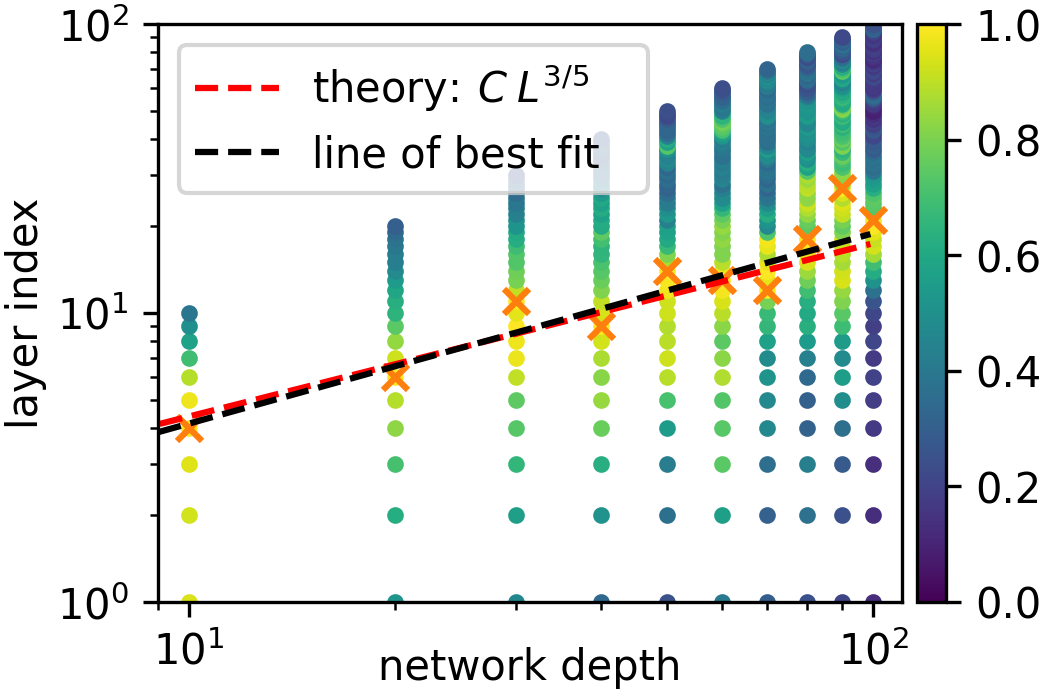}
  \caption{FashionMNIST}\label{fig:FMNIST_35}
\end{subfigure}
\begin{subfigure}{0.32\linewidth}
  \includegraphics[width=\linewidth]{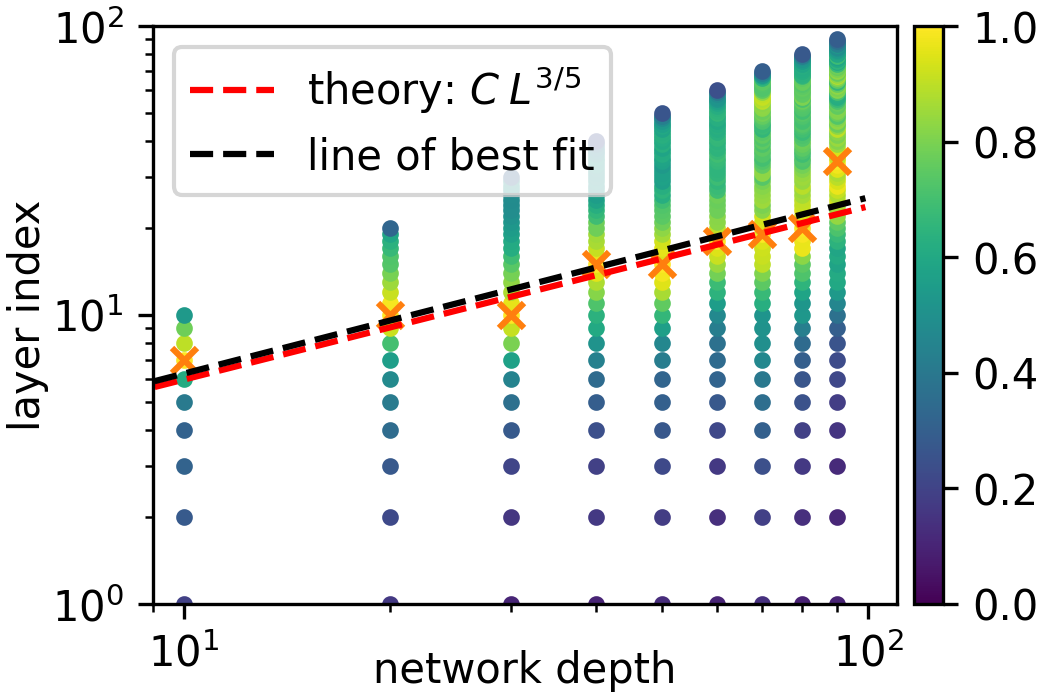}
  \caption{CIFAR10}\label{fig:CIFAR_35}
  \end{subfigure}
\caption{\small{Data with $x=10j$ in the plot corresponds to layer alignments for a FFNN with depth $10j$ trained on the MNIST/FashionMNIST/CIFAR10 datasets. The brighter the color, the closer the corresponding layer's alignment is to the maximum alignment across all layers. \textcolor{orange}{x} indicates the layer where largest alignment occurs. See \cref{fig:C_change_AH} for further experiments on Fashion MNIST showing larger learning rates decrease the $y$-intercept (alignment peaks in earlier layers).}}\label{fig:EH}
\vspace{-0.4cm}
\end{figure*}

\subsection{The Equilibrium in infinite width FFNNs}\label{sec:EH_FFNN}
For FFNNs, we provide a comprehensive analysis of the Equilibrium property in the infinite width limit. We characterize the layers where the equilibrium is achieved and we confirm our theoretical findings with empirical results. For the sake of simplicity, we restrict our theoretical analysis to the sphere $\Sphere=\{ x \in \reals^d, \|x\|=\sqrt{d}\}$ where $\|.\|$ is the euclidean norm. The generalization to $\reals^d$ is straightforward. To avoid issues with col-linearity in the dataset, we consider the set $E_\epsilon$, parameterized by $\epsilon \in (0,1)$, defined by 
\begin{equation}\label{eq:set_E}
E_\epsilon = \{(x,x') \in (\Sphere)^2: \frac{1}{d} x\cdot x' < 1 - \epsilon\}
\end{equation}
% To avoid dealing with different infinite width limits, we focus on the case of rectangular networks, i.e. layer widths are the same.
% \begin{assumption}\label{assump2}
% The widths are given by $N_l = N$ for $ 1 \leq l \leq L-1$ where $N$ is some integer.
% \end{assumption}
The next result characterizes the information loss $\IL$ of the forward/backward information defined above in the limit of infinite width ($N \to \infty$). In this limit, the forward information has an information loss of $l^{-2}$.
\begin{theorem}[Forward $\IL$]\label{thm:forward_loss}
Let $\epsilon \in (0,1)$, and consider the set $E_\epsilon$ as in \cref{eq:set_E}. Define $I^f_{l,\infty}(x,x') := \lim\limits_{N \to \infty} I^f_{l,N}(x,x')$ for all $x,x' \in \reals^d$. We have that 
$$
\sup_{(x,x') \in E_\epsilon}|I^f_{l,\infty}(x,x') - 1/2| = \Theta_l(l^{-2}).
$$
\end{theorem}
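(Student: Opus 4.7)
The plan is to reduce the claim to the analysis of a one-dimensional recursion on the correlation between pre-activations at the two inputs, then exploit a slow-convergence Taylor expansion to extract the $l^{-2}$ rate. First I invoke the Gaussian process limit of \citet{matthews}: as $N \to \infty$, the pair $(z_{l-1}^1(x), z_{l-1}^1(x'))$ is jointly centered Gaussian with variances $v_{l-1}(x) = v_{l-1}(x')$ and covariance $q_{l-1}(x,x')$. Under He initialization restricted to $\Sphere$, the variance map has a fixed point, so $v_l \equiv v$ is constant in $l$; setting $c_l := q_l(x,x')/v$, the ReLU arc-cosine identity gives
\begin{equation*}
I^f_{l,\infty}(x,x') = \frac{v}{2\pi}\Bigl[\sqrt{1 - c_{l-1}^2} + c_{l-1}\bigl(\pi - \arccos c_{l-1}\bigr)\Bigr],
\end{equation*}
whose value at $c=1$ matches the normalization-dependent limit $\tfrac12$. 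The same identity (with an extra factor of $2$ from variance propagation) produces the correlation recursion $c_{l+1} = F(c_l)$ with $F(c) := \tfrac{1}{\pi}\bigl[\sqrt{1-c^2} + c(\pi - \arccos c)\bigr]$.

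Next I study this one-dimensional dynamical system. The map $F$ is strictly increasing on $[-1,1]$, with $F(-1)=0$, $F(0) = 1/\pi$, and $1$ as its unique fixed point in $(-1,1]$; moreover $F(c) > c$ on $(-1,1)$. Hence for any $c_0 \in [-1, 1-\epsilon]$ the iterates $c_l$ enter $(0,1)$ after at most one step and then increase monotonically to $1$. To quantify the speed, I parameterize $c_l = \cos\theta_l$ with $\theta_l \in [0,\pi]$ and Taylor-expand:
\begin{equation*}
F(\cos\theta) = \frac{1}{\pi}\bigl(\sin\theta + (\pi - \theta)\cos\theta\bigr) = 1 - \frac{\theta^2}{2} + \frac{\theta^3}{3\pi} + O(\theta^4).
\end{equation*}
Matching this to $\cos\theta_{l+1} = 1 - \theta_{l+1}^2/2 + O(\theta_{l+1}^4)$ and taking square roots yields the slow recursion
\begin{equation*}
\theta_{l+1} = \theta_l - \frac{\theta_l^2}{3\pi} + O(\theta_l^3).
\end{equation*}
Taking reciprocals gives $\theta_{l+1}^{-1} - \theta_l^{-1} = \tfrac{1}{3\pi} + O(\theta_l)$; summing (with $\sum_k \theta_k = O(\log l)$ by a bootstrap argument from $\theta_k = O(1/k)$) yields $\theta_l = 3\pi/l + O(\log l / l^2)$, hence $\theta_l = \Theta(1/l)$.

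Finally, Taylor-expanding $I^f_{l,\infty}(x,x')$ in $\theta_{l-1}$ around $0$ gives
\begin{equation*}
I^f_{l,\infty}(x,x') - \frac{1}{2} = -\frac{\theta_{l-1}^2}{4} + O(\theta_{l-1}^3),
\end{equation*}
which immediately yields $\bigl|I^f_{l,\infty}(x,x') - 1/2\bigr| = \Theta(\theta_{l-1}^2) = \Theta(l^{-2})$ pointwise. For the uniform bound on $E_\epsilon$, note that $c_0 = x \cdot x'/d \in [-1,1-\epsilon]$ is compact, so by continuity of $F$ and finitely many applications, all iterates eventually lie in a compact subinterval of $(0,1)$ with bounds depending only on $\epsilon$; from there the constants in the angle recursion become uniform in $(x,x') \in E_\epsilon$. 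The main obstacle is precisely this uniformity --- controlling the transient (particularly for $c_0$ near $-1$, where $\theta_0 \approx \pi$ is far from the regime of the Taylor expansion) and, more subtly, verifying that the $O(\theta_l^3)$ remainders in the angle recursion stay uniformly small enough not to destroy the sharp lower bound needed for the $\Theta$ rather than merely $O$ conclusion.
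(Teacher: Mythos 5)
Your proposal is correct and follows essentially the same route as the paper: reduce $I^f_{l,\infty}$ to the one-dimensional correlation recursion $c_l = g(c_{l-1})$ for the ReLU arc-cosine map and show $\sup_{E_\epsilon}|c_l - 1| = \Theta_l(l^{-2})$. The only difference is that the paper outsources this rate entirely to ``Appendix Lemma 1'' of \citet{hayou_ntk}, whereas you prove it from scratch via the angle parameterization $c_l = \cos\theta_l$ and the slow recursion $\theta_{l+1} = \theta_l - \theta_l^2/(3\pi) + O(\theta_l^3)$, which is precisely the content of the cited lemma; your handling of uniformity over $E_\epsilon$ (one step of $g$ maps $[-1,1-\epsilon]$ into a compact subinterval of $[0,1)$ bounded away from $1$ by an $\epsilon$-dependent constant) is the right way to close the point you flag as the main obstacle.
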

\cref{thm:forward_loss} is a corollary of a previous result that appeared in \citet{hayou_ntk}. The proof of the latter relies on an asymptotic analysis of the forward covariance kernel in the limit of large $l$, coupled with a uniform bounding of the convergence rate (See \cref{app:proofs} for more details).\\
The forward information $I^f_{l,N}$ does not depend on depth $L$. On the other hand, the backward information $I^b_{l,N}$ depends both on $l$ and the depth $L$. Therefore, in order to study the asymptotic information loss, we should specify how $l$ grows relatively to $L$. In the next result, we study the two cases where $l \ll L$ or $l =\lfloor \alpha L \rfloor$. 
\begin{theorem}[Backward $\IL$]\label{thm:backward_loss}
Let $\epsilon \in (0,1)$, and consider the set $E_\epsilon$ as in \cref{eq:set_E}. Define $I^b_{l,\infty}(x,x') := \lim\limits_{N \to \infty} I^b_{l,N}(x,x')$ for all $x,x' \in \reals^d$. Then, 
\begin{itemize}
    \item If $l = \lfloor \alpha L \rfloor$ where $\alpha \in (0,1)$ is a constant, then there exists a constant $\mu$ such that in the limit $L \to \infty$,
    $$
    \sup_{(x,x')\in E_\epsilon} |I^b_{l,\infty}(x,x') - \mu| = \Theta_L(\log(L) L^{-1})
    $$
    \item In the limit $l, L \to \infty$ with $l/L \to 0$, 
    $$
    \sup_{(x,x')\in E_\epsilon} |I^b_{l,\infty}(x,x')| = \Theta_{l,L}(\left(L/l\right)^{-3}).
    $$
\end{itemize}
\end{theorem}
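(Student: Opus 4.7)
The plan is to derive a product formula for $I^b_{l,\infty}(x,x')$ from the infinite-width backward recursion, and then evaluate it asymptotically using the sharp $l^{-2}$ convergence of $c_l\to 1$ already employed in the proof of Theorem~1.

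\textbf{Backward recurrence.} Backpropagation through the ReLU FFNN gives
$\fracpartial{f_{l:L}}{z^1_l}(z_l(x)) = \phi'(z_l^1(x))\sum_j W^{j1}_{l+1}\fracpartial{f_{l:L}}{z^j_{l+1}}(z_{l+1}(x))$.
Taking expectations with $W\sim\N(0,2/N)$ and invoking the standard gradient-independence / infinite-width result (e.g.\ \citep{yang_tensor3_2020}) to decouple $W_{l+1}$ from the backward signal, I obtain
$$I^b_{l,\infty}(x,x') = \frac{\pi - \arccos(c_l(x,x'))}{\pi}\, I^b_{l+1,\infty}(x,x'),$$
where $c_l(x,x')$ is the correlation of $(z^1_l(x),z^1_l(x'))$ in the infinite-width Gaussian-process limit. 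The key identity is the centered-Gaussian orthant probability $\E[\phi'(z^1_l(x))\phi'(z^1_l(x'))]=\mathbb{P}(z^1_l(x)>0,z^1_l(x')>0)=(\pi-\arccos c_l)/(2\pi)$, whose factor $1/2$ cancels the He-variance $2$. With boundary $I^b_{L,\infty}\equiv 1$ (since $\partial f/\partial z^1_L=1$), telescoping yields
$$I^b_{l,\infty}(x,x')=\prod_{k=l}^{L-1}\frac{\pi - \arccos(c_k(x,x'))}{\pi}.$$

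\textbf{Asymptotics and case analysis.} Setting $\theta_k:=\arccos(c_k)$, the ReLU forward recurrence reduces to $\theta_{k+1}=\theta_k-\theta_k^2/(3\pi)+O(\theta_k^3)$, whose leading solution is $\theta_k\sim 3\pi/k$, equivalently $1-c_k\sim 9\pi^2/(2k^2)$; this is exactly what Theorem~1 already controls. Because $(x,x')\in E_\epsilon$ forces $c_0\leq 1-\epsilon$, the transient before entering this regime is uniformly bounded on $E_\epsilon$, so $\arccos(c_k)/\pi = 3/k + O(1/k^2)$ uniformly. Substituting and taking logs,
$$\log I^b_{l,\infty}(x,x')=\sum_{k=l}^{L-1}\log\!\bigl(1-3/k+O(1/k^2)\bigr)=-3\sum_{k=l}^{L-1}\frac{1}{k}+O(1/l).$$
In Case~1 ($l=\lfloor\alpha L\rfloor$), the harmonic sum equals $\log(1/\alpha)+O(1/L)$, identifying $\mu=\alpha^3$; the announced $\Theta(\log L/L)$ rate comes from tracking the accumulated $\sum_{k=l}^{L-1}O(1/k^2)$ corrections uniformly on $E_\epsilon$. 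In Case~2 ($l/L\to 0$), the same expansion gives $I^b_{l,\infty}(x,x')=\Theta((l/L)^3)=\Theta((L/l)^{-3})$; a clean witness of the rate is the exact evaluation $\prod_{k=l}^{L-1}(1-3/k)=(l-1)(l-2)(l-3)/[(L-1)(L-2)(L-3)]\sim (l/L)^3$, which sandwiches the true product up to bounded multiplicative constants via $1\pm O(1/k^2)$ perturbations.

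\textbf{Main obstacle.} The technical crux is establishing the sharp rates \emph{uniformly} on $E_\epsilon$. The gradient-independence step is standard but should be invoked carefully, ideally via a tensor-programs argument, to legitimize the decoupling; once granted, the telescoping is purely deterministic. The more delicate point is the second-order expansion of $u_l=1-c_l$: extracting the precise $\log L/L$ rate in Case~1 (rather than merely $1/L$) requires tracking the subleading correction beyond $u_l\sim 9\pi^2/(2l^2)$ and controlling, uniformly over $c_0\in[-1,1-\epsilon]$, how this correction propagates through the logarithmic sum.
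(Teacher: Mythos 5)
Your proposal follows essentially the same route as the paper: the backward information is a telescoping product of the factors $g'(c_k)=(\pi-\arccos c_k)/\pi$ (your orthant-probability derivation is exactly the paper's $g'$ from the ReLU correlation function), the forward recursion gives $\arccos(c_k)=\Theta(k^{-1})$ uniformly on $E_\epsilon$, and the harmonic sum yields $\mu=\alpha^3$ (times the boundary value) in Case 1 and $(L/l)^{-3}$ in Case 2. The one substantive gap is the one you flag yourself: with only $g'(c_k)=1-3/k+O(k^{-2})$, Case 1 gives a deviation of $O(L^{-1})$, which does not establish the matching \emph{lower} bound in the claimed $\Theta_L(\log(L)L^{-1})$; the paper closes this by importing from \citet{hayou_ntk} the sharper expansion $g'(c_l)=1-3/l+\kappa\log(l)/l^{-2}+O(l^{-2})$ with $\kappa\neq 0$ and propagating the $\kappa\log(l)/l^2$ term through the product via its Lemma~1 (Uniform Asymptotic Expansion), so that $\zeta_l l^3=\exp(\kappa\log(l)/l+O(l^{-1}))$ and the ratio $\zeta_L/\zeta_{\lfloor\alpha L\rfloor}$ deviates from $\alpha^3$ by exactly $\Theta(\log(L)L^{-1})$. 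Supplying that second-order term (and verifying its uniformity over $c_0\in[-1,1-\epsilon]$) is what remains to turn your outline into the paper's proof.
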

A key ingredient in the proof of \cref{thm:backward_loss} in the so-called Gradient Independence assumption. In the literature on signal propagation at initialization (e.g. \citep{samuel2017,poole, hayou2019impact, yang2017meanfield, yang2019scaling, xiao_cnnmeanfield}), results on gradient backpropagation rely on the assumption that the weights used for backpropagation are independent from the ones used for forward propagation. \citet{yang_tensor3_2020} showed that this assumption yields exact computations of gradient covariance and NTK in the infinite width limit. 
% Intuitively, since the contribution of each weight to the next layer is of order $1/\sqrt{N}$, then as $N$ grows, we would naturally expect the dependence to weaken. 
We refer the reader to \cref{app:backprop} for more details.\\
In the case of infinite width FFNN, using Theorems \ref{thm:forward_loss} and \ref{thm:backward_loss}, we show which layers satisfy the equilibrium property.
% \begin{figure}
% \centering

% % \begin{subfigure}{\linewidth}
% %   \includegraphics[width=\linewidth]{fig/FMNIST_FFNN.png}
% %   \caption{FFNN on MNIST}
% % \end{subfigure}

% \begin{subfigure}{\linewidth}
%   \includegraphics[width=0.9\linewidth]{fig/CIFAR10_FFNN.png}
%   \caption{FFNN on CIFAR10}
%   \end{subfigure}

% \caption{A bar on $10j$ in the plot corresponds to layer alignments for a FFNN with depth $10j$ trained on the CIFAR10 dataset. The brighter the color, the closer the corresponding layer's alignment is to the maximum alignment across all layers. \textcolor{orange}{x} indicates the layer where largest alignment occurs. See \cref{app:further_experiments} for the same experiment on other datasets.}\label{fig:EH}
% \end{figure}
\begin{corollary}[Equilibrium]\label{cor:equilibrium_ffnn}
Under the conditions of Theorems \ref{thm:forward_loss} and \ref{thm:backward_loss}, the equilibrium for an FFNN is achieved for layers with index
$$
l = \Theta_L(L^{3/5})
$$
where $L$ is the network depth.
\end{corollary}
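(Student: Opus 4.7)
The plan is to directly combine Theorem~\ref{thm:forward_loss} and Theorem~\ref{thm:backward_loss} and solve the resulting algebraic equation for $l$ in terms of $L$. The equilibrium condition $\IL^f_{l,\infty} = \Theta(\IL^b_{l,\infty})$ requires picking the correct regime in Theorem~\ref{thm:backward_loss}, so the first step is to argue that the equilibrium cannot lie in the linear regime $l = \lfloor \alpha L \rfloor$ with $\alpha \in (0,1)$ constant. In that regime, Theorem~\ref{thm:backward_loss} gives $\IL^b_{l,\infty} = \Theta(\log(L)/L)$, while Theorem~\ref{thm:forward_loss} yields $\IL^f_{l,\infty} = \Theta(l^{-2}) = \Theta(L^{-2})$. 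These rates are incompatible (the forward loss decays faster than the backward loss by a factor of order $L/\log(L)$), so the equilibrium cannot be in the linear regime.

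Consequently, the equilibrium must satisfy $l/L \to 0$, placing us in the second regime of Theorem~\ref{thm:backward_loss}, where $\IL^b_{l,\infty} = \Theta((L/l)^{-3}) = \Theta(l^3 / L^3)$. Matching this with $\IL^f_{l,\infty} = \Theta(l^{-2})$ amounts to requiring
\begin{equation*}
l^{-2} = \Theta(l^3 / L^3),
\end{equation*}
which is equivalent to $l^5 = \Theta(L^3)$, and hence $l = \Theta_L(L^{3/5})$.

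The final consistency check is that $l = \Theta(L^{3/5})$ indeed satisfies $l/L = \Theta(L^{-2/5}) \to 0$ as $L \to \infty$, so the assumption that placed us in the sub-linear regime of Theorem~\ref{thm:backward_loss} is vindicated. No other step is delicate: once the correct regime of Theorem~\ref{thm:backward_loss} is selected, the derivation of the $3/5$ exponent is a one-line algebraic manipulation. The only conceptual care needed is precisely this consistency argument, ruling out the $l \asymp L$ regime before balancing the two information-loss rates. Given how short the argument is, the main risk is forgetting to verify that the solution lies in the regime whose rate we used; otherwise, the corollary is an immediate consequence of the two preceding theorems.
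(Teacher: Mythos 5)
Your proposal is correct and follows essentially the same route as the paper's own proof: rule out the $l = \lfloor \alpha L\rfloor$ regime because $l^{-2} = \Theta(\log(L)L^{-1})$ is impossible, then balance $l^{-2} = \Theta\left((L/l)^{-3}\right)$ in the sub-linear regime to obtain $l = \Theta_L(L^{3/5})$. Your explicit consistency check that $L^{3/5}/L \to 0$ is a small addition the paper leaves implicit, but it does not change the argument.
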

\cref{cor:equilibrium_ffnn} indicates that layers that satisfy the equilibrium property verify $l = \Theta_L(L^{3/5})$. In logarithmic scale, this implies  $\log(l) \in \left[\frac{3}{5} \log(L) + C_1,  \frac{3}{5} \log(L) + C_2\right] $ where $C_1, C_2$ are constants that depends on $\epsilon$ from Theorems \ref{thm:forward_loss} and \ref{thm:backward_loss}. \cref{fig:EH} shows the layer alignments $A_l$'s after training an FFNN (width 256) on different datasets. We fit the line $\log(l)=3/5 \log(L)+C$ by finding the constant $C$ that minimizes the squared error. We also perform a simple linear regression to see if the slope is close to $3/5$ (line of best fit). All experiments show an excellent match with the theoretical line $3/5 \log{L}+C$ (which was derived for infinite width networks).
\vspace{-0.3cm}
\section{Alignment and Hyperparameters}\label{sec:generalization}
Generalization and feature learning have been linked to optimizer hyperparameter choices (e.g.\ \citep{keskar2016large}). While this paper largely focuses on the $L^{3/5}$ scaling law in the EH ($l = \Theta(L^{3/5})$) it would be incomplete without a brief discussion of the constant term in this equation (the y-intercept in \cref{fig:EH}).
In this section, we observe that the layer index with the greatest alignment can be significantly affected by choice of optimizer hyperparameters, and good generalization is associated with a well defined peak away from the last layer. See \cref{fig:C_change_AH} for clear demonstration of the change in $C$ with learning rate.
% 1. Batch size learning rate connection to feature learning/constant in front of L3/5
% 2. Future work connection to generalisation via FIM
% 3. Short summary to suggest future work

To understand (1) the effect of optimizer hyperparameters on the alignment and (2) the impact of the Alignment Hierarchy on the generalization error, we trained multiple models with different datasets with a selection of batch sizes, learning rates and optimizers. We show the corresponding AH pattern, generalization error and generalization gap for CIFAR10 on Resnet18 and VGG19 in \cref{fig:egs_main}. 
The colour corresponds to the final test loss, and the number to the loss gap. 
\cref{fig:egs_main} weakly suggests that large alignments with data labels, especially for the middle layers, correlate with good generalization properties.
Note that the peak for the FFNN occurs in the early layers; VGG the middle layers, and for Resnet18 very near the last layer. If the EH is also valid for resnets, the peak should move towards the middle layers as depth increases. This is left for future work.
For further experiments, including on random labels, see \cref{app:further_experiments}.

\begin{figure*}[h]
\centering
\begin{subfigure}{.33\textwidth}
  \centering
  \includegraphics[width=\textwidth]{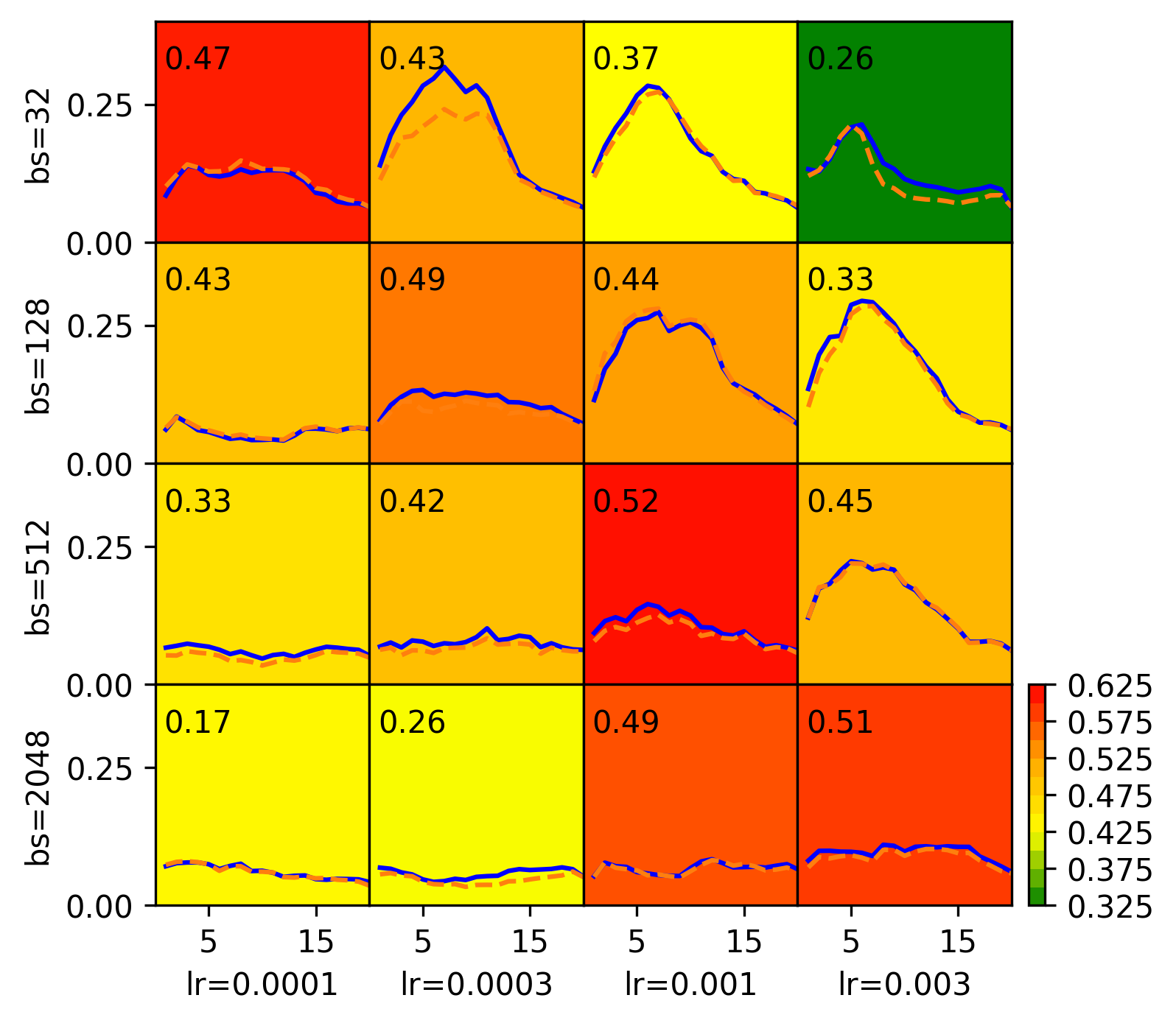}
  \caption{FFNN on Fashion MNIST}
  \label{fig:ffnn_eg}
\end{subfigure}
\begin{subfigure}{.33\textwidth}
  \centering
  \includegraphics[width=\textwidth]{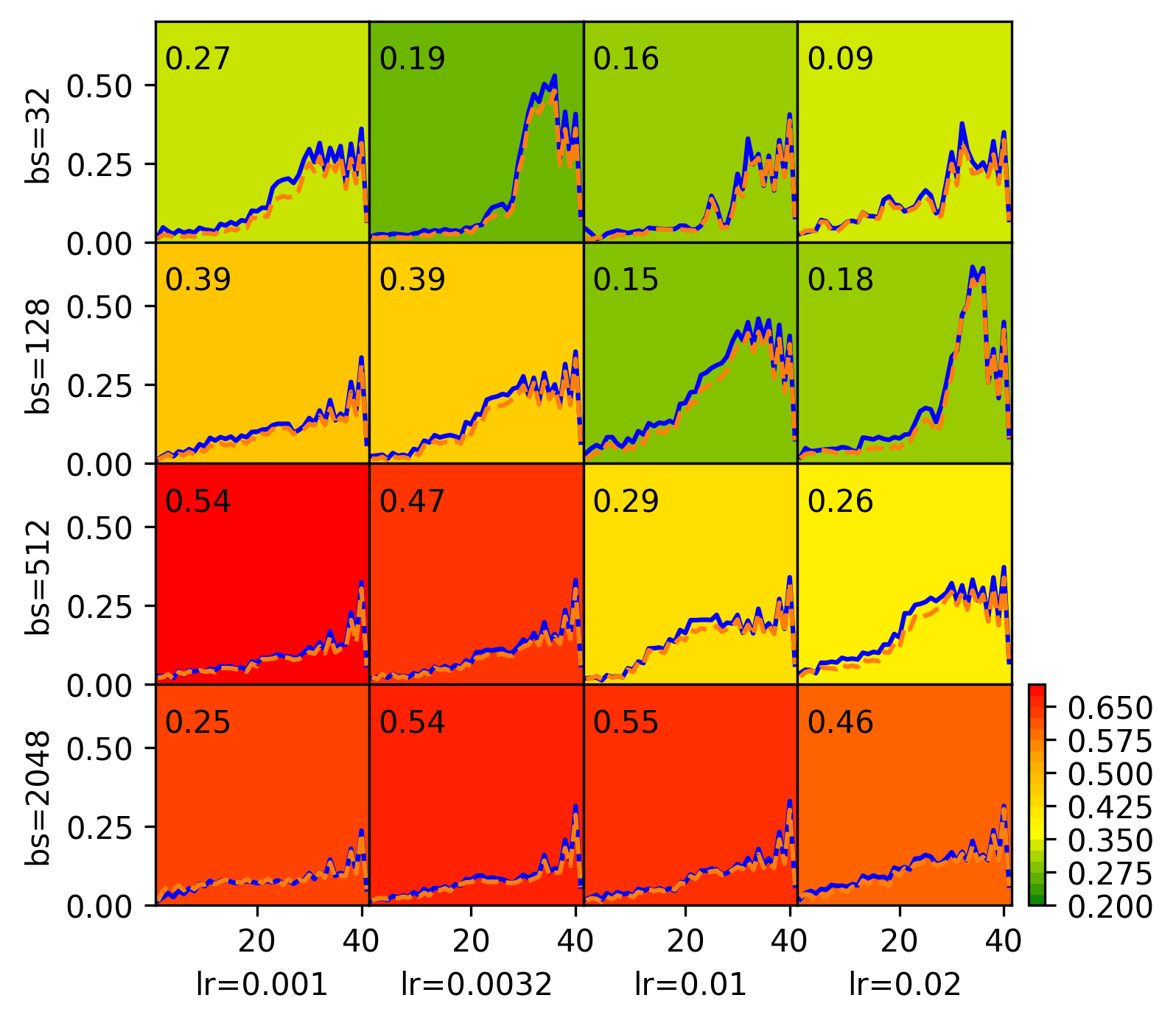}
  \caption{Resnet18 on CIFAR10}
  \label{fig:resnet_eg}
\end{subfigure}
\begin{subfigure}{.33\textwidth}
  \centering
  \includegraphics[width=\textwidth]{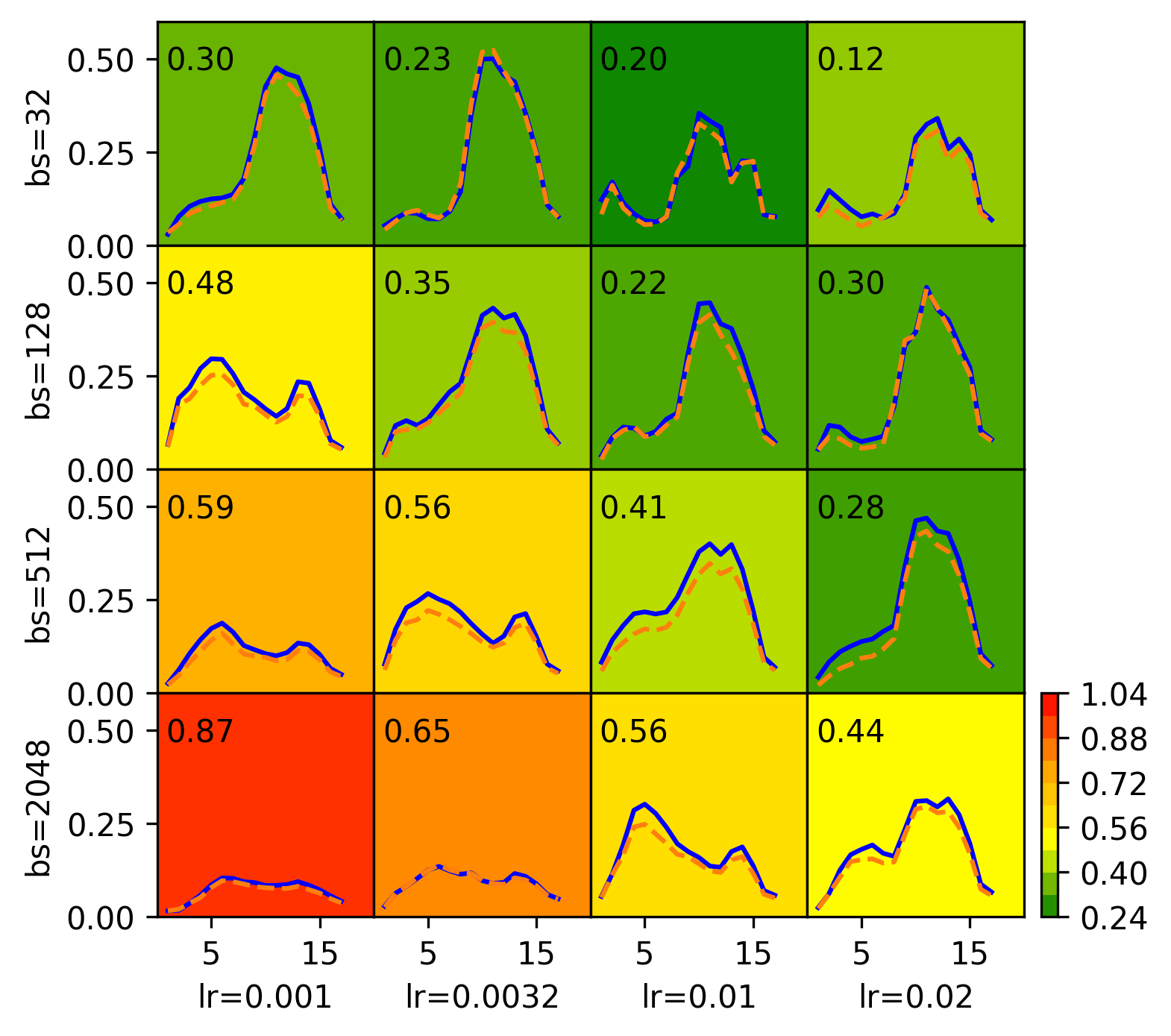}
  \caption{VGG19 on CIFAR10}
  \label{fig:vgg_eg}
\end{subfigure}
\caption{
\small{Alignment hierarchy as a function of batch size and learning rate for (a) Fashion MNIST on a FFNN (b) CIFAR10 on a Resnet18 and (c) CIFAR10 on a VGG19. The location of the peak is different for each dataset, architecture and hyperparameter combination -- note that AH cannot be seen here, as it predicts a scaling law with layers, but not the constants (see \cref{cor:equilibrium_ffnn}). The background colour denotes the test loss, and the number in the top left corner is the generalisation loss gap. Each model was trained stopping after a train loss of 0.1 or 500 epochs, whichever was sooner. There is a clear positive correlation between the height of the peak and the overall generalisation error for each example given here.}
}
\vspace{-0.5cm}
\label{fig:egs_main}
\end{figure*}
\vspace{-0.3cm}
\paragraph{Batch size and learning rate dependence}
\cref{fig:egs_main} shows that the AH is strongly affected by both batch size and learning rate. Typically, smaller batch sizes and larger learning rates lead to better generalisation where convergence is possible. 
They also lead to a larger peak in the AH, and the peak shifts towards the center. Although it is expected that learning rate and batch size affect training, we do not currently have a theoretical explanation for these effects on the AH.
\vspace{-0.3cm}
\paragraph{Connection to Stable Rank, Alignment \& Fisher Information}

\cite{baratin2021implicit,oymak2019generalization} observed a correlation between improved generalisation and (1) an increasing majority of the singular values of $\mathbf{\Psi}$ are very small with a few very large (2) the label vector $Y$ is increasingly aligned with the large singular directions in $\mathbf{\Psi}$. These are the conditions for maximsing CKA. In \cref{app:fim}, we extend these results to Layerwise CKA.
The layerwise CKA $A_l$ can be decomposed into a product of the inverse square root of its stable rank and the correlation term between the eigenvectors and the label vector $Y$. The stable rank measures an effective dimension of the internal representations of the neural network.
CIFAR10 with degrees of data randomisation was trained on VGG19, and showed a lower overall layerwise stable rank for the better generalising models. The alignment term varied the most across generalisation errors, with much greater alignment (in the later layers) for the best generalising model than the others. 
Larger overall alignment therefore correlates with lower stable rank; and an earlier peak appears to coincide with earlier alignment between $Y$ and large singular directions in $\mathbf{\Psi}$.
We might expect lower dimensional internal representations and earlier, larger alignment to correlate with good generalisation.
We also investigated Fisher Information Matrix $\mathbf{J}=\mathbf{\Psi}\mathbf{\Psi}^T$ (for MSE loss), and $\mathbf{K}$. 
We use observations from \citep{maddox2020effectiveparam} to link properties of $\mathbf{J}$ that coincide with larger CKA to explain why more alignment is likely to correlate with better generalisation.

% The Fisher information matrix $\mathbf{J}(\theta)$ measures how sensitive a likelihood function (which we take to be $p(y\mid x;\theta) \propto e^{-\|(y-f(x;\theta))\|^2}$)
% is to perturbations in $\theta$ (see \cref{app:fim} for more details).
% \cite{karakida2019pathological} showed that the empirical Fisher information matrix for neural networks is $\mathbf{J}=\mathbf{\Psi}\mathbf{\Psi}^T$\footnote{Assuming the above likelihood function, which corresponds to MSE loss (used for NTK). See \cref{sec:fim_intuition} for a discussion of other loss functions.} -- meaning it shares the same non-zero eigenspectrum as $\mathbf{K}=\mathbf{\Psi}^T\mathbf{\Psi}$.
% \citep{keskar2016large, wu2017towards, chaudhari2019entropy} have argued that `flatter' functions generalize better, and one can measure `flatness' with the eigenvalues of the FIM \citep{maddox2020effectiveparam}. 
% This link can also be used to argue that layers with higher kernel alignment are effectively lower dimensional, and thus higher layerwise kernel alignment should correlate with better generalisation.
% See \cref{app:fim} for a more complete discussion and experimental evidence.

\section{Related work}
To the best of our knowledge, the AH has only been discussed in \citep{baratin2021implicit}. However, the significance of AH goes beyond a simple feature learning pattern. The AH can be seen a \emph{structural} implicit regularization effect, i.e. a regularization effect that is purely induced by the depth.
Traditionally, implicit regularization refers to any hidden regularization effect induced by the training algorithm. For example, it is widely believed that the implicit regularization effect of Stochastic Gradient Descent (SGD) \citep{lecun1998sgd, he_relu, Krizhevsky_sgd} is mainly driven by the small-batch sampling noise \citep{jastrzebski2018finding}. 
However, recent empirical findings such as \citep{wu2017towards, geiping2021stochastic} demonstrated that DNNs can still achieve high accuracy on some image datasets with full-batch GD. \citet{goyal2017accurate} demonstrated that increasing batch size by several orders of magnitude on ImageNet does not affect generalization error significantly, suggesting that classical implicit regularization theories that rely on SGD noise to explain generalization are not sufficient to explain why neural networks generalize \textit{at all} (further backed up by the near parity achieved by kernel methods \citep{mingard2021sgd}).
These results suggest that implicit regularization might occur via other mechanisms than previously thought, one of them could this purely structural effect that results in the Alignment Hierarchy. 
Given interest in understanding how the amount of feature learning affects generalization and transfer learning, we believe this is a promising topic for future work.

\section{Conclusion and Limitations}
In this paper, we introduced the Equilibrium Hypothesis (EH) which connects information flow at initialization to tangent features alignment with data labels. the EH explains the alignment hierarchy, illustrated in \cref{fig:align_example}. Our empirical results showed an excellent match with the theoretical prediction $l = \Theta(L^{3/5})$ for FFNN on different datasets in \cref{fig:EH}, and presented empirical evidence that earlier alignment correlates with better generalisation \cref{fig:egs_main}.
Finally, we used connections between layerwise CKA, the stable rank and Fisher Information to present a theoretical case for this observation.
There are still multiple open questions to answer, e.g. the impact of the architecture on the alignment hierarchy. As demonstrated in \cref{fig:egs_main}, it seems that the alignment pattern is sensitive to the choice of the architecture. We leave this topic for future work.

\newpage
%\bibliography{sample}
\printbibliography
%\newpage
\appendix
\onecolumn
%\section*{Appendix}
\setcounter{equation}{0}
\setcounter{lemma}{0}
\setcounter{theorem}{0}
\setcounter{section}{0}
\setcounter{assumption}{0}

\section{Review of Signal propagation theory}\label{app:infinite_width_limit}
The signal propagation theory in the context of neural networks deals precisely with the distortion of the information carried by the output as it travels through the network. Most results in this theory (see e.g. \citet{poole, samuel2017, yang_tensor3_2020, hayou_ntk, hayou2019impact, jacot2018NTK, jacot2020asymptotic}) consider the infinite width limit as it allows the derivation of closed-form expressions. Infinite width networks are also naturally overparameterized (infinite number of parameters) and therefore might offer some theoretical insights on the overparameterized regime.

\paragraph{Fully-connected FeedForward Neural Networks (FFNN).} 
Given an input $x \in \reals^d$, and a set of weights and bias $(W_l, b_l)_{1 \leq L}$, the forward propagation is given by 
\begin{align}\label{eq:ffnn}
    z_1(x) &= W_1 x + b_1\\
    z_l(x) &= W_l \phi(z_{l-1}(x)), \quad 2 \leq l \leq L,
\end{align}
where $W_{l} \in \mathbb{R}^{N_{l} \times N_{l-1}}$, and $\phi$ is the ReLU activation function given by $\phi(v) = \max(v,0)$ for $v \in \reals$.
The dimension of the parameter space is $P=\sum_{l=0}^{L-1}\left(N_l+1\right) N_{l+1}$ where we denote $N_0:=d$. For each layer, the weights are initialized with i.i.d Gaussian variables $W_l^{ij} \sim \mathcal{N}(0,\frac{2}{N_{l-1}})$.

\subsection{Forward propagation}\label{subsection:aforward_prop}

When we take the limit $N_{l-1} \rightarrow \infty$ recursively over $l$, this implies, using Central Limit Theorem, that $z^i_{l} (x)$ is a Gaussian random variable for any input $x$. The convergence rate to this limiting Gaussian distribution is given $\mathcal{O}(1/\sqrt{N_{l-1}})$ (standard Monte Carlo error). More generally, an approximation of the random process $z^i_l(.)$  by a Gaussian process was first proposed by \cite{neal} in the single layer case and has been extended to the multiple layer case by \cite{lee_gaussian_process} and \cite{matthews}. The limiting Gaussian process kernels follow a recursive formula given by, for any inputs $x,x'\in \mathbb R^d$
\begin{align*}
\kappa_l(x,x') &= \mathbb{E}[z_l^i(x)z_l^i(x')]\\
&= 2 \, \mathbb{E}[\phi(z_{l-1}^i(x))\phi(z_{l-1}^i(x'))]\\
&= 2 \, \Psi_{\phi} (\kappa_{l-1}(x,x), \kappa_{l-1}(x,x'), \kappa_{l-1}(x',x')),
\end{align*}
where $\Psi_{\phi}$ is a function that only depends on $\phi$. This provides a simple recursive formula for the computation of the kernel $\kappa^l$; see, e.g., \cite{lee_gaussian_process} for more details.

\subsection{Gradient Independence}\label{section:gradient_independence}
In the literature of infinite width DNNs, a standard assumption in prior literature is that of the gradient independence which is similar in nature to the concept of feedback alignment \citep{timothy_feedback}. This assumption states that, for infinitely wide neural networks, if we assume the weights used for forward propagation are independent from those used for back-propagation. When used for the computation of Neural Tangent Kernel, this approximation was proven to give the exact computation for standard architectures such as FFNN, CNN and ResNets \cite{yang_tensor3_2020}.

\begin{lemma}[Corollary of Theorem D.1. in \citep{yang_tensor3_2020}]\label{lemma:gradient_independence}
Consider an FFNN with weights $\mathbf{W}$. In the limit of infinite width, we can assume that $\mathbf{W}^T$ used in back-propagation is independent from $\mathbf{W}$ used for forward propagation, for the calculation of Gradient Covariance and NTK.
\end{lemma}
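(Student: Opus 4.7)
The plan is to recast the joint forward--backward computation for the FFNN as an instance of Yang's Tensor Programs and appeal to the Master Theorem (Theorem D.1 of \citep{yang_tensor3_2020}). In that framework, any sequence of operations built from (i) matrix--vector multiplications $W v$ (and $W^T v$) with a Gaussian matrix $W \sim \mathcal{N}(0, \sigma^2/N)$, (ii) coordinate-wise nonlinearities applied to the resulting vectors, and (iii) linear combinations, admits a closed-form infinite-width limit in which, for the purpose of computing expectations of any scalar output of the program, each occurrence of $W^T$ may be replaced by an independent Gaussian copy $\tilde{W}$ of the same distribution.

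The first step is to spell out the program. Forward propagation $z_l(x) = W_l \phi(z_{l-1}(x))$ is already a sequence of MatMul-then-nonlinearity operations. Backpropagation of $\delta_l(x) := \nabla_{z_l} f(x)$ is given by the recursion $\delta_{l-1}(x) = \phi'(z_{l-1}(x)) \odot (W_l^T \delta_l(x))$ with $\delta_L(x) = 1$, which augments the program with the transposed MatMuls $W_l^T \delta_l(x)$. The quantities of interest---the backward information $I^b_{l,N}(x,x') = \mathbb{E}[\delta_l^1(x)\delta_l^1(x')]$ and the layerwise NTK $K_l(x,x') = \tfrac{1}{N^2}\sum_{i,j}\phi(z_{l-1}^j(x))\phi(z_{l-1}^j(x'))\,\delta_l^i(x)\delta_l^i(x')$---are exactly inner-product-type scalar outputs of this program, so they fall within the scope of the Master Theorem.

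The main obstacle is verifying that the Master Theorem's conclusion matches what one obtains by treating the backward $W_l^T$'s as independent of the forward $W_l$'s. Naively, a weight $W_l$ appearing through $\phi(z_{l-1}(x))$ in forward propagation is correlated with its transpose appearing in $\delta_{l-1}(x)$, so direct Gaussian integration produces coupling terms $\mathbb{E}[W_l W_l^T]$ absent from the independent-copy computation. Yang's argument handles this by conditioning on the (finite collection of) forward-pass vectors that $W_l$ has already acted upon, decomposing $W_l$ into its projection onto the subspace spanned by those directions plus an orthogonal fresh Gaussian component; the projected part contributes only $O(1/N)$ to any bounded scalar output, while the orthogonal component is distributed exactly as an independent copy. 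Granting this decomposition and applying it inductively over $l = L, L-1, \ldots, 1$, both the gradient covariance $I^b_{l,N}$ and the NTK are correctly computed in the $N \to \infty$ limit under the gradient independence simplification, which is precisely the content of the lemma.
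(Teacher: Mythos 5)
The paper itself offers no proof of this lemma: it is imported as a black-box corollary of Theorem D.1 in \citet{yang_tensor3_2020}, so your proposal amounts to reconstructing the external argument. Your setup is faithful to that framework --- expressing the forward recursion, the backward recursion $\delta_{l-1}(x) = \phi'(z_{l-1}(x)) \odot (W_l^T \delta_l(x))$, and the quantities $I^b_{l,N}$ and $K_l$ as scalar outputs of a Tensor Program is exactly the right encoding. The gap is your central technical claim that, after conditioning $W_l$ on the forward vectors it has already acted upon, ``the projected part contributes only $O(1/N)$ to any bounded scalar output.'' This is false, and if it were true it would prove far too much: gradient independence would then be valid for \emph{every} quantity computable in such a program, whereas Yang's Master Theorem produces a generically non-vanishing correction from precisely this projection (the $\dot{Z}$ component in the decomposition of the limit of $W^T y$ into an independent-Gaussian part plus a correction). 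Concretely, the conditional mean of $W_l^T \delta_l$ given the forward pass contains a term proportional to $\phi(z_{l-1})\, \frac{z_l \cdot \delta_l}{\| \phi(z_{l-1}) \|^2}$; here $z_l \cdot \delta_l$ is an inner product of two $N$-dimensional vectors and is of the same order as the normalization $\|\phi(z_{l-1})\|^2 = \Theta(N)$, so this term is $O(1)$ per coordinate, not $O(1/N)$.

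What actually makes the gradient independence simplification \emph{exact} for the gradient covariance and the NTK is a cancellation specific to these quantities and to the hypothesis --- present in Theorem D.1 but never invoked in your sketch --- that the backward pass is seeded by output-layer weights sampled independently of all other weights with zero mean. Under that hypothesis, the limiting cross-moments of type $\mathbb{E}[Z^{z_l} Z^{\delta_l}]$ that multiply the correction terms vanish inductively down the layers, so the correction drops out of the particular expectations defining $I^b_{l,N}$ and $K_l$ (and only of those: this is also why the assumption is known to fail for, e.g., tied input/output weights or other functionals of the same program). As written, your argument would establish a false general statement; to repair it you must either carry out the $\dot{Z}$ calculus and exhibit this cancellation for the two quantities in the lemma, or restrict the claim and cite that step of Yang's proof explicitly rather than the $O(1/N)$ heuristic.
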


This result has been extensively used in the literature as an approximation before being proved to yield exact computation for gradient covariance and NTK.

\paragraph{Gradient Covariance back-propagation.} Analytical formulas for gradient covariance back-propagation were derived using this result, in \citep{hayou2019impact, samuel2017, poole, xiao_cnnmeanfield, yang2019scaling}. Empirical results showed an excellent match for FFNN in \cite{samuel2017}, for Resnets in \cite{yang2019scaling} and for CNN in \cite{xiao_cnnmeanfield}. 

\paragraph{Neural Tangent Kernel.}The Gradient Independence approximation was implicitly used in \cite{jacot2018NTK} to derive the infinite width Neural Tangent Kernel (See \cite{jacot2018NTK}, Appendix A.1). The authors have found that the infinite width NTK computed with the Gradient Independence approximation yields excellent match with empirical (exact) NTK.

\subsection{Back-propagation}\label{app:backprop}

For FFNN layers, let $q_l(x):=q_l(x,x)$ be the variance of $z_l^1(x)$ (the choice of the index $1$ is not important since, in the infinite width limit, the random variables $(z_l^i(x))_{i \in [1:N_l]}$ are i.i.d). Let $q_l(x,x')$, resp. $c_l^1(x,x')$ be the covariance, resp. the correlation between $z_l^1(x)$ and $z_l^1(x')$. For Gradient back-propagation, let $\Tilde{q}_l(x,x')$ be the Gradient covariance defined by  $\Tilde{q}_l(x,x')= \mathbb{E}\left[ \frac{\partial \mathcal{L}}{\partial z_l^1}(x) \frac{\partial \mathcal{L}}{\partial z_l^1}(x')\right]$ where $\mathcal{L}$ is some loss function. Similarly, let $\Tilde{q}_l(x)$ be the Gradient variance at point $x$. We also define $\Dot{q}_l(x,x') = 2 \mathbb{E}[\phi'(z_{l-1}^1(x))\phi'(y_{l-1}^1(x'))] $.

Given two inputs $x,x' \in \mathbb{R}^d$, using Central Limit Theorem as in \cite{samuel2017}, we obtain
$$
q_l(x,x') = 2 \mathbb{E}\left[\phi\left(\sqrt{q_l(x)} Z_1\right)\phi\left(\sqrt{q_l(x')}(c^{l-1} Z_1 + \sqrt{1 - (c^{l-1})^2} Z_2\right)\right], \quad Z_1, Z_2 \overset{iid}{\sim} \mathcal{N}(0,1),
$$
with $c^{l-1} := c^{l-1}(x,x')$.\\
With ReLU, and since ReLU is positively homogeneous (i.e. $\phi(\lambda x) = \lambda \phi(x)$ for $\lambda\geq0$), we have that 
$$
q_l(x,x') = \sqrt{q^l(x)}\sqrt{q^l(x')} g(c^{l-1})
$$
where $g$ is the ReLU correlation function given by \cite{hayou_ntk}
\begin{align}\label{eq:correl_function}
    g(c) = \frac{1}{\pi}(c \arcsin{c} + \sqrt{1- c^2}) + \frac{1}{2}c.
\end{align}

\paragraph{Gradient back-propagation.}
The gradient back-propagation is given by 
\begin{align*}
    \frac{\partial f_{l:L}}{\partial y^l_i} &= \phi'(y^l_i) \sum_{j=1}^{N_{l+1}} \frac{\partial f_{l:L}}{\partial y^{l+1}_j} W^{l+1}_{ji}.
\end{align*}
where $f_{l:L}$ is the mapping from layer $l$ to the output. 
Using the Gradient Independence in the infinite width limit (Lemma \ref{lemma:gradient_independence}) and assuming all layer widths go to infinity at the same rate, a Central Limit Theorem argument yields (see e.g. Section 7.9 in the appendix in \cite{samuel2017})
\begin{equation*}
    \Tilde{q}_l(x, x') = \Tilde{q}_{l+1}(x,x') g'(c_l(x,x')),
\end{equation*}
where $g$ is the function defined in \cref{eq:correl_function}.\\

By telescopic product, we obtain

\begin{equation}
    \Tilde{q}_l(x, x') = \Tilde{q}_{L}(x,x')  \prod_{k=l}^{L-1} g'(c_k(x,x')) = \Tilde{q}_{L}(x,x') \frac{\zeta_L(x,x')}{\zeta_l(x,x')}.
\end{equation}
where $\zeta_m(x,x') = \prod_{k=1}^{m-1} g'(c_k(x,x'))$ for $m\geq 2$.
\subsection{Standard parameterization Vs NTK parameterization}\label{app:ntk_param}
Many papers that study the NTK consider the so-called NTK parameterization given by 

\begin{align}\label{eq:ffnn_ntk_param}
    z_1(x) &= \frac{2}{\sqrt{d}} W_1 x + b_1\\
    z_l(x) &= \frac{2}{\sqrt{N_{l-1}}} W_l \phi(z_{l-1}(x)), \quad 2 \leq l \leq L,
\end{align}
where the weights $W_l^{ij}$ are initialized with standard normal distribution $\mathcal{N}(0,1)$.
However, both parameterizations yield the same quantities for signal propagation at initialization, i.e. the covariance $q_l$ is the same for both parameterizations. In our proofs, we will refer to results in \cite{hayou_ntk} and \cite{samuel2017} that consider either the NTK or the standard paramaterization.

\section{Proofs}\label{app:proofs}

\subsection{Proof of \cref{thm:forward_loss}}
\begin{manualthm}{\ref{thm:forward_loss}}[Forward Information Loss]
Let $\epsilon \in (0,1)$, and consider the set $E_\epsilon$ as in \cref{eq:set_E}. Define $I^f_{l,\infty}(x,x') := \lim\limits_{N \to \infty} I^f_{l,N}(x,x')$ for all $x,x' \in \reals^d$. We have that 
$$
\sup_{(x,x') \in E_\epsilon}|I^f_{l,\infty}(x,x') - 1/2| = \Theta(l^{-2}).
$$
\end{manualthm}
\cref{thm:forward_loss} is a corollary of a previous result that appeared in \cite{hayou_ntk}. The proof techniques for the latter rely on an asymptotic analysis of a well defined covariance kernel in the limit of large $l$, coupled with a uniform bounding of the convergence rate.\\

\begin{proof}
Fix $(x,x') \in E$. From \cref{app:infinite_width_limit}, we know that $I^f_{l,\infty}(x,x') = \frac{1}{2} q_l(x,x')$  where $q_l$
is the covariance between $z_l^1(x), z_l^1(x')$ given by
$$
q_l(x,x') = \E[z_l^1(x) z_l^1(x')]
$$

Since $q_1(x,x)=q_1(x',x') = 1$, $q_1(x,x')$ can be seen as the correlation between $z_1^1(x)$ and $z_1^1(x')$. Recursively, it is straightforward that $q_l(x,x)=q_l(x',x') = 1$ for all $l$, suggesting that $q_l(x,x')$ can be seen as the correlation between $z_l^1(x)$ and $z_l^1(x')$.\\

From `Appendix Lemma 1' in \cite{hayou_ntk}, we have that 
$$
\sup_{(x,x') \in E_\epsilon}|q_l(x,x') - 1| = \Theta(l^{-2})
$$
which yields the desired result.
\end{proof}

\subsection{Proof of \cref{thm:backward_loss}}
We first prove a result that will be useful in the proof of \cref{thm:backward_loss}.
\begin{lemma}[Uniform Asymptotic Expansion]\label{lemma:uniform_expansion}
Let $a \geq 1$ be a positive integer. We define the sequence $(b_l)_{l \geq 0}$ by 
$$
b_l = \beta_l b_{l-1}, 
$$
where $(\beta_l)_{l\geq 0}$ is a sequence of reals numbers that satisfy $\beta_l = 1 - \frac{a}{l} + \kappa \frac{\log(l)}{l^2} + \bigO(l^{-2})$ where $\kappa \neq 0$ is a constant that does not depend on $\beta_0$. Assume that the $\bigO$ bound is uniform over $\beta_0$. Then, uniformly over $\beta_0$, we have that
$$
\log(b_l) = -a \log(l) +  \kappa \frac{\log(l)}{l} + \bigO(l^{-1})
$$
\end{lemma}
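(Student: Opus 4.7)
The plan is to turn the multiplicative recursion into an additive one by taking logarithms, expand each summand by Taylor, and sum using standard asymptotics for elementary partial sums. Concretely, telescoping gives
\[
\log b_l = \log b_0 + \sum_{k=1}^{l} \log \beta_k,
\]
and since $\beta_k - 1 \to 0$ at rate $\bigO(1/k)$, the Taylor expansion $\log(1+x) = x - x^2/2 + \bigO(x^3)$ applied at $x = \beta_k - 1$ yields, uniformly over $\beta_0$,
\[
\log \beta_k = -\frac{a}{k} + \kappa\,\frac{\log k}{k^2} + \bigO(k^{-2}),
\]
where the quadratic correction $-(a/k)^2/2$ and all cross terms are absorbed into $\bigO(k^{-2})$. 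Uniformity in $\beta_0$ is inherited directly from the assumed uniformity of the $\bigO(l^{-2})$ in the definition of $\beta_l$, together with the fact that $|\beta_k - 1| < 1/2$ for all $k$ large enough (again uniformly in $\beta_0$), so that Taylor's remainder can be controlled by a uniform constant.

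\paragraph{Step-by-step computation of the three partial sums.} First, the harmonic contribution is $\sum_{k=1}^{l} (-a/k) = -a\log l + \bigO(1)$ by the classical expansion $H_l = \log l + \gamma + \bigO(1/l)$. Second, the remainder contribution $\sum_{k=1}^{l} \bigO(k^{-2})$ converges absolutely and is therefore $\bigO(1)$. The main piece is
\[
\kappa \sum_{k=1}^{l} \frac{\log k}{k^2} = \kappa \sum_{k=1}^{\infty} \frac{\log k}{k^2} - \kappa \sum_{k > l} \frac{\log k}{k^2},
\]
whose first summand is an absolute constant ($-\kappa \zeta'(2)$) and whose tail I would analyze by comparison with the integral $\int_l^\infty \log(x)/x^2\,dx$. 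Integration by parts gives this integral exactly as $\log(l)/l + 1/l$, and since $\log(x)/x^2$ is decreasing for $x > \sqrt{e}$, a monotone sandwich (or a one-term Euler--Maclaurin correction) controls the sum--integral gap by $\bigO(\log(l)/l^2)$, yielding
\[
\sum_{k>l} \frac{\log k}{k^2} = \frac{\log l}{l} + \bigO(1/l).
\]
Combining the three asymptotics and collecting all $\bigO(1)$ constants and $\bigO(1/l)$ residues into a single remainder delivers the claimed expansion, with the sign of the $\kappa \log(l)/l$ term fixed by the sign convention in the hypothesis.

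\paragraph{Main obstacle and uniformity.} The one non-routine step is the sharp tail bound for $\sum_{k>l} \log(k)/k^2$; everything else is bookkeeping with standard Taylor expansion and harmonic-series asymptotics. The uniformity over $\beta_0$ requires almost no extra work: all of the absolute constants that appear ($\gamma$, $\zeta'(2)$, the value of the improper integral, the Euler--Maclaurin constants) are independent of $\beta_0$, and the only $\beta_0$-dependence enters through the hypothesized $\bigO(k^{-2})$ in $\beta_k$, which is uniform by assumption and whose sum over $k$ is dominated term-by-term by a convergent series with $\beta_0$-independent tail. Thus the final remainder is uniform in $\beta_0$, as required.
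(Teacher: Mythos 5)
Your proposal is correct and follows essentially the same route as the paper: telescope the logarithm of the multiplicative recursion, Taylor-expand $\log\beta_k$, and control the key sum $\sum_{k>l}\log(k)/k^2 = \log(l)/l + \bigO(1/l)$ by integral comparison. The only (cosmetic) difference is that the paper first renormalizes via $r_l = b_l\, l^a$ so that only a convergent tail remains to be summed, whereas you keep the harmonic piece and dispose of it with $\sum_{k\le l} 1/k = \log l + \bigO(1)$; both arguments share the paper's mild imprecision of absorbing an additive $\bigO(1)$ constant (from $b_0$ and the convergent series) into the stated remainder, which is harmless for the $\Theta$-type conclusions the lemma feeds into.
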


\begin{proof}
Let $r_l := b_l l^a$. We have that 
\begin{align*}
    r_l &= b_l r_{l-1} (1 + a l^{-1} + \bigO(l^{-2}))\\
    &= (1 + \kappa\frac{\log(l)}{l^2} + \bigO(l^{-2})) r_{l-1}
\end{align*}
which yields
$$
\log(r_l / r_{l-1}) = \kappa \frac{\log(l)}{l^2} + \bigO(l^{-2})
$$

Since the series on the right side converge, we have that
\begin{equation*}
\begin{aligned}
\sum_{k\geq l }\log(r_k / r_{k-1}) &= \sum_{k\geq l }  \kappa \frac{\log(k)}{k^2} + \bigO(\sum_{k\geq l }  k^{-2})\\
&= -\kappa \frac{\log(l)+1}{l} + \bigO(\log(l) l^{-2}) + \bigO(l^{-1})\\
&= -\kappa \frac{\log(l)}{l} + \bigO(l^{-1})
\end{aligned}
\end{equation*}
\end{proof}
where we have use the integral estimates of the remainders of series. Since the $\bigO$ bound in $\beta_l$ is uniform over $\beta_0$ by assumption, then the resulting $\bigO$ bound in $\log(r_l)$ is also uniform over $\beta_0$, which concludes the proof.
\begin{manualthm}{\ref{thm:backward_loss}}[Backward Information Loss]
Let $\epsilon \in (0,1)$, and consider the set $E_\epsilon$ as in \cref{eq:set_E}. Define $I^b_{l,\infty}(x,x') := \lim\limits_{N \to \infty} I^b_{l,N}(x,x')$ for all $x,x' \in \reals^d$. Then, 
\begin{itemize}
    \item If $l = \lfloor \alpha L \rfloor$ where $\alpha \in (0,1)$ is a constant, then there exists a constant $\mu$ such that in the limit $L \to \infty$,
    $$
    \sup_{(x,x')\in E_\epsilon} |I^b_{l,\infty}(x,x') - \mu| = \Theta(\log(L) L^{-1})
    $$
    \item In the limit $l, L \to \infty$ with $l/L \to 0$, 
    $$
    \sup_{(x,x')\in E_\epsilon} |I^b_{l,\infty}(x,x')| = \Theta(\left(L/l\right)^{-3}).
    $$
\end{itemize}

\end{manualthm}

\begin{proof}
Let $\epsilon \in (0,1)$. Note that $I^l_{b,\infty}(x,x') = \tilde{q}_l(x,x')$ where $\tilde{q}_l$ is defined in \cref{app:backprop}.

Using a Taylor expansion of $g$ near 1, Appendix Lemma 1 in \cite{hayou_ntk} shows that there exists a constant $\kappa >0$ such that 
$$
\sup_{(x,x') \in E} |g'(c_l(x,x')) - 1 + \frac{3}{l} - \kappa \frac{\log(l)}{l^{2}}| = \bigO(l^{-2})
$$

Let $\zeta_l(x,x')= \prod_{k=1}^{l-1} g'(c_k(x,x'))$ as in \cref{app:backprop}. It is clear that $(\zeta_l)$ satisfies the conditions in \cref{lemma:uniform_expansion}. Hence, letting $r_l = \zeta_l(x,x') \, l^3$ \footnote{We omit $(x,x')$ to alleviate the notations.}, we obtain
$$
\log(r_l) = \kappa \frac{\log(l)}{l} + \bigO(l^{-1})
$$
where the $\bigO$ bound is uniform over $(x,x') \in E$. 

The loss function is given by $\mathbb{L}(f(x), y)$, therefore $\tilde{q}_L(x,x') = \E[\fracpartial{\mathbb{L}(f(x), y)}{z_L^1(x)} \fracpartial{\mathbb{L}(f(x'), y')}{z_L^1(x')}]$. Using the result on the correlation propagation from Appendix Lemma 1 in \cite{hayou_ntk}, we obtain that $\tilde{q}_L(x,x') = \tilde{q} + \bigO(L^{-2})$ as $L$ goes to infinity, where $\tilde{q}$ is independent of $(x,x')$.

Now let us discuss the two cases:

\begin{itemize}
    \item Case 1 ($l=\lfloor \alpha L \rfloor$): in this case, we have that $\log(r_L) - \log(r_{\lfloor \alpha L \rfloor}) = \Theta(\log(L) L^{-1})$, which yields
    $$
    \frac{\zeta_L(x,x')}{\zeta_{\lfloor \alpha L \rfloor}(x,x')} = \alpha^3  + \Theta(\log(L) L^{-1})
    $$
    where $\Theta$ is uniform over $x,x'$. This proves that 
    $$
    \sup_{(x,x') \in E} \left| \tilde{q}_{\lfloor \alpha L \rfloor}(x,x') - \alpha^3 \tilde{q} \right| = \Theta(\log(L) L^{-1})
    $$
    
    \item Case 2 ($l/L \to 0$): in this case, we have that 
    $$
    \frac{\zeta_L(x,x')}{\zeta_{l}(x,x')} \sim (L/l)^{-3},
    $$
    uniformly over $x,x'$. We conclude that 
    
    $$
    \sup_{(x,x') \in E}| \tilde{q}_l(x,x')| = \Theta((L/l)^{-3}).
    $$
    
\end{itemize}

\end{proof}

\subsection{Proof of \cref{cor:equilibrium_ffnn}}

\begin{manualcorollary}{\ref{cor:equilibrium_ffnn}}[Equilibrium]
Under the conditions of Theorems \ref{thm:forward_loss} and \ref{thm:backward_loss}, the equilibrium for an FFNN is ahieved for layers with index
$$
l = \Theta(L^{3/5})
$$
where $L$ is the network depth.
\end{manualcorollary}
\begin{proof}
We have two cases:
\begin{itemize}
    \item If $l$ is of the same order as $L$, or simply $l = \lfloor\alpha L\rfloor$ where $\alpha \in (0,1)$ is a constant, then to have the equilibrium property, we need to have $l^{-2} = \Theta(\log(L) L^{-1})$ which is absurd. Hence, equilibrium cannot be achieved in this case.
    \item Therefore, the only possible scenario where equilibrium can be achieved is when $l/L \to 0$, in this case, the equilibrium property implies $l^{-2} = \Theta( (L/l)^{-3})$ which yields
    $$
    l  = \Theta(L^{3/5})
    $$
\end{itemize}
\end{proof}

\subsection{Connection between the equilibrium property and effective parameter updates}\label{app:EH_early_training}
% \section{A tentative justification of EH at early training stages}\label{sec:EH_early_training}
The EH conjectures architectural advantage for some FFNN layers to more effectively align with data. Using an approximate analysis of the second order geometry of DNNs, we depict the mechanisms by which the equilibrium property impacts how the alignment evolves during early training. We emphasize that the analysis in this section is not intended to be mathematically rigorous but rather insightful for future work on the EH. 
% The bias in principle eigenvectors of $\mathbf{H}_w$ explain hierarchical structure of CKA at the end of training, and is key to understanding the true relationship between EH and effective feature learning.
We restrict our analysis to the setting of FFNN but we now consider a classification task with $k$ classes (i.e. $o=k$). The loss function $\mathbb{L}$ is the cross-entropy loss. We denote by
% To gain insights on how the equilibrium property affects feature learning, we analyse the evolution of layer CKA during early stages of training of ReLU-activated Neural Network architectures on classification tasks. We highlight a previously overlooked component of the loss hessian matrix that intimately relates to hierarchical structure in CKA values at convergence. We further relate equilibrium property at initialization to effective feature learning at early training stages.
$F = (f_\theta(x_1)^T, f_\theta(x_2)^T, \dots, f_\theta(x_n)^T) \in \mathbb{R}^{nk}$ the concatenation of all outputs $f_{\theta}(x_i)$ evaluated on the training dataset, $w = \nabla_{F} \mathcal{L} \in \mathbb{R}^{nk}$ the gradient of the loss w.r.t to $F$, 
    % \fTBD{why transpose? since $F \in \mathbb{R}^{nC}$, its gradient should be in $\mathbb{R}^{nC}$ by definition. \textcolor{red}{the previous notation is taking derivative so the result will be of shape of $F^T$. I now change it to gradient $\nabla$ which is a lot better.}}.
$Y \in \mathbb{R}^{nk}$ the concatenation of all one-hot vectors given by labels $y_i$ in the dataset, and $\tilde{Y} = \mathbf{C}Y$ the centered version of $Y$.

% \paragraph{Gradient updates.} The gradient update at layer $l$ is given by 
% $$
% \delta \theta_l = -\eta \nabla_{\theta_l}f_t(X)^T \nabla_{f}\mathbb{L}(f_t(X), Y),
% $$
% where $\eta = \frac{\textup{LR}}{n}$. Hence, we obtain
% $$
% \|\delta \theta_l\|_2 = \eta \sqrt{ Z_t^T \hat{K}_l Z_t},
% $$
% where $Z_t := f_t(X) - Y$. At initialization, $Z_t$ is on average equal to $ - CY = \tilde{Y}$ (as we justified in \cref{sec:quantifying_role}). Now let us analyze the average magnitude of $\|\delta \theta_l\|_2$ in the two cases $l\ll L$ and $l \sim L$. 
% \begin{itemize}[leftmargin=*]
%     \item $l\ll L$: in this case, from \cref{thm:backward_loss}, we know that 
% \end{itemize}

\paragraph{Early training steps.} As shown in \cref{fig:align_example}, the alignments $(A_l)_{1 \leq l \leq L}$ sharply increase at early stages of training ($\approx 2$ epochs of batch training), and plateau soon afterwards. We hence focus on the evolution of the alignment at early training (let $T$ denote the total number of training epochs). 

For ReLU activation (and more generally piecewise linear activations), we can express the gradient updates in terms of the output hessian matrix.
\begin{manualthm}{A}\label{thm:ckhess}
Gradient updates are given by
\begin{equation} \label{quantities}
    \begin{aligned}
   \theta(t+1) = \left(\mathbf{I} - \frac{\eta}{L-1} \mathbf{H}_w(t) \right) \theta(t)
    \end{aligned}
\end{equation}
where, for $v \in \mathbb{R}^{nk}$, $\mathbf{H}_v = \sum_{x \in \mathcal{D}}\sum_{i =1}^{k} v_{x, i} \mathbf{H}^{i}(x)$, and $\mathbf{H}^{i}(x) = \frac{\partial^2 f^i_{\theta}(x)}{\partial \theta^2}$ is the output hessian evaluated at $x$. For $v_{x,i}$, x indexes the datapoint and i the component.
\end{manualthm}
\begin{proof}
Let $\mathbf{B}_{l_1,l_2}$ be the block in $\mathbf{H}^{i}(x)$ containing all entries of the form $\frac{\partial^2 f^i(x)}{\partial W_{l_1}^{jk} \partial W_{l_2}^{st}}$ where $ W_{l_1}^{jk}$ is one of layer $l_1$'s parameters, and $W_{l_2}^{st}$ one of layer $l_2$'s parameters. 

If $l_1 = l_2$, by piecewise linearity of $\phi$, $\mathbf{B}_{l_1,l_2}$ contains all zero entries. 

If $l_1 > l_2$, fixing $W_{l_1}^{jk}$,
\begin{align*}
    \frac{\partial^2 f^i(x)}{\partial W_{l_1}^{jk} \partial W_{l_2}^{st}} = \frac{\partial\phi(z_{l1-1}^{k}(x))}{\partial W_{l_2}^{st}} \frac{\partial f^i(x)}{\partial z_{l_1}}[j]
\end{align*}
where $\frac{\partial f^i(x)}{\partial z_{l_1}}[j]$ is the $j$-th entry of $\frac{\partial f^i(x)}{\partial z_{l_1}}$. Hence
\begin{align*}
    \sum_{s,t} \frac{\partial^2 f^i(x)}{\partial W_{l_1}^{jk} \partial W_{l_2}^{st}} W_{l_2}^{st} = \left( \sum_{s,t}\frac{\partial\phi(z_{l1-1}^{k}(x))}{\partial W_{l_2}^{st}}W_{l_2}^{st}\right) \frac{\partial f^i(x)}{\partial z_{l_1}}[j]
\end{align*}
By piecewise linearity of activation function, 
\begin{align*}
    \sum_{s,t} \frac{\partial^2 f^i(x)}{\partial W_{l_1}^{jk} \partial W_{l_2}^{st}} W_{l_2}^{st} = \phi(z_{l1-1}^{k}(x)) \frac{\partial f^i(x)}{\partial z_{l_1}}[j] =  \frac{\partial f^i(x)}{\partial W_{l_1}^{jk}}
\end{align*}
If $l_1 < l_2$, fixing $W_{l_1}^{jk}$,
\begin{align*}
    \frac{\partial^2 f^i(x)}{\partial W_{l_1}^{jk} \partial W_{l_2}^{st}} = \phi(z_{l1-1}^{k}(x)) \frac{\partial \frac{\partial f^i(x)}{\partial z_{l_1}}[j]}{\partial W_{l_2}^{st}}
\end{align*}
Using piecewise linearity of activation function again, we get:
\begin{align*}
    \sum_{s,t} \frac{\partial^2 f^i(x)}{\partial W_{l_1}^{jk} \partial W_{l_2}^{st}} W_{l_2}^{st} =\phi(z_{l1-1}^{k}(x)) \left( \sum_{s,t} \frac{\partial \frac{\partial f^i(x)}{\partial z_{l_1}}[j]}{\partial W_{l_2}^{st}}W_{l_2}^{st} \right) = \phi(z_{l1-1}^{k}(x)) \frac{\partial f^i(x)}{\partial z_{l_1}}[j] =  \frac{\partial f^i(x)}{\partial W_{l_1}^{jk}}
\end{align*}

Combining the above results we get: (fixing $l_1 \in \{1,..,L\}$, for any $W_{l_1}^{jk}$)
\begin{align*}
    \sum_{l_2 = 1}^{L} \sum_{s,t} \frac{\partial^2 f^i(x)}{\partial W_{l_1}^{jk} \partial W_{l_2}^{st}} W_{l_2}^{st} = (L-1) \frac{\partial f^i(x)}{\partial W_{l_1}^{jk}}
\end{align*}

The left hand side is the entry of $\mathbf{H}^{i}(x) \theta$ corresponding to parameter $W_{l_1}^{jk}$, the right hand side is the entry of $(L-1)\Psi^{i}(x) $ corresponding to parameter $W_{l_1}^{jk}$. We obtain
\begin{align*}
    \Psi^{i}(x) = \frac{1}{L-1} \mathbf{H}^{i}(x) \theta
\end{align*}
By this result, the gradient of loss w.r.t parameters can be written as:
% \begin{equation}
\begin{align*}
\nabla_{\theta} \mathcal{L}
&= \left(\frac{\partial \mathcal{L}}{\partial F} \frac{\partial F}{\partial \theta}\right)^T = (w^T \mathbf{\Psi}^T)^T = \mathbf{\Psi} w = \frac{\eta}{L-1} \mathbf{H}_w(t) \theta(t)\\
\end{align*}
% \end{equation}
Hence, gradient updates are given by
% \begin{equation} \label{quantities}
\begin{align*}
\theta(t+1) = \left(\mathbf{I} - \frac{\eta}{L-1} \mathbf{H}_w(t) \right) \theta(t)
\end{align*}
% \end{equation}
\end{proof}
\cref{thm:ckhess} provides a geometric interpretation of gradient updates. The directions in parameter space where the largest updates occur are controlled by $\mathbf{H}_w$. A similar result holds for tangent features evolution during early training stages. Let us first introduce a key approximation.
\begin{approximation}[Collinearity at early training stages]\label{approx1_feature_evolution}
At early stages of training, $\tilde{Y}$ and $w$ are almost negatively co-linear. Specifically, $w \approx  - \frac{\|w\|}{\|\tilde{Y}\|} \tilde{Y}$. As a result vectors $\mathbf{\Psi}(t+1) \tilde{Y} - \mathbf{\Psi}(t) \tilde{Y}$ and $- \eta \mathbf{H}_w (t) \mathbf{\Psi}(t) \tilde{Y}$ are highly correlated. 
\end{approximation}
See \cref{justification:approx1_feature_evolution} for a theoretical justification of \cref{approx1_feature_evolution} with empirical evidence of its validity.
% Furthermore, \cref{approx1} yields the following approximation for tangent features evolution during early training stages.
% \begin{approximation}[Feature evolution]\label{approx:feature_evolution}
% The vectors $\mathbf{\Psi}(t+1) \tilde{Y} - \mathbf{\Psi}(t) \tilde{Y}$ and $- \eta \mathbf{H}_w (t) \mathbf{\Psi}(t) \tilde{Y}$ are highly correlated. 
% \end{approximation}
% We refer the reader to \cref{justification:feature_evolution} for more theoretical insights on \cref{approx:feature_evolution}. \cref{fig:approx_feature_evol} shows the correlation between the vectors $U \left(\mathbf{\Psi}(t+1) \tilde{Y} - \mathbf{\Psi}(t) \tilde{Y} \right)$ and $ - U \mathbf{H}_w (t) \mathbf{\Psi}(t) \tilde{Y}$ during training for a depth 10 FFNN on CIFAR10, where $U$ contains top 50 eigenvectors of $U$ as its rows. As expected, they are highly correlated, especially during the early training stages.\\
% \begin{theorem}[Feature evolution]\label{thm:feature_evolution}
% Under \cref{approx1}, a first order approximation of the tangent feature yields
% \begin{equation}\label{eq:feature_evolution}
%     \mathbf{\Psi}(t+1) \tilde{Y}
%     \approx \left(\mathbf{I} - \eta \mathbf{H}_w (t)\right) \mathbf{\Psi}(t) \tilde{Y}
% \end{equation}
% \end{theorem}

\cref{thm:ckhess} and \cref{approx1_feature_evolution} 
% , the directions of the largest changes in parameters $\theta$ are highly correlated with the directions of the largest changes in the vector $\mathbf{\Psi} \tilde{Y}$ which is the key component in the CKA. This 
provides a link between the EH and effective feature learning.
In \cref{sec:EH}, we explained our intuition on how efficient parameter updates take place in layers at information equilibrium. On the other hand, \cref{thm:ckhess} and \cref{approx1_feature_evolution} suggest that the directions with the largest updates in parameters $\theta$ are highly correlated with the directions for the largest updates in $\mathbf{\Psi}\tilde{Y}$. Hence, $\mathbf{\Psi}_l \tilde{Y}$ are updated more effectively at layers with information equilibrium, leading to larger alignment value. 
%See \cref{app:effective_updates} for more detail.
 
% \begin{figure}[H]
% \centering
%   \includegraphics[width=\linewidth]{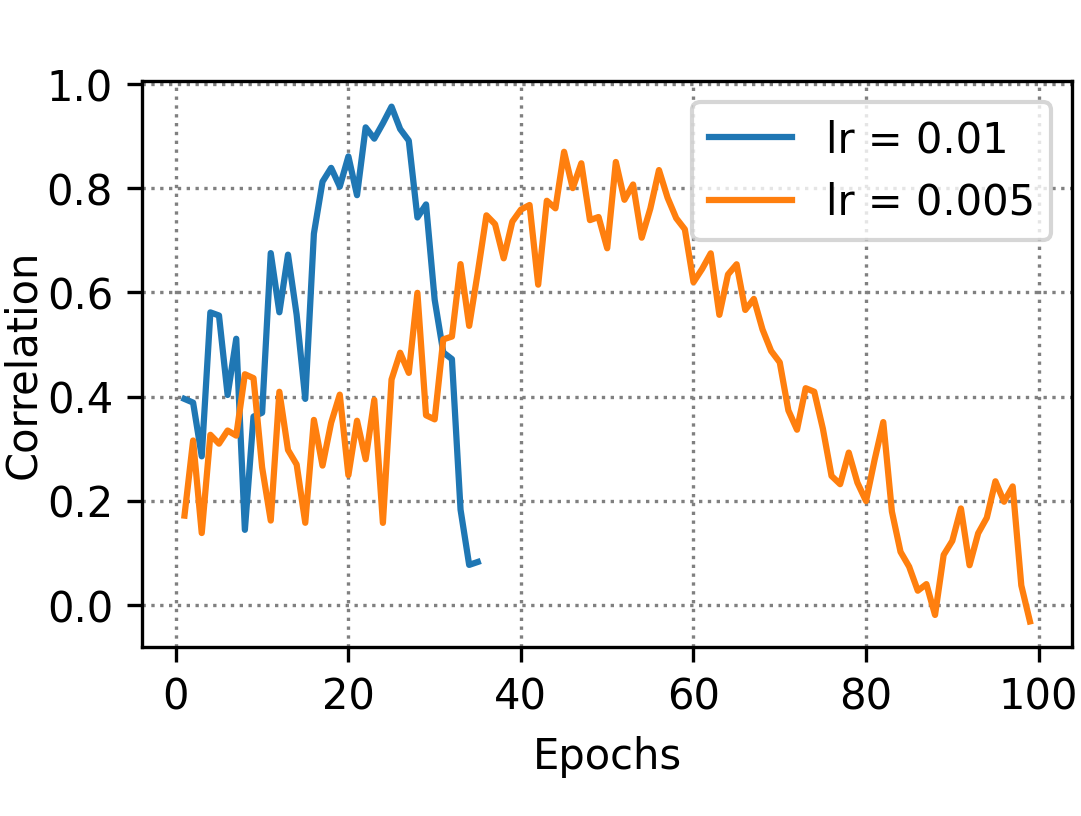}
%   \caption{Placeholder caption.}
% \label{fig:corr}
% \end{figure}
% \begin{itemize}
%     \item Experiment 1: show that $\Psi \tilde{Y}$ evolves in that way: plot alignment between \fTBD{Is it possible to show that $\Psi \tilde{Y}$ and $\theta$ evolve approximately in the same directions? \textcolor{red}{yes it is, we could eigendecompose $H_w$ and calculate some alignments, I have already done that with MNIST, FMNIST and cifar10. Plotting it now}}
%     % \item Experiment 2: show that highest change in parameters correspond to highest increase in alignment
%     \item illustrate bias in eigenvector of $H_w$
% \end{itemize}

To confirm this intuition, we train an depth 10 FFNN on CIFAR10 and show in \cref{fig:eigenvectors_H} the norm of the top 100 eigenvectors of $\mathbf{H}_w$ (corresponding to top 100 eigenvalues in absolute value) projected to 3 layers, extreme layer 1 and 10 and middle layer 7 (by projection to layer $l$ we refer to the truncation of the vector to leave just the sub-vector that corresponds to layer $l$). During the sharp increase phase in alignments (middle subfigure in \cref{fig:eigenvectors_H}), the top eigenvectors of the Hessian are more concentrated on intermediate layers, suggesting that the sharpest increase in alignment occurs in those intermediate layers.

\emph{Remark.} The hessian of the loss is given by $\frac{\partial^2 \mathcal{L}}{\partial \theta ^2} = \mathbf{H}_w + \mathbf{\Psi} \frac{\partial ^2 \mathcal{L}}{\partial F^2} \mathbf{\Psi}^T:= S + I$. Previous works on second order geometry of DNNs (e.g. \cite{karakida2019universal, ghorbani2019investigation}) focused on large positive eigenvalues of the loss hessian arising mostly from $I$. \cref{approx1_feature_evolution} shows that at the other end of the spectrum, large negative eigenvalues arising from $S$ influences feature learning in \ref{feat_evo_1}. In particular, eigenvectors of $S$ associated with large negative eigenvalues are directions of significant increase in alignment between features and labels.

\begin{figure}[H]
\centering
\begin{subfigure}{.31\textwidth}
\includegraphics[width=\linewidth]{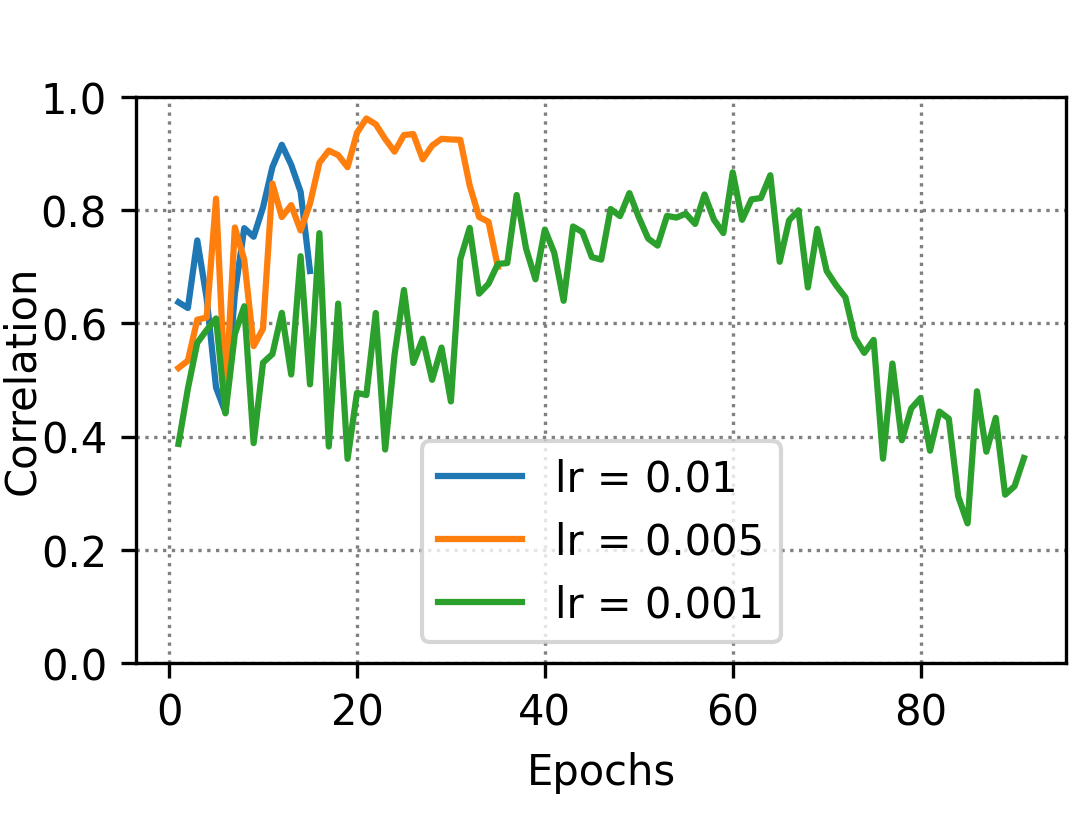}
  \caption{Empirical verification of \cref{approx1_feature_evolution}}
\label{fig:approx_feature_evol}
\end{subfigure}
% \begin{subfigure}{.32\textwidth}
%   \includegraphics[width=\linewidth]{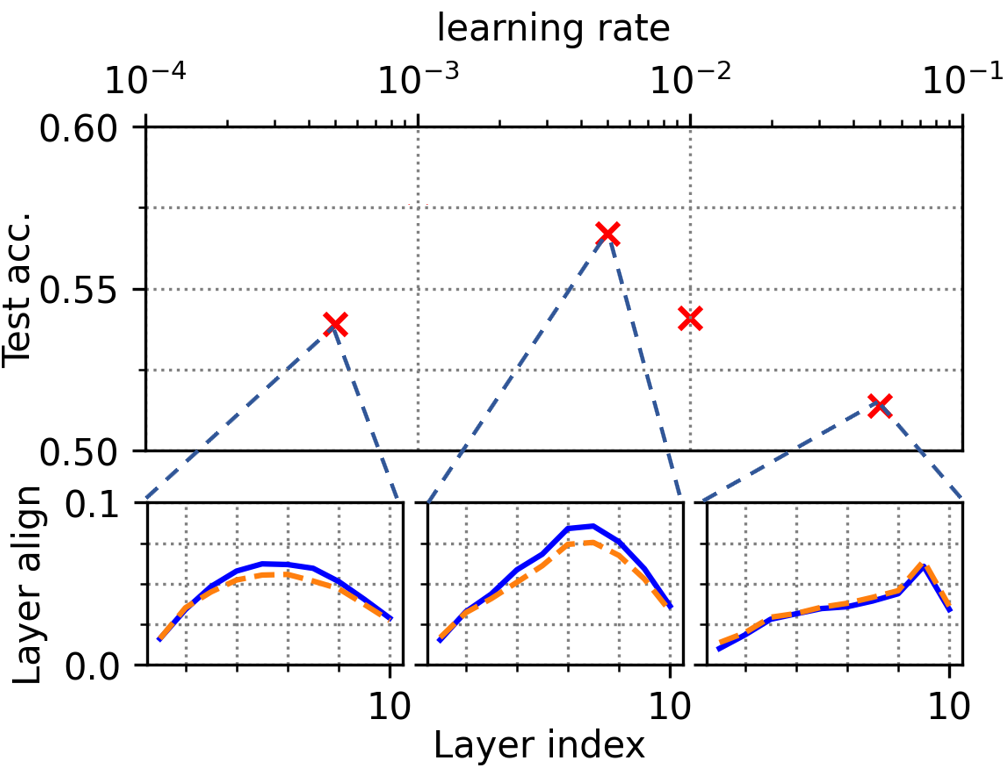}
%   \caption{Generalization VS Hierarchy \textcolor{red}{Seems that the constant in $\Theta(L^{3/5})$ is sensitive to the learning rate.}}
% \label{fig:e_vs_lr}
% \end{subfigure}
\begin{subfigure}{.32\textwidth}
   \includegraphics[width=\linewidth]{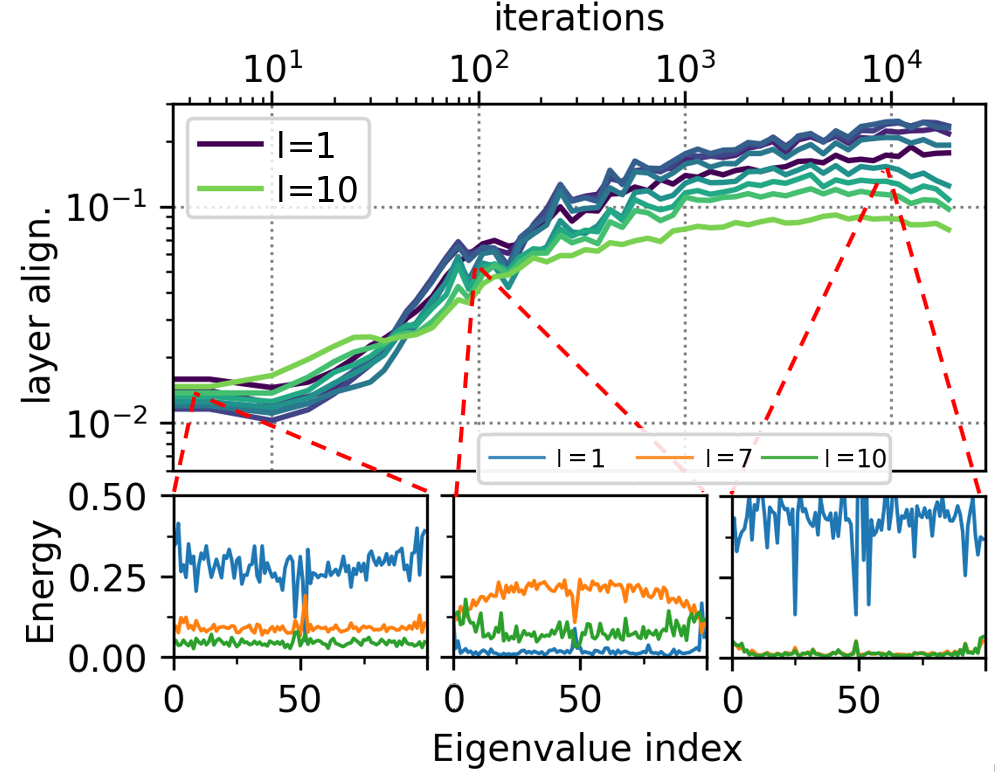}
  \caption{Alignment hierarchy during training}
\label{fig:eigenvectors_H}
\end{subfigure}
\caption{\small{\textbf{(a)} Correlation between vectors $U \left(\mathbf{\Psi}(t+1) \tilde{Y} - \mathbf{\Psi}(t) \tilde{Y} \right)$ and $ - U \mathbf{H}_w (t) \mathbf{\Psi}(t) \tilde{Y}$ for CIFAR10 on a 10-layer FFNN. 
%\textbf{(b)} Better layer alignment leads to better generalisation error, with a 10-layer FFNN on CIFAR10. Bottom panels show layer alignment for a selection of learning rates. 
\textbf{(c)} The norm of the eigenvectors of $\mathbf{H}_w$ (energy in the plot) related to top 100 eigenvalues in absolute value projected to 3 layers: 1, 7, 10 at three training times. The top eigenvalues in absolute value of $\mathbf{H}_w$ are 2, 15, 6 resp ($\mathbf{H}_w$ has a symmetric eigenspectrum shown in \cite{geiger2019jamming}) for three training times. The plot provides an illustration of why alignment hierarchy arises as more energy of top eigenvectors concentrate on intermediate layers during critical increase in alignments.}}
\end{figure}

\subsection{Justification of \cref{approx1_feature_evolution}}\label{justification:approx1_feature_evolution}

\textbf{\cref{approx1_feature_evolution}} (Collinearity at early training stages)
\textit{(1) At early stages of training, $\tilde{Y}$ and $w$ are almost negatively co-linear. Specifically, $w \approx  - \frac{\|w\|}{\|\tilde{Y}\|} \tilde{Y}$. (2) As a result vectors $\mathbf{\Psi}(t+1) \tilde{Y} - \mathbf{\Psi}(t) \tilde{Y}$ and $- \eta \mathbf{H}_w (t) \mathbf{\Psi}(t) \tilde{Y}$ are highly correlated.}

\subsubsection{Justification of (1)}

% Intuitively speaking, this approximation holds as most training datapoints\fTBD{we have not specified anything about the dataset. Are you assuming datapoints are of similar difficulty? if yes you should make it clear and define what you mean by difficulty.} are of similar difficulty to learn for a well-designed neural network, and at small times, they are optimized uniformly\fTBD{I couldnt understand what you mean. Does "they are optimized uniformly" refer to datapoints?}. 
The intuition behind \cref{approx1_feature_evolution} has roots in the assumption that the dataset is balanced. To see this, let $(x,y) \in \dataset$ and $\mathbb{L}_x = \mathbb{L}(f_\theta(x), y)$ the cross-entropy loss for the datapoint $(x,y)$. Let $c$ be the true label of $x$, and $i \in \{1,..,k\}$ such that $i\neq c$. We have that
\begin{equation}\label{eq:gradient_loss_output_}
\begin{aligned}
    \nabla_{f^i_{\theta}(x)} \mathcal{L} &= \frac{\exp \left(f^{i}_{\theta}(x)\right)}{\sum_{j = 1}^{k} \operatorname{exp}\left(f^{j}_{\theta}(x)\right)}, \;\; i\neq c\\
    % \nabla_{f^c_{\theta}(x)} \mathcal{L} &= \frac{\exp \left(f^{c}_{\theta}(x)\right)}{\sum_{j = 1}^{k} \operatorname{exp}\left(f^{j}_{\theta}(x)\right)} - 1 \\
    \nabla_{f^c_{\theta}(x)} \mathcal{L} &= - \sum_{i \neq c} \frac{\exp \left(f^{i}_{\theta}(x)\right)}{\sum_{j = 1}^{k} \operatorname{exp}\left(f^{j}_{\theta}(x)\right)}
\end{aligned}
\end{equation}
% \begin{align*}
%     \nabla_{f^c_{\theta}(x)} \mathcal{L} = \frac{\exp \left(f^{c}_{\theta}(x)\right)}{\sum_{j = 1}^{C} \operatorname{exp}\left(f^{j}_{\theta}(x)\right)} - 1
% \end{align*}
where $f^i_{\theta}(x)$ is the i-th entry of  $f_{\theta}(x)$. When the dataset is balanced, i.e. the number of datapoint per class is approximately the same for all classes, the corresponding entries of $\tilde{Y}$ satisfy $\tilde{Y}_{x,c} \approx \frac{k-1}{k}$ and $\tilde{Y}_{x,i} \approx \frac{-1}{k}$. At initialization, with random weights, ${f^i_{\theta}(x)}$ are on average similar across choices of $x$ and $i$. Hence, using \cref{eq:gradient_loss_output_}, on average we have $w_{x,c} \approx -\frac{k-1}{k}$ and $w_{x,i} \approx \frac{1}{k}$, thus, $\tilde{Y}$ are almost negatively co-linear. \cref{fig:justify_approx1} illustrates this result on various architectures and datasets (see also \cref{app:further_experiments}). We observe that \cref{approx1_feature_evolution} also holds during early training steps ($\textup{corr}(w,\tilde{Y})\approx -1$ during the first training epoch).

\begin{figure}
\centering
\begin{subfigure}{.23\textwidth}
  \centering
  \includegraphics[width=\textwidth]{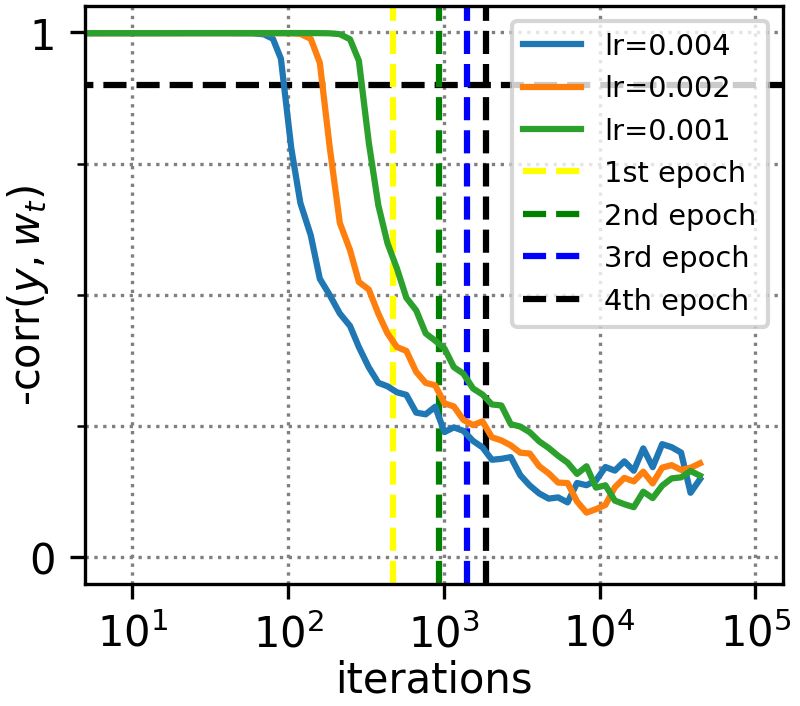}
  \caption{MNIST FFNN}
  \label{fig:sapprox1_1}
\end{subfigure}
\begin{subfigure}{.23\textwidth}
  \centering
  \includegraphics[width=\textwidth]{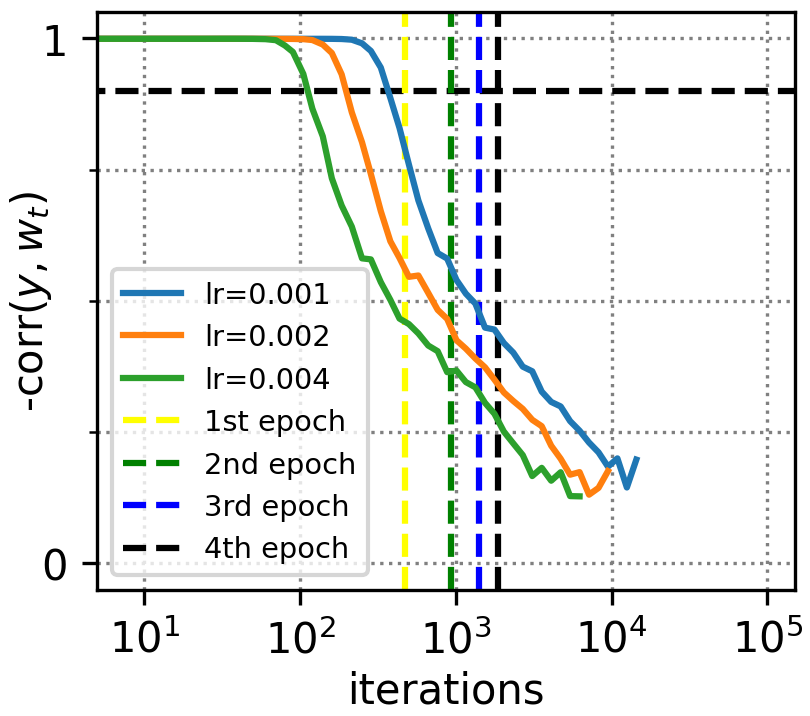}
  \caption{KMNIST FFNN}
  \label{fig:sapprox1_2}
\end{subfigure}
\begin{subfigure}{.23\textwidth}
  \centering
  \includegraphics[width=\textwidth]{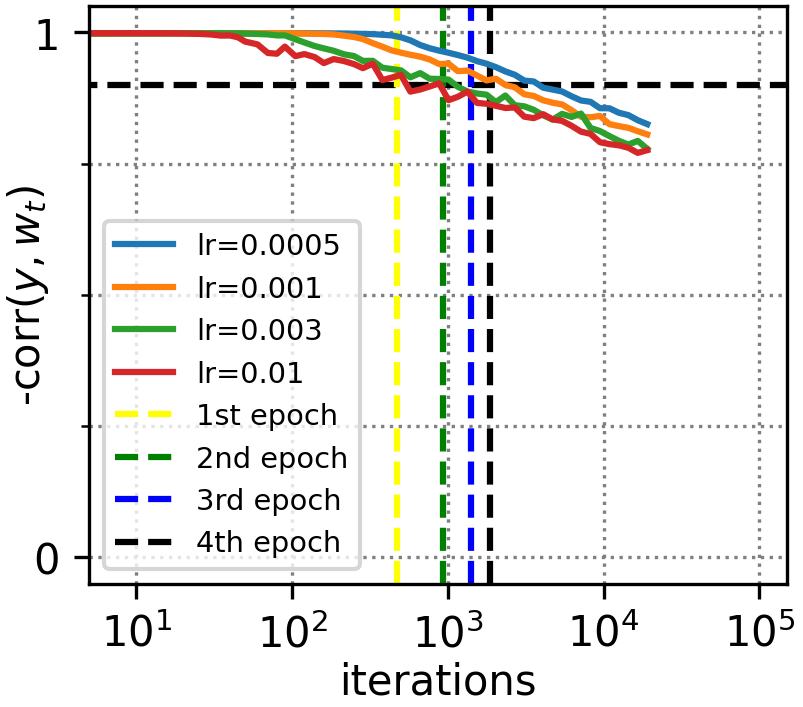}
  \caption{CIFAR10 FFNN}
  \label{fig:sapprox1_3}
\end{subfigure}
\begin{subfigure}{.23\textwidth}
  \centering
  \includegraphics[width=\textwidth]{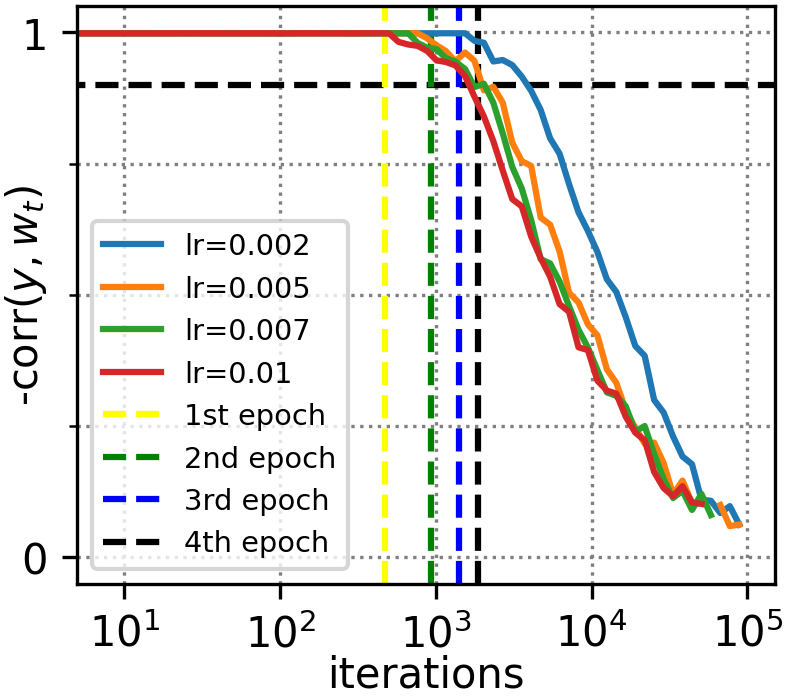}
  \caption{CIFAR10 VGG19}
  \label{fig:sapprox1_4}
\end{subfigure} 

\begin{subfigure}{.23\textwidth}
  \centering
  \includegraphics[width=\textwidth]{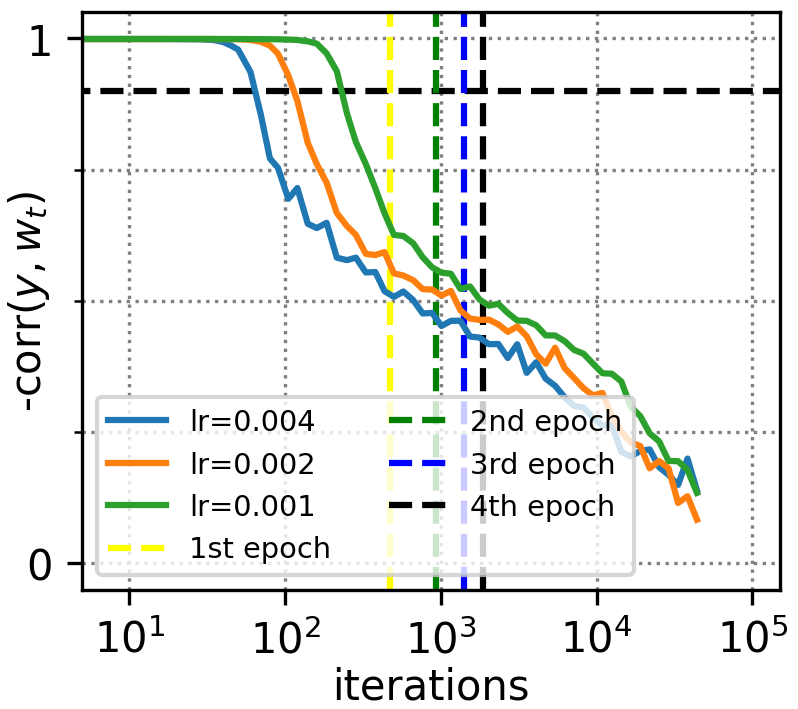}
  \caption{FashionMNIST FFNN}
  \label{fig:sfig1}
\end{subfigure}
\begin{subfigure}{.23\textwidth}
  \centering
  \includegraphics[width=\textwidth]{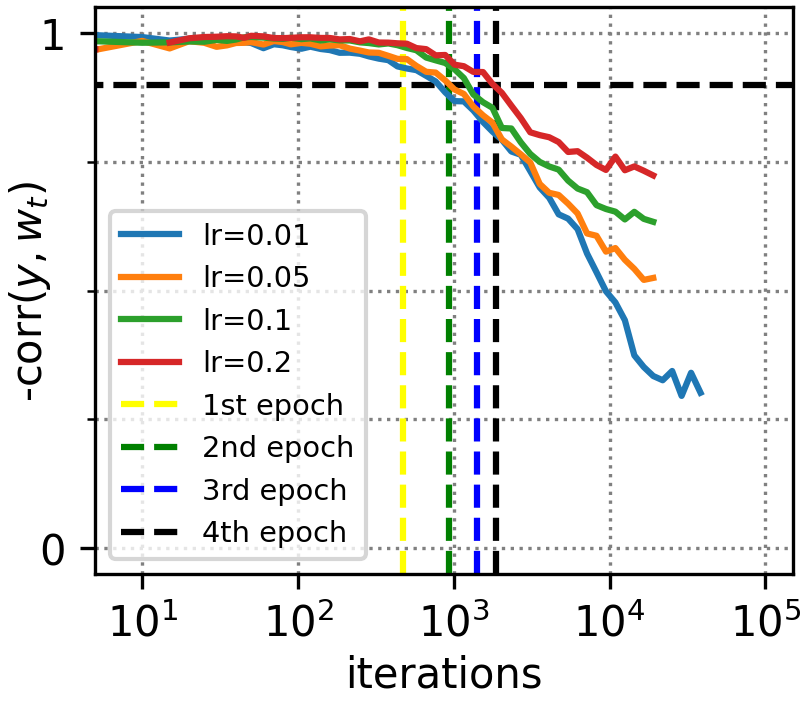}
  \caption{CIFAR100 ResNet18}
  \label{fig:sfig3}
\end{subfigure}
\begin{subfigure}{.23\textwidth}
  \centering
  \includegraphics[width=\textwidth]{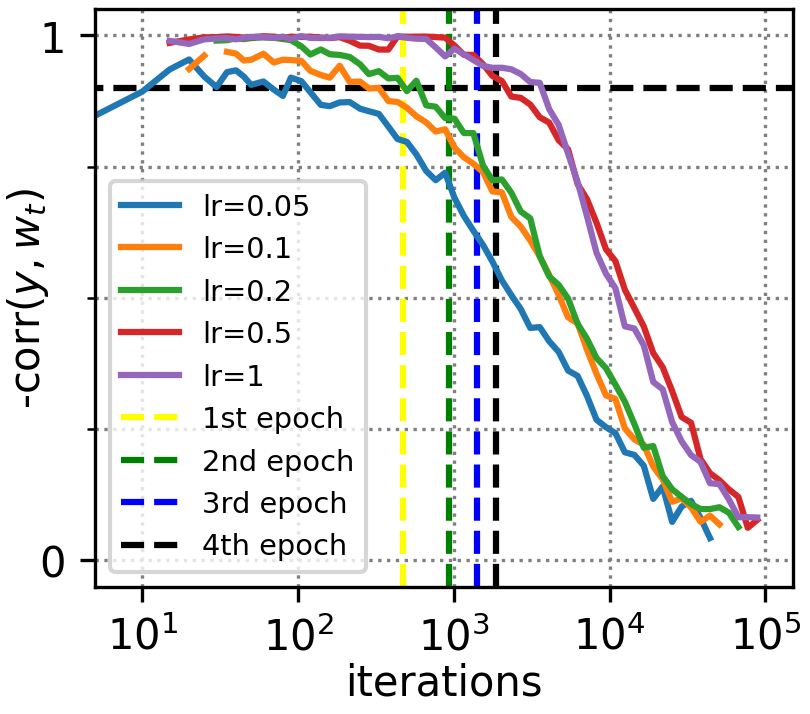}
  \caption{CIFAR10 ResNet18}
  \label{fig:sfig4}
\end{subfigure}
\begin{subfigure}{.23\textwidth}
  \centering
  \includegraphics[width=\textwidth]{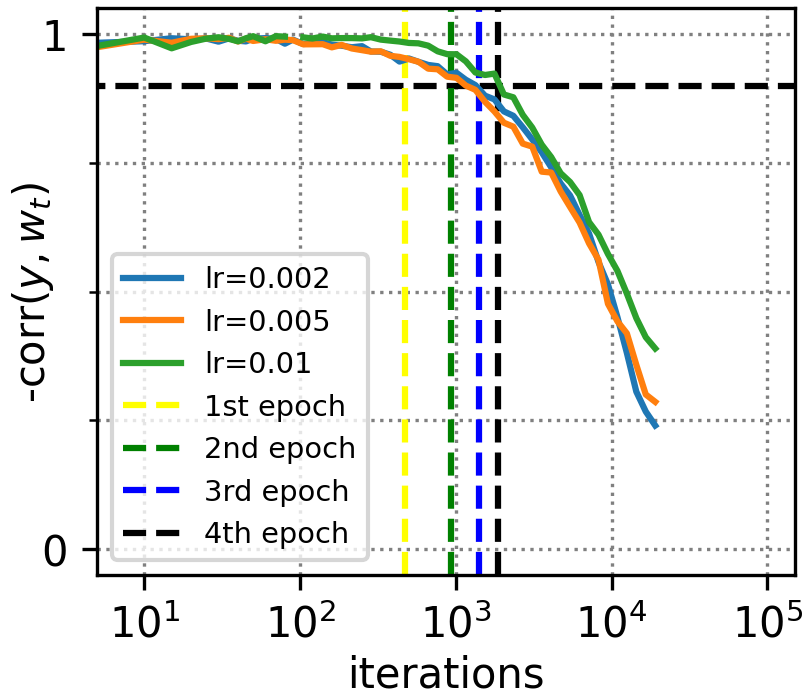}
  \caption{CIFAR100 VGG19}
  \label{fig:sfig2}
\end{subfigure} 

\caption{\small{Empirical validation of \cref{approx1_feature_evolution}: a demonstration that $-\textrm{corr}(\tilde{Y},w_t) \approx 1$ at early $t$.} %\textcolor{red}{Are we gonna include AH results on CIFAR100 with VGG19/RESNET18 in the appendix? otherwise it doesnt make sense to include empirical validation on CIFAR100 with VGG19/RESNET18}
}\label{fig:justify_approx1}
\end{figure}

\subsubsection{Justification of (2)}\label{justification:feature_evolution}

(2) $\mathbf{\Psi}(t+1) \tilde{Y} - \mathbf{\Psi}(t) \tilde{Y}$ and $- \eta \mathbf{H}_w (t) \mathbf{\Psi}(t) \tilde{Y}$ are highly correlated.

We will first need \cref{approx_time_lr}.

\begin{approximation}[$1^{st}$ order approximation]\label{approx_time_lr}
\begin{align*}
    \Psi^{i}(x)(t+1) - \Psi^{i}(x)(t) = \mathbf{H}^{i}(x)(t) 
    \left(\theta(t+1) - \theta(t)\right) + \mathcal{E}_{x, i}(t)
\end{align*}
\end{approximation}
where $ \mathcal{E}_{x, i}(t)$ includes higher order terms of $\theta(t+1) - \theta(t)$.
We will first justify \cref{approx_time_lr}:
\begin{align*}
    \Psi^{i}(x)(t+1) - \Psi^{i}(x)(t)
    & = \left.\frac{\partial f^{i}(x) }{\partial \theta}\right|_{\theta(t+1)} - \left.\frac{\partial f^{i}(x) }{\partial \theta}\right|_{\theta(t)}\\
   & = \left.\frac{\partial^2 f^{i}(x) }{\partial \theta^2}\right|_{\theta(t)} \left(\theta(t+1) - \theta(t)\right) +\mathcal{E}_{x, i}(t)\\
   &=
   \mathbf{H}^{i}(x)(t) 
    \left(\theta(t+1) - \theta(t)\right)+\mathcal{E}_{x, i}(t)
\end{align*}
The second step is by Taylor expanding $\left.\frac{\partial f^{i}(x) }{\partial \theta}\right|_{\theta(t+1)}$ around $\theta(t)$. Given \cref{approx_time_lr}, for an arbitrary fixed vector $v \in \mathbb{R}^{kn}$, the evolution of $\mathbf{\Psi } v$ can be approximated by:
\begin{equation}
    \begin{aligned}
    % \frac{\partial \theta }{\partial t} &= - \eta \mathbf{\Psi} w\\
    % \frac{\partial \mathbf{\Psi} v}{\partial t} & =\left( \sum_{x, i} v_{x,i}\frac{\partial \mathbf{\Psi}^i(x) }{\partial \theta} \right) \frac{\partial \theta}{\partial t}\\
    % &= \mathbf{H}_v \frac{\partial \theta }{\partial t}\\
    % & = - \eta \mathbf{H}_v \mathbf{\Psi} w\\
    \mathbf{\Psi} (t+1) v - \mathbf{\Psi} (t) v 
    % & = \sum_{x \in \mathcal{D}} \sum_{j = 1}^{C} v_{x,i} \frac{\partial  \mathbf{\Psi}^i(x)}{\partial t}(t+\epsilon_{x,i})\\
    & = \sum_{x \in \mathcal{D}} \sum_{j = 1}^{k} v_{x,i} \left(\mathbf{H}^{i}(x)(t)\left(\theta(t+1) - \theta(t)\right)+\mathcal{E}_{x, i}(t)\right)\\
    % & =\sum_{x \in \mathcal{D}} \sum_{j = 1}^{C} v_{x,i} \frac{\partial  \mathbf{\Psi}^i(x)}{\partial \theta} \frac{\partial \theta}{\partial t}(t)\\
    & = -\eta \sum_{x \in \mathcal{D}} \sum_{j = 1}^{k} v_{x,i} \mathbf{H}^{i}(x)(t) \mathbf{\Psi}(t) w + \sum_{x \in \mathcal{D}} \sum_{j = 1}^{k} v_{x,i} \mathcal{E}_{x, i}(t)\\
    & = -\eta \mathbf{H}_v(t) \mathbf{\Psi}(t) w+ \sum_{x \in \mathcal{D}} \sum_{j = 1}^{k} v_{x,i} \mathcal{E}_{x, i}(t)
    \end{aligned}
\end{equation}
Setting $v = \tilde{Y}$ we get:
% \begin{equation} \label{feat_evo}
%     \begin{aligned}
%     \frac{\partial \mathbf{\Psi} y}{\partial t}
%     & = - \eta \mathbf{H}_y \mathbf{\Psi} w\\
%     & = - \eta \mathbf{H}_w \mathbf{\Psi} y\\
%     \end{aligned}
% \end{equation}
% In the last step we use Assumption \ref{approx1}. Turning this equation again into a discrete form using Assumption \ref{approx_time_lr}, we yield:
\begin{equation} \label{feat_evo_1}
    \begin{aligned}
    \mathbf{\Psi}(t+1) \tilde{Y}
     & = \mathbf{\Psi}(t) \tilde{Y} - \eta \mathbf{H}_{\tilde{Y}} \mathbf{\Psi}(t) w + \sum_{x \in \mathcal{D}} \sum_{j = 1}^{k} \tilde{Y}_{x,i} \mathcal{E}_{x, i}(t)\\
    & \approx \mathbf{\Psi}(t) \tilde{Y} - \eta \mathbf{H}_w \mathbf{\Psi}(t) \tilde{Y}+ \sum_{x \in \mathcal{D}} \sum_{j = 1}^{k} \tilde{Y}_{x,i} \mathcal{E}_{x, i}(t)\\
    &= \left(\mathbf{I} - \eta \mathbf{H}_w \right) \mathbf{\Psi}(t) \tilde{Y} + \sum_{x \in \mathcal{D}} \sum_{j = 1}^{k} \tilde{Y}_{x,i} \mathcal{E}_{x, i}(t)
    \end{aligned}
\end{equation}
In the second step we use \cref{approx1_feature_evolution} (1). As $\sum_{x \in \mathcal{D}} \sum_{j = 1}^{k} \tilde{Y}_{x,i} \mathcal{E}_{x, i}(t)$ contains high order terms in $\eta$, this is small compared to $\left(\mathbf{I} - \eta \mathbf{H}_w \right) \mathbf{\Psi}(t) \tilde{Y}$. Hence, the update in feature vector $\mathbf{\Psi}(t+1) \tilde{Y} - \mathbf{\Psi}(t) \tilde{Y}$ is highly correlated with $- \eta \mathbf{H}_w \mathbf{\Psi}(t) \tilde{Y}$.

\section{A Justification of the choice of Tangent Features and the CKA alignment}\label{app:justif_tf}
Let $(x,y) \in \dataset$ and $\mathbb{L}_x = \mathbb{L}(f_\theta(x), y)$ the cross-entropy loss for the datapoint $(x,y)$. Let $c$ be the true label of $x$, and $i \in \{1,..,k\}$ such that $i\neq c$. We have that
\begin{equation}\label{eq:gradient_loss_output}
\begin{aligned}
    \nabla_{f^i_{\theta}(x)} \mathbb{L} &= \frac{\exp \left(f^{i}_{\theta}(x)\right)}{\sum_{j = 1}^{k} \operatorname{exp}\left(f^{j}_{\theta}(x)\right)}, \;\; i\neq c\\
    % \nabla_{f^c_{\theta}(x)} \mathcal{L} &= \frac{\exp \left(f^{c}_{\theta}(x)\right)}{\sum_{j = 1}^{k} \operatorname{exp}\left(f^{j}_{\theta}(x)\right)} - 1 \\
    \nabla_{f^c_{\theta}(x)} \mathbb{L} &= - \sum_{i \neq c} \frac{\exp \left(f^{i}_{\theta}(x)\right)}{\sum_{j = 1}^{k} \operatorname{exp}\left(f^{j}_{\theta}(x)\right)}
\end{aligned}
\end{equation}

Hence, for any datapoint $(x,y)$, we have that

$$\nabla_{f_{\theta}(x)} \mathbb{L} = \textup{softmax}(f(x))  - y.$$

It is straightforward that update of the network output (evaluated on the whole dataset $(X,Y)$) with one gradient step is given by (see e.g. \cite{jacot2018NTK})

$$df_t(X) = - \eta \hat{\mathbf{K}}  \nabla_y\mathbb{L}(f_t(X), Y),$$
where $\hat{\mathbf{K}}^L$ is the tangent kernel matrix, $f_t(X), Y \in \reals^{on}$ are the concatenated vectors of $(f(x_i))_{1 \leq i \leq n}$ and $(y)_{1 \leq i \leq n}$, and $\eta = \textup{LR}/n$ is the normalized learning rate. Consider the case of large width $N \gg 1$. At initialization, on average, the network output is a random classifier. 
$$
\mathbb{E}_W\left[\frac{d}{dt} f_t(X) \right] \approx \hat{\mathbf{K}} C Y,
$$
where we have used the gradient independence result from \cref{lemma:gradient_independence} and the approximation that the NTK is almost deterministic in the large width limit (see \cite{jacot2018NTK}). This yields
$$
\mathbb{E}_W\left[\frac{d}{dt} y^T f_t(X) \right] \approx  y^T \hat{\mathbf{K}} y.
$$
The alignment between the tangent kernel and data labels has a direct impact on how fast the network output aligns with the true labels.\\

To measure the role played by each layer, let us consider the scenario when we freeze all but the $l^{th}$ layer parameters, the previous dynamics become 
$$
\mathbb{E}_W\left[\frac{d}{dt} y^T \Tilde{f}_t(X) \right] \approx  y^T \hat{\mathbf{K}}_l y ,
$$
where $\hat{\mathbf{K}}_l$ is the tangent kernel matrix for layer $l$. Observe that 
$$y^T \hat{\mathbf{K}}_l y \leq \rho(\hat{\mathbf{K}}_l) \|y\|^2 \leq \textup{Tr}(\hat{\mathbf{K}}_l)\|y\|^2,$$
with equality if and only if the kernel matrix $\hat{\mathbf{K}}_l$ is perfectly aligned with the data labels matrix $y y^T$. Therefore, the alignment $A_l$ has a direct impact on the alignment between the data labels and output function (note that a perfect alignment between $y$ and $f_t(X)$ indicates $100\%$ classification accuracy).

% \section{Justification of the Equilibrium Hypothesis}
% Let us consider the simplest case with $o=1$ and the squared error loss function. The gradient flow dynamics yield

% $$df_t(X) = - \hat{\mathbf{K}}  (f_t(X), Y) dt,$$
% where $\hat{\mathbf{K}}$ is the tangent kernel matrix.

\section{Optimal Feature Evolution Scheme}
\citet{shan2021rapid} proposed Optimal Feature Evolution (OFE) scheme to model the evolution of tangent features during gradient descent (GD) training. Under OFE, the tangent features $\mathbf{\Psi}$ evolve greedily so that the change in empirical loss $\mathcal{L}$ is maximised at each time step.  However, it is not verified empirically if OFE matches any variants of GD methods. In the following, we propose Generalised Optimal Feature Evolution (GOFE) scheme to capture GD methods more closely and gain insights into the evolution of layerwise CKA.

\subsection{Optimal Feature Evolution with fixed learning rates}
In \cite{shan2021rapid}, the optimal feature evolution paradigm is only given for MSE loss. The following is a summary of OFE evolution scheme for any twice differentiable loss $\mathcal{L}$. We inherit notation from \cref{app:proofs}.

By GD training dynamics we have:

\begin{equation}
    \begin{aligned}
    \frac{\partial \theta^T}{\partial t} &= - \eta \frac{\partial \mathcal{L}}{\partial \theta} = - \eta \frac{\partial \mathcal{L}}{\partial F} \frac{\partial F}{\partial \theta}\\
    & = - \eta w^T \mathbf{\Psi}^T\\
    \Rightarrow \frac{\partial \theta}{\partial t} &= - \eta \mathbf{\Psi} w
    \end{aligned}
\end{equation}

The evolution of $\mathcal{L}$ is:

\begin{equation}
    \begin{aligned}
    \frac{\partial \mathcal{L}}{\partial t} &= \frac{\partial \mathcal{L}}{\partial F} \frac{\partial F}{\partial \theta} \frac{\partial \theta}{\partial t} = - \eta w^T \mathbf{\Psi}^T \mathbf{\Psi} w\\
    \end{aligned}
\end{equation}

Using the same argument as in \cite{shan2021rapid}, we optimise the term $w^T \mathbf{\Psi}^T \mathbf{\Psi} w$ w.r.t $\mathbf{\Psi}$ by evolving $\mathbf{\Psi}$ in the direction of largest decrease in $-w^T \mathbf{\Psi}^T \mathbf{\Psi} w$ with a learning rate of $\lambda$. This yield:

\begin{equation}
    \begin{aligned}
    \frac{\partial \mathbf{\Psi}^{T}}{\partial t}&= - \lambda \frac{\partial \left(- \eta w^T \mathbf{\Psi}^T \mathbf{\Psi} w\right)}{\partial \mathbf{\Psi}} \\
    \Rightarrow \frac{\partial \mathbf{\Psi}}{\partial t}&= 2 \lambda \eta \mathbf{\Psi} w w^{\top} \\
    \end{aligned}
\end{equation}

We could absorb the $2$ factor in this equation into $\lambda$ to produce the dynamics:

\begin{equation} \label{OFE}
    \begin{aligned}
    \frac{\partial \mathbf{\Psi}}{\partial t}&=\lambda \eta \mathbf{\Psi} w w^{\top}\\
    \frac{\partial w}{\partial t}&=- \eta \frac{\partial w}{\partial \theta} \mathbf{\Psi}^T \mathbf{\Psi} w 
    \end{aligned}
\end{equation}

At a first look, this model adds an interesting layer of complexity over fixed tangent kernel learning: the kernel evolves in predictable and alignment-boosting ways. However, there's no strong empirical evidence that OFE captures any variant of GD training.

% A direct generalisation of this dynamic is where layerwise tangent vectors evolve at a layer-specific learning rate:

% \begin{equation} \label{OFE2}
%     \begin{aligned}
%     \frac{\partial \mathbf{\Psi}_l}{\partial t}&=\lambda_l \eta \mathbf{\Psi} w w^{\top}\\
%     \frac{\partial w}{\partial t}&=- \eta \frac{\partial w}{\partial \theta} \mathbf{\Psi}^T \mathbf{\Psi} w = - \eta \frac{\partial w}{\partial \theta} \sum_{l }  \mathbf{\Psi}_l^T \mathbf{\Psi}_l w
%     \end{aligned}
% \end{equation}

\subsection{Generalised Optimal Feature Evolution and GD training}
To better capture GD/SGD/NGD dynamics, we first introduce the paradigm of Generalised Optimal Feature Evolution (GOFE):

% The generalised feature evolution scheme:

\begin{equation}
    \begin{aligned}
    \frac{\partial \mathbf{\Psi}}{\partial t}
    &= \eta \mathbf{V} \mathbf{\Psi} w w^T \\
    % \frac{\partial \theta}{\partial t}&= -\eta \Psi \Delta\\
    \frac{\partial w}{\partial t}&= -\eta \frac{\partial w}{\partial F} \mathbf{\Psi}^T \mathbf{A} \mathbf{\Psi} w = - \eta\frac{\partial ^2 \mathcal{L}}{\partial F^2} \mathbf{\Psi}^T \mathbf{A} \mathbf{\Psi} w
    \end{aligned}
\end{equation}

where $\mathbf{V}$ is a velocity vector that may or may not depend on time step $t$, $\mathbf{A}$ is a time-dependent matrix describing the training procedure used (i.e. for full batch GD, $\mathbf{A}$ is simply the identity matrix, and for natural gradient descent, $\mathbf{A}$ is the inverse of Fisher Information Matrix). $\frac{\partial w}{\partial F}$ gradient of $w$ w.r.t to the output F. The last equality is due to definition of $w^T = \frac{\partial \mathcal{L}}{\partial F}$. Note that in OFE, $\mathbf{V}$ is simply a diagonal matrix with diagonal entries $\lambda$.
Note that in practice the feature evolution is realised by the set of difference equations:

\begin{equation}
    \begin{aligned}
    \Delta_t( \mathbf{\Psi}) &= \eta \mathbf{V}_t \mathbf{\Psi}_t w_t w_t^T \\
    \Delta_t(\theta) & = - \eta \mathbf{A}_t \mathbf{\Psi}_t w_t\\
    % \frac{\partial \theta}{\partial t}&= -\eta \Psi \Delta\\
    % \frac{\partial w}{\partial t}&= -\eta \frac{\partial w}{\partial F} \mathbf{\Psi}^T \mathbf{A} \mathbf{\Psi} w = - \eta\frac{\partial ^2 \mathcal{L}}{\partial F^2} \mathbf{\Psi}^T \mathbf{A} \mathbf{\Psi} w\\
    \end{aligned}
\end{equation}

This would allow us to conduct several calculations exactly in the following. The $t$ index each time step.

It turns out that for each gradient descent dynamics, be it full batch or stochastic GD or natural gradient descent, there is an equivalent formulation of its training dynamics in terms of GOFE. By 'equivalent' we mean that at each time step, the gradient propagated to the weights of the network is the same. To achieve this equivalence we need the following two dynamics of gradient changes to be the same:

Under GOFE:
\begin{equation}
    \begin{aligned}
    \frac{\partial (\mathbf{\Psi} w)}{\partial t}
    &= \frac{\partial (\mathbf{\Psi})}{\partial t}  w + \mathbf{\Psi}   \frac{\partial (w)}{\partial t}\\
    & = \eta \mathbf{V} \mathbf{\Psi} w w^T w - \eta \mathbf{\Psi} \frac{\partial ^2 \mathcal{L}}{\partial F^2} \mathbf{\Psi}^T \mathbf{A} \mathbf{\Psi} w \\
    & = \eta \|w\|^2 \mathbf{V} \mathbf{\Psi} w - \eta \mathbf{\Psi} \frac{\partial ^2 \mathcal{L}}{\partial F^2} \mathbf{\Psi}^T \mathbf{A}\mathbf{\Psi} w  \\
    \end{aligned}
\end{equation}

Under gradient descent with gradient adjustment matrix $\mathbf{A}$:
\begin{equation}
    \begin{aligned}
    \frac{\partial (\mathbf{\Psi} w)}{\partial t}
    &= \frac{\partial (\mathbf{\Psi} w)}{\partial \theta} \frac{\partial \theta}{\partial t} =\frac{\partial^2 \mathcal{L}}{\partial \theta ^2} (- \eta \mathbf{A} \Psi w)
    = - \eta \frac{\partial^2 \mathcal{L}}{\partial \theta ^2} \mathbf{A} \mathbf{\Psi} w\\
    & = -\eta \mathbf{H}_{w} \mathbf{A} \mathbf{\Psi} w - \eta \mathbf{\Psi} \frac{\partial ^2 \mathcal{L}}{\partial F^2} \mathbf{\Psi}^T \mathbf{A} \mathbf{\Psi} w
    % & = \eta V \Psi \Delta \Delta^T \Delta - \eta \frac{1}{N} \Psi \Psi^T \Psi \Delta \\
    % & = \|\Delta\|^2 \text{diag}(v_1 \eta, v_1\eta,...,v_l\eta, v_l\eta,...v_L \eta) \Psi \Delta - \eta I \Psi \Delta \\
    \end{aligned}
\end{equation}

During the derivation we have used a well known decomposition of the loss hessian:

\begin{equation}
    \begin{aligned}
    \frac{\partial^2 \mathcal{L}}{\partial \theta ^2} = \mathbf{H}_w + \mathbf{\Psi} \frac{\partial ^2 \mathcal{L}}{\partial F^2} \mathbf{\Psi}^T
    \end{aligned}
\end{equation}

where $\mathbf{H}_w$ is the same as $\mathbf{H}_w$ in \cref{app:EH_early_training}. Equating the two dynamics we need:

\begin{equation}
    \begin{aligned}
    &\eta \|w\|^2 \mathbf{V} \mathbf{\Psi} w - \eta \mathbf{\Psi} \frac{\partial ^2 \mathcal{L}}{\partial F^2} \mathbf{\Psi}^T \mathbf{A} \mathbf{\Psi} w = -\eta \mathbf{H}_{w} \mathbf{A} \mathbf{\Psi} w - \eta \mathbf{\Psi} \frac{\partial ^2 \mathcal{L}}{\partial F^2} \mathbf{\Psi}^T \mathbf{A} \mathbf{\Psi} w\\
    &\Longleftrightarrow (\|w\|^2 \mathbf{V} + \mathbf{H}_{w} \mathbf{A}) \mathbf{\Psi} w = 0\\
    % &\Longleftrightarrow (\|w\|^2 \mathbf{V} + \mathbf{H}_{w} \mathbf{A}) \mathbf{H}_{w} \theta  = 0
    \end{aligned}
\end{equation}

We hence set $\mathbf{V} = - \frac{\mathbf{H}_{w} \mathbf{A}}{\|w\|^2}$. Under the assumption that $w$ and $y$ are highly correlated at early training, we could directly derive the evolution of $\mathbf{\Psi}$. In fact, under GD ($\mathbf{A}$ being identity matrix):

\begin{align}
    \frac{\partial \mathbf{\Psi}}{\partial t}
    = - \eta \frac{\mathbf{H}_{w} \mathbf{A}}{\|w\|^2} \mathbf{\Psi} w w^T
    = - \eta \mathbf{H}_{w} \mathbf{\Psi} \frac{w w^T}{\|w\|^2}
    = - \eta \mathbf{H}_{w} \mathbf{\Psi} \frac{y y^T}{\|y\|^2}
    % \frac{\partial \theta}{\partial t}&= -\eta \Psi \Delta\\
\end{align}

We could test GOFE against results derived without it, in fact multiplying $y$ to the above equation yields:
\begin{equation}\label{eq:gofe_feature_evolution}
\begin{aligned} 
    \frac{\partial \mathbf{\Psi}y}{\partial t} &= - \eta \mathbf{H}_{w} \mathbf{\Psi} \frac{y y^T}{\|y\|^2} y
    = - \eta \mathbf{H}_{w} \mathbf{\Psi} y
\end{aligned}
\end{equation}

which is the continuous version of \cref{feat_evo_1}. Also for any fixed vector $u$ orthogonal to $y$, we have: 
\begin{align}
    \frac{\partial \mathbf{\Psi}u}{\partial t} &= - \eta \mathbf{H}_{w} \mathbf{\Psi} \frac{y y^T}{\|y\|^2} u
    = 0
\end{align}
This relates to the increase in tangent kernel anistropy in \cite{baratin2021implicit}, as $u^T\mathbf{\Psi}^T\mathbf{\Psi}u$ stays constant over training while $y^T\mathbf{\Psi}^T\mathbf{\Psi}y$ increases sharply due to large negative eigenvalues in $\mathbf{H}_w$.

\subsection{Explaining the Hierarchy using feature evolution scheme}
\cref{eq:gofe_feature_evolution} gives us a way to describe $\mathbf{\Psi}(t+1) y $ as $\mathbf{H}(t) \mathbf{\Psi}(t) y $ for some matrix $\mathbf{H}(t) = \left(\mathbf{I}- \eta \mathbf{H}_{w}(t)\right) $ which also describes evolution of parameters. Take an orthogonal basis consisting of $u_0 = \frac{y}{\|y\|}, u_1 = \frac{1}{\sqrt{kN} }(1,1...,1)^T, u_2, ..., u_N \in \mathbb{R}^{kN}$, $\mathbf{U}$ be the $kN \times kN$ matrix with $u_i$ as columns and we would have:
\begin{equation} \label{eq:hierarchy}
    \begin{aligned}
    A_l(t+1) &=  \frac{y^T \mathbf{\Psi}(t+1)^T \mathbf{M}_l \mathbf{\Psi}(t+1) y}{\|\mathbf{\Psi}(t+1)^T \mathbf{M}_l \mathbf{\Psi}(t+1) \mathbf{C}\|_F \|y\|^2}\\
    & =  A_l(t) \cdot \frac{u_0^T \mathbf{\Psi}(t+1)^T \mathbf{M}_l \mathbf{\Psi}(t+1) u_0}{u_0^T \mathbf{\Psi}(t)^T \mathbf{M}_l \mathbf{\Psi}(t) u_0} \cdot \frac{\|U^T \mathbf{\Psi}(t)^T \mathbf{M}_l \mathbf{\Psi}(t) \mathbf{C} U\|_F }{\|U^T \mathbf{\Psi}(t+1)^T \mathbf{M}_l \mathbf{\Psi}(t+1) \mathbf{C} U\|_F}\\
   & \approx A_l(t) \cdot \frac{u_0^T \mathbf{\Psi}(t)^T \mathbf{H}(t)^T \mathbf{M}_l \mathbf{H}(t) \mathbf{\Psi}(t) u_0}{u_0^T \mathbf{\Psi}(t)^T \mathbf{M}_l \mathbf{\Psi}(t) u_0}\\
   & = A_l(t) \cdot \frac{\theta(t)^T \mathbf{H}_y(t)^T \mathbf{H}^T \mathbf{M}_l \mathbf{H} \mathbf{H}_y(t)  \theta(t)}{\theta(t)^T \mathbf{H}_y(t)^T \mathbf{M}_l \mathbf{H}_y(t)  \theta(t)}\\
  & \approx A_l(t) \cdot \frac{\operatorname{tr}\left(\mathbf{H}_y(t)^T \mathbf{H}(t)^T \mathbf{M}_l \mathbf{H}(t) \mathbf{H}_y(t) \right)}{\operatorname{tr}\left(\mathbf{H}_y(t)^T \mathbf{M}_l \mathbf{H}_y(t) \right)}\\
  & \approx A_l(t) \cdot \frac{\operatorname{tr}\left(\mathbf{M}_l \left(\mathbf{I}-\eta  \mathbf{H}_{w}(t) \right) \mathbf{H}_w^2(t) \left(\mathbf{I}-\eta  \mathbf{H}_{w}(t) \right) \right)}{\operatorname{tr}\left(\mathbf{M}_l \mathbf{H}_w^2(t) \right)}\\
  & = A_l(t) - 2 \eta  \frac{\operatorname{tr}\left(\mathbf{M}_l  \mathbf{H}_w^3(t) \right) }{\operatorname{tr}\left(\mathbf{M}_l \mathbf{H}_w^2(t) \right)} + \eta^2  \frac{\operatorname{tr}\left(\mathbf{M}_l  \mathbf{H}_w^4(t) \right)}{\operatorname{tr}\left(\mathbf{M}_l \mathbf{H}_w^2(t) \right)}
    % & = A_l(0) \frac{\theta(0)^T \mathbf{H}_y^T \mathbf{M}_l \mathbf{H}_y \theta(0)}{\operatorname{tr}\left( \mathbf{M}_l \mathbf{\Psi}(0) \mathbf{C} \mathbf{\Psi}(0)^T \mathbf{M}_l \mathbf{\Psi}(0) \mathbf{C} \mathbf{\Psi}(0)^T \right)^{1/2} \|y\|^2}\\
    \end{aligned}
\end{equation}
The first approximation holds in the case of large $N$ and the second approximation is based on the assumption that $\theta(t)$ is independent from $\mathbf{H}_y(t)$ and $\mathbf{H}(t)$, and each entry is drawn from i.i.d normal distribution.The second approximation has its roots in Gradient Independence \cref{section:gradient_independence}. The third approximation uses \cref{approx1_feature_evolution}. This derivation illustrates that hierarchical structure of CKA likely arise out of bias in $\mathbf{H}_w$'s third and fourth moment. Actually, for common learning rates of $\approx 0.005$ used for deep networks, $\mathbf{H}_w$'s largest positive eigenvalue is usually around $5-15$, hence the third part of \cref{eq:hierarchy} is dominated by the first and second part which is around $0.01 - 0.07$. We later empirically illustrate interesting structural bias in $\mathbf{H}_w(t)$. Let $\mathbf{V}(t)$ diagonalises $\mathbf{H}_w(t)$:
\begin{align*}
    \frac{\operatorname{tr}\left(\mathbf{M}_l  \mathbf{H}_w^3(t) \right) }{\operatorname{tr}\left(\mathbf{M}_l \mathbf{H}_w^2(t) \right)} & = \frac{\operatorname{tr}\left(\mathbf{V}(t)^T \mathbf{M}_l  \mathbf{V}(t) \mathbf{V}(t)^T\mathbf{H}_w^3(t) \mathbf{V}(t) \right) }{\operatorname{tr}\left(\mathbf{V}(t)^T \mathbf{M}_l  \mathbf{V}(t) \mathbf{V}(t)^T\mathbf{H}_w^2(t) \mathbf{V}(t) \right)}\\
    & = \frac{\sum_{i} c_i(t) \lambda_i^3}{\sum_{i} c_i(t) \lambda_i^2}
\end{align*}

% The approximation comes from the empirical observation that for any time $t$ distinct eigenvectors $v_i(t), v_j(t)$ of $\mathbf{H}_w(t)$ satisfies $v_i(t)^T \mathbf{M}_l v_j(t) \approx 0$, and hence $\mathbf{V}(t)^T \mathbf{M}_l  \mathbf{V}(t)$ is approximately diagonal as in \textcolor{red}{ADDPLOTT}. 
where $c_i(t) := v_i(t)^T \mathbf{M}_l v_i(t)$ and $\lambda_i(t)$ is the eigenvalue corresponding to $v_i(t)$. The quantity is a weighted average of all eigenvalues of $\mathbf{H}_w(t)$.

\section{Further experimental results}\label{app:further_experiments}

\begin{figure}[H]
\centering
\begin{subfigure}{.48\textwidth}
  \centering
  \includegraphics[width=\textwidth]{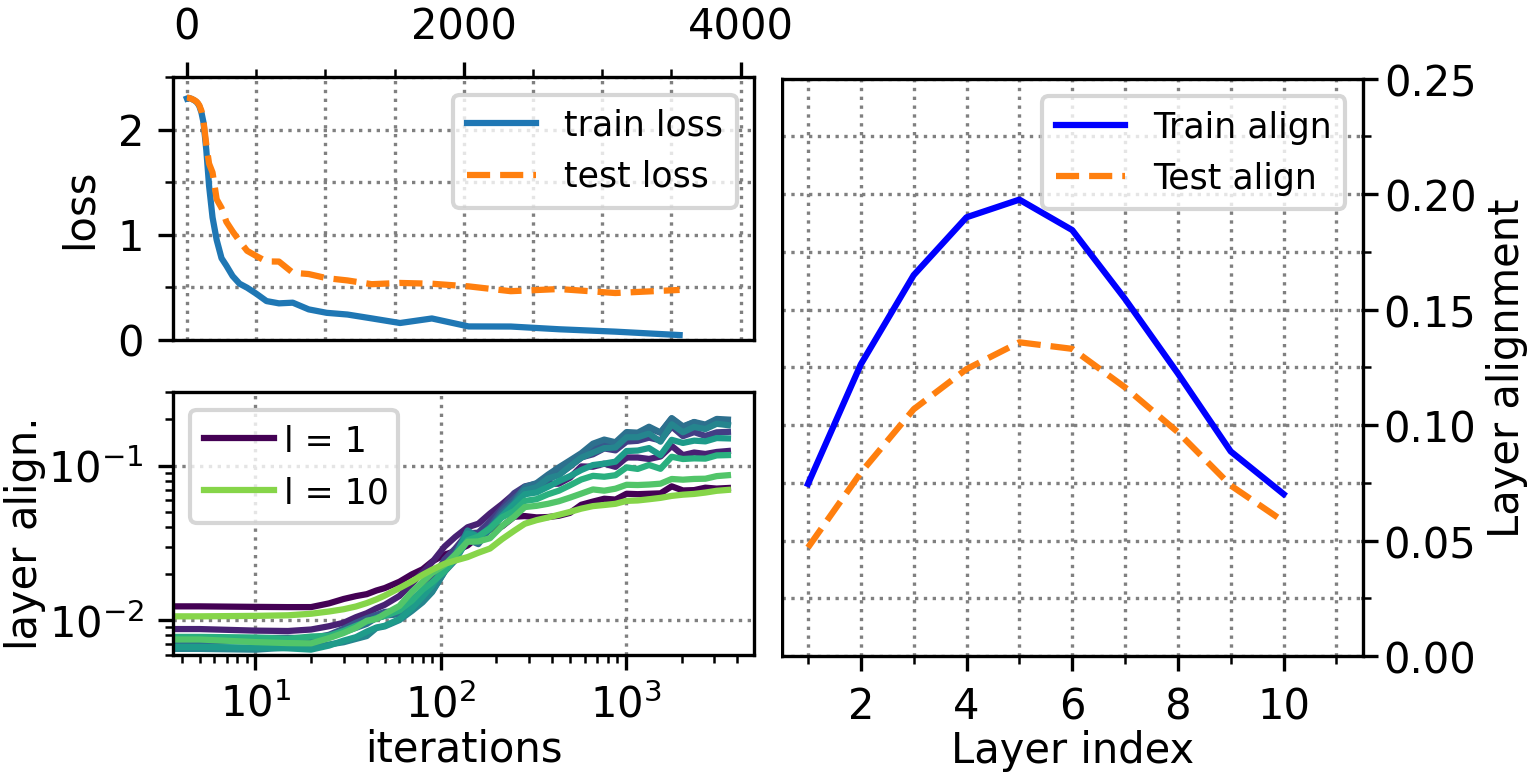}
  \caption{KMNIST}
  \label{fig:sfigkmnist}
\end{subfigure}
\begin{subfigure}{.48\textwidth}
  \centering
  \includegraphics[width=\textwidth]{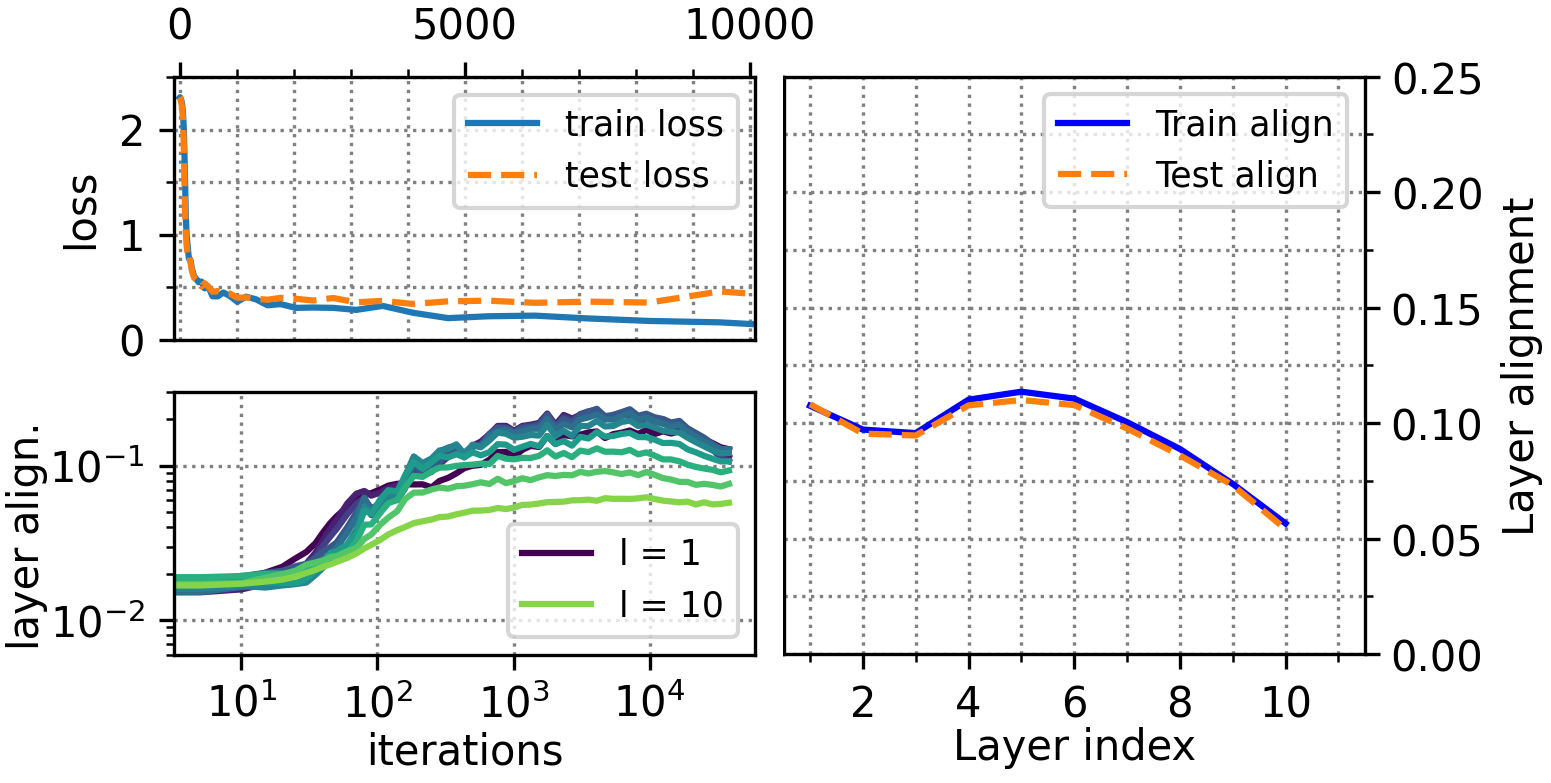}
  \caption{Fashion MNIST}
  \label{fig:sfigfmnist}
\end{subfigure}
\caption{Supplementary experiments for \cref{fig:align_example}.
Layerwise alignment hierarchy for the KMNIST and Fashion MNIST datasets when trained on a FFNN with depth 10 and width 256. Left hand panels show progression of loss and layer alignment with iterations of SGD. Right hand panel shows layer alignment at the end of training. Experiments above and in \cref{fig:align_example} used 10 layer FFNNs with 256 neurons in each layer, and were optimised with SGD with weight decay, momentum, and learning rates of 0.003.
}
\label{fig:align_example_supp}
\end{figure}

\begin{figure}[H]
\centering
  \begin{subfigure}{0.31\linewidth}
  \includegraphics[width=\linewidth]{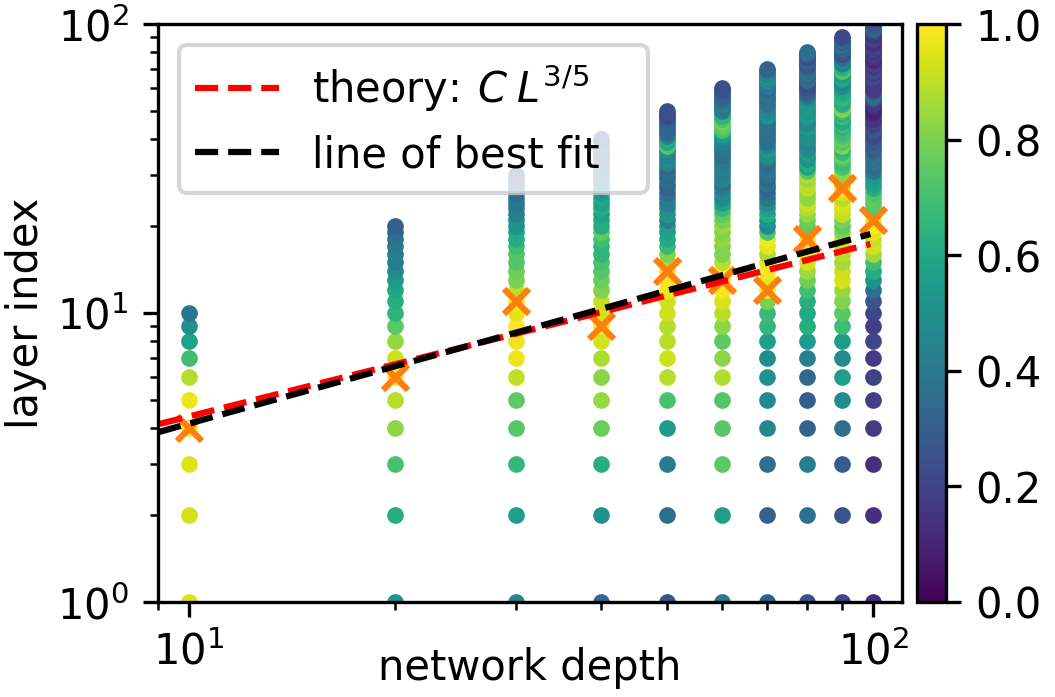}
  \caption{FFNN on Fashion MNIST}
\end{subfigure}
  \begin{subfigure}{0.31\linewidth}
  \includegraphics[width=\linewidth]{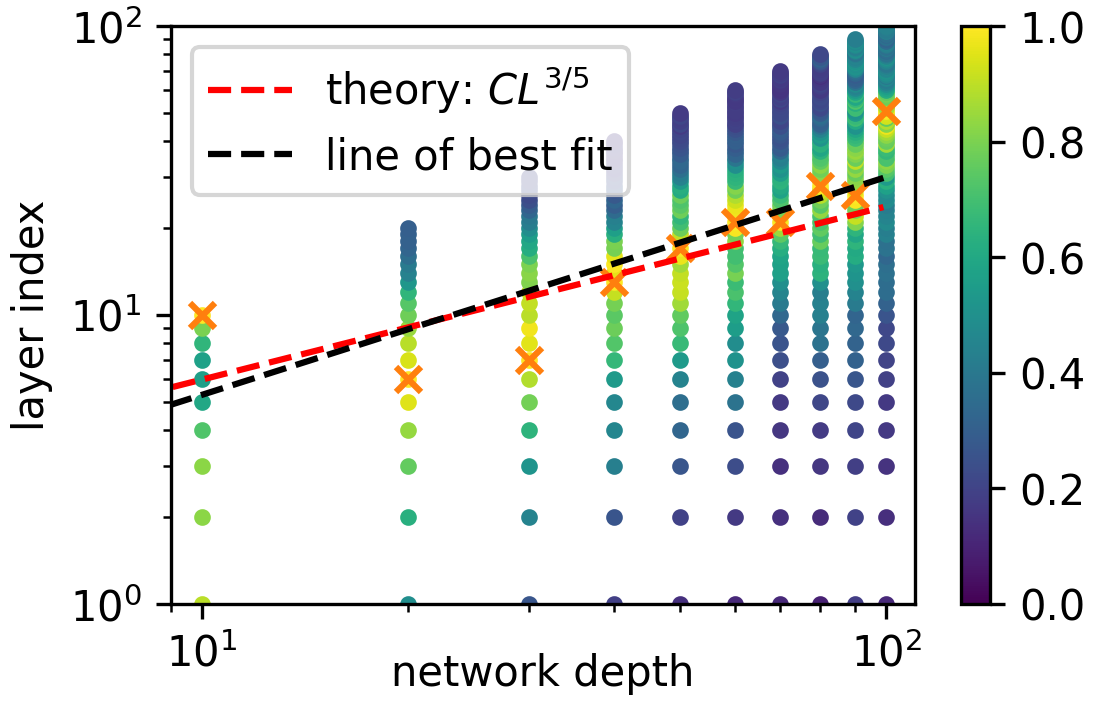}
  \caption{FFNN on Fashion MNIST}
\end{subfigure}
\begin{subfigure}{0.31\linewidth}
  \includegraphics[width=\linewidth]{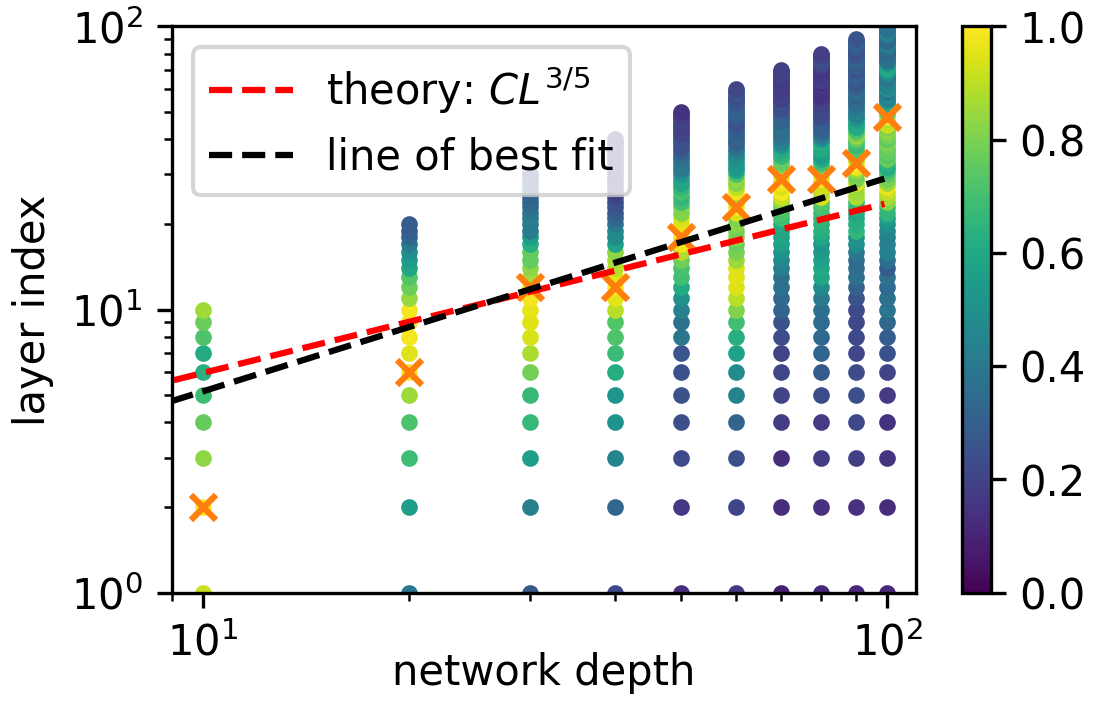}
  \caption{FFNN on Fashion MNIST}
\end{subfigure}
\caption{Here we compare (a) Fashion MNIST on a FFNN with batch size 128 and learning rate for the $10j$'th layer given by $[0.003, 0.004, 0.004, 0.002, 0.001, 0.0007, 0.0003, 0.0002, 0.0001, 0.00007]$. This variation of layer-wise learning rate produces the best generalisation at each depth. (b) Fashion MNIST on a FFNN with batch size 128 and learning rate 4x smaller per layer than in (a). (c) uses the same learning rates as (a) but a batch size of 512.
Note that (a) is the same experiment as \cref{fig:FMNIST_35} in the main text.
There is a clear upward shift in the y-intercept with decreasing learning rate (b) and increased batch size (c), as discussed in \cref{sec:generalization}, meaning that the peak of the AH shifts towards the last layer, and this correlates with poorer performance.
}\label{fig:C_change_AH}
\end{figure}

\begin{figure}[H]
\centering

\begin{subfigure}{\linewidth}
  \includegraphics[width=\linewidth]{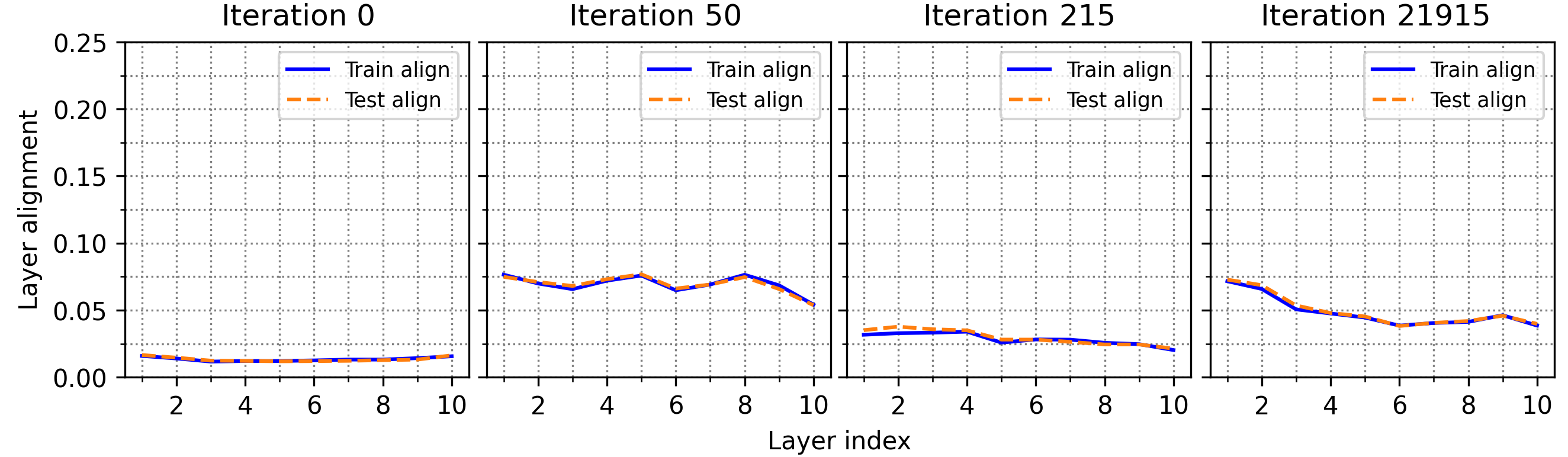}
  \caption{Learning Rate 0.0005, test accuracy 88.4\%, test loss 0.53\\$\quad$}
\end{subfigure}

\begin{subfigure}{\linewidth}
  \includegraphics[width=\linewidth]{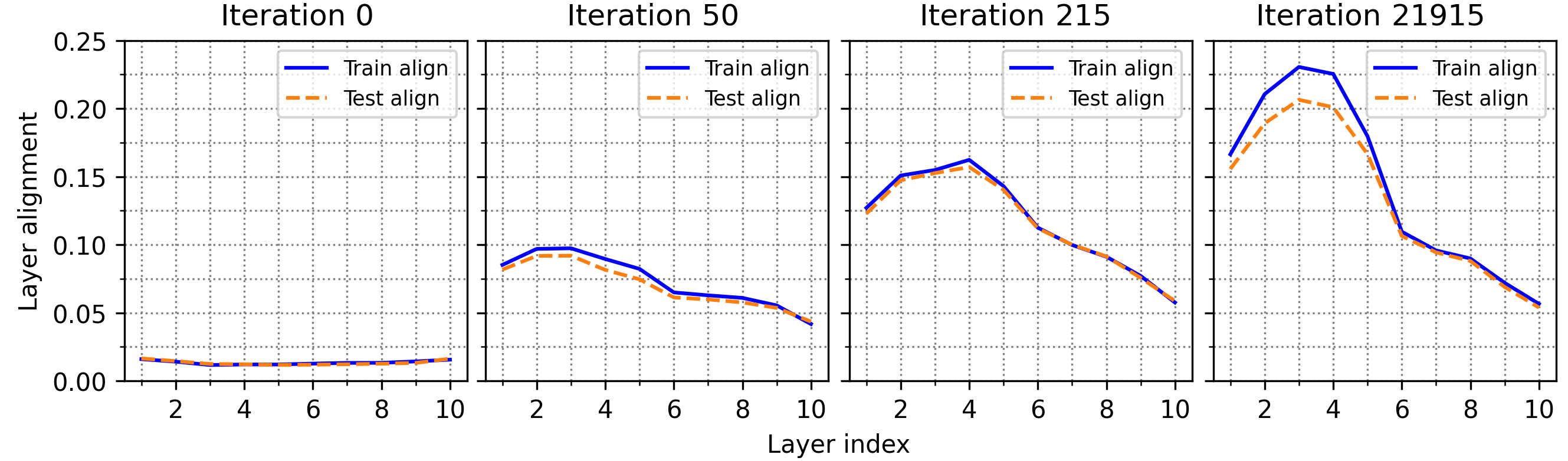}
  \caption{Learning Rate 0.003, test accuracy 88.3\%, test loss 0.53\\$\quad$}
\end{subfigure}

\begin{subfigure}{\linewidth}
  \includegraphics[width=\linewidth]{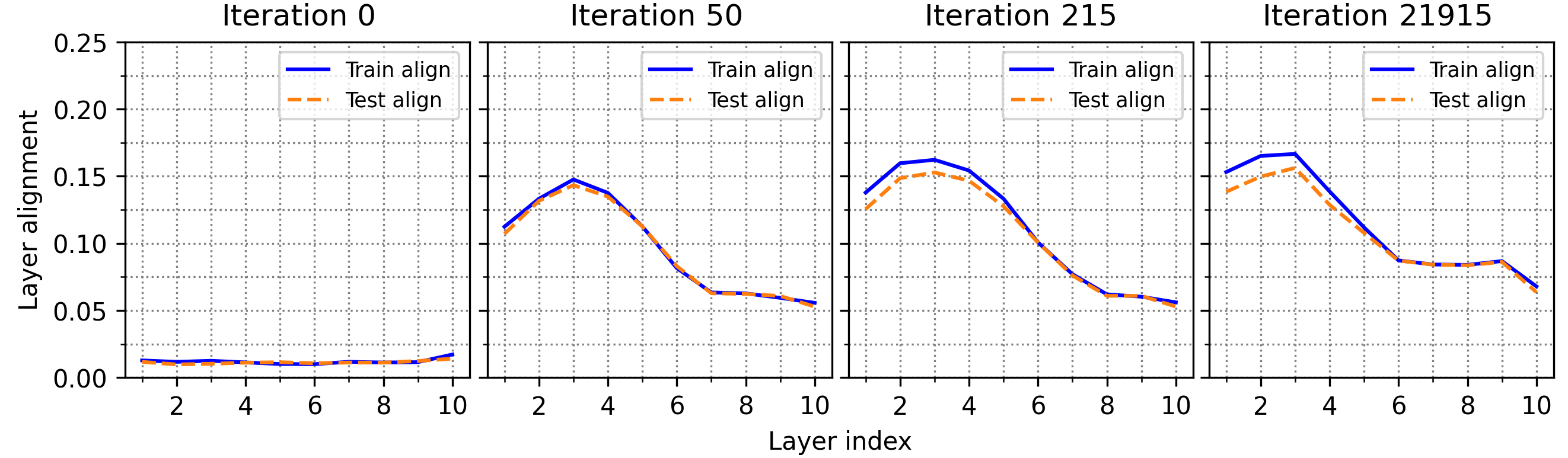}
  \caption{Learning Rate 0.05, test accuracy 87.1\%, test loss 0.41}
\end{subfigure}

\caption{Alignment progress during training. Fashion MNIST. Further detail for \cref{fig:sfigfmnist}.}
\label{fig:align_train_fmnist}
\end{figure}

\begin{figure}[H]
\centering

\begin{subfigure}{\linewidth}
  \includegraphics[width=\linewidth]{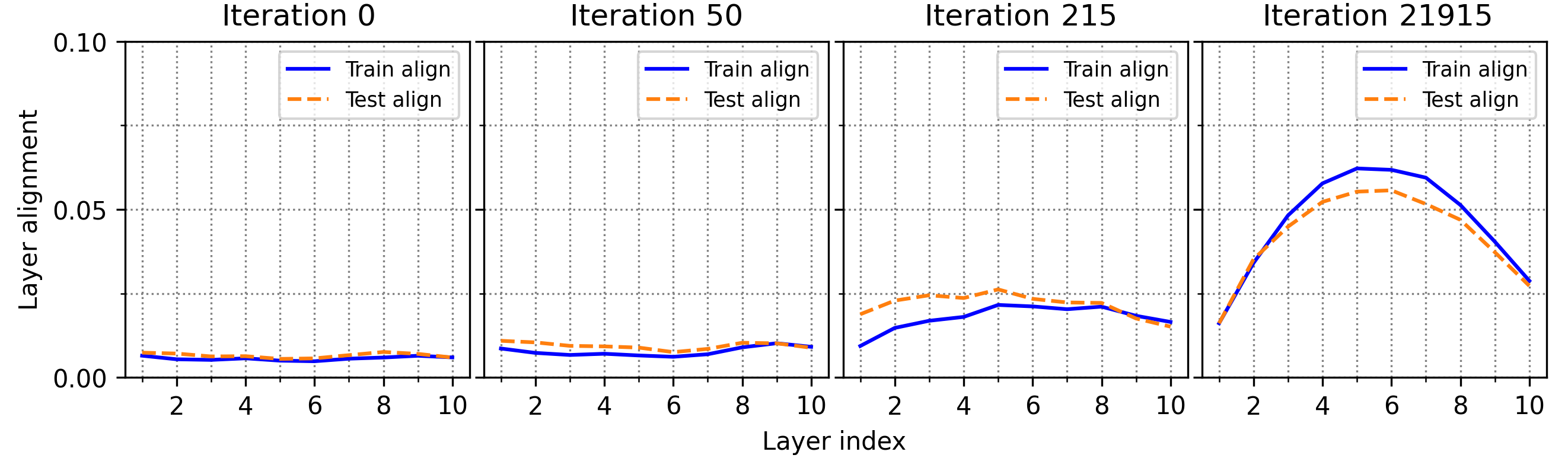}
  \caption{Learning Rate 0.0005, test accuracy 53.9\%, test loss 1.30\\$\quad$}
\end{subfigure}

\begin{subfigure}{\linewidth}
  \includegraphics[width=\linewidth]{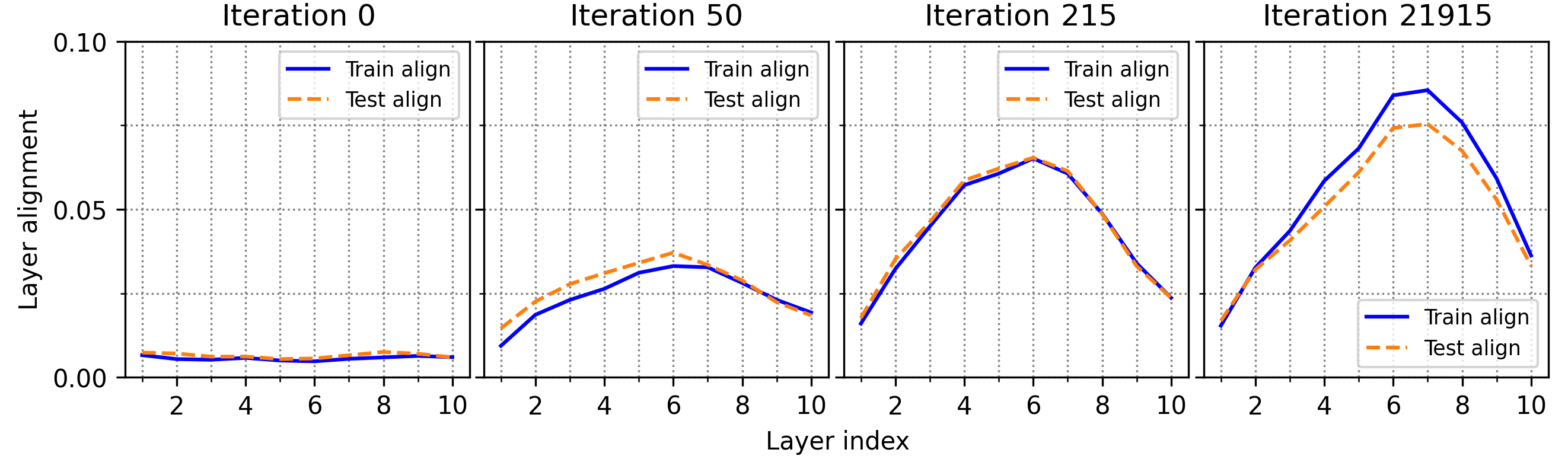}
  \caption{Learning Rate 0.003, test accuracy 56.7\%, test loss 1.21\\$\quad$}
\end{subfigure}

\begin{subfigure}{\linewidth}
  \includegraphics[width=\linewidth]{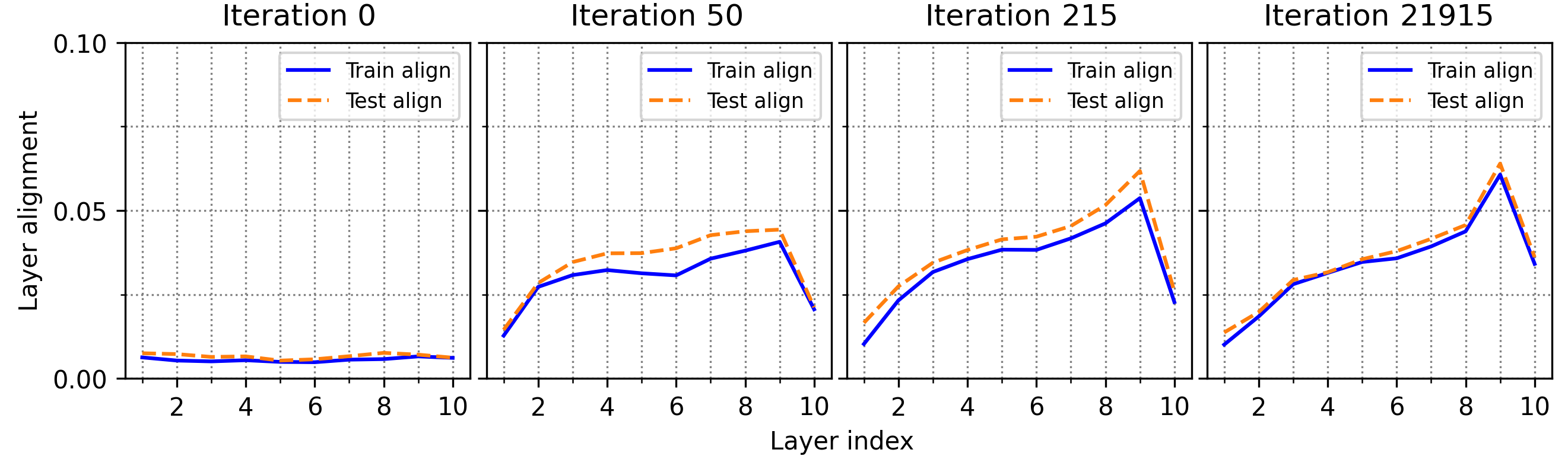}
  \caption{Learning Rate 0.05, test accuracy 51.4\%, test loss 1.41}
\end{subfigure}

\caption{Alignment progress during training. CIFAR10. Further experiments from \cref{fig:sfig1_2}.}\label{fig:align_train_cifar}
\end{figure}

\begin{figure}[H]
\centering
\begin{subfigure}{0.3\linewidth}
  \includegraphics[width=\linewidth]{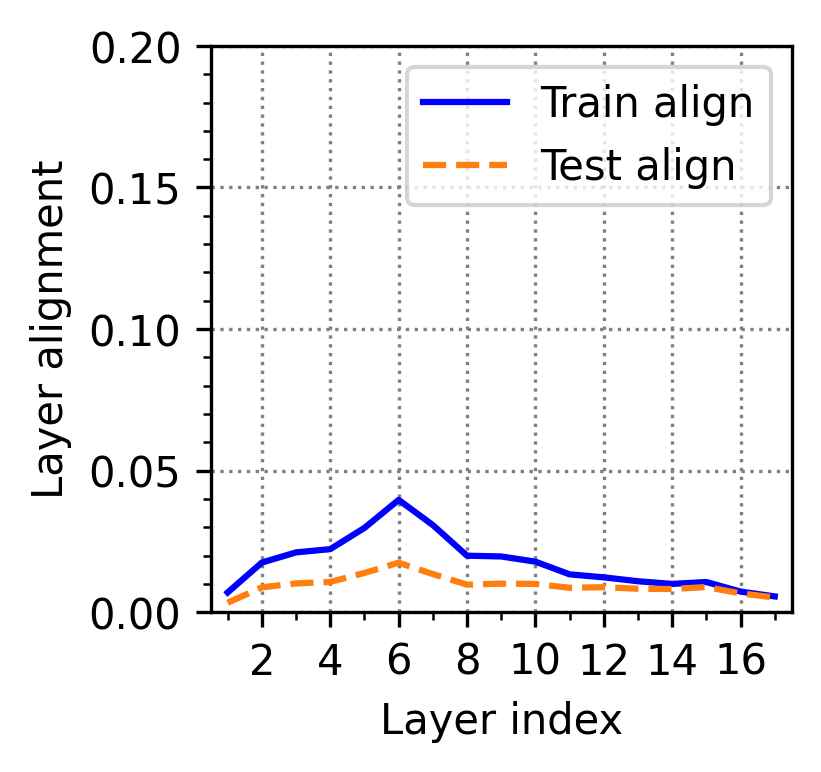}
  \caption{lr 0.001, Acc 46.8\%}
\end{subfigure}
  \begin{subfigure}{0.3\linewidth}
  \includegraphics[width=\linewidth]{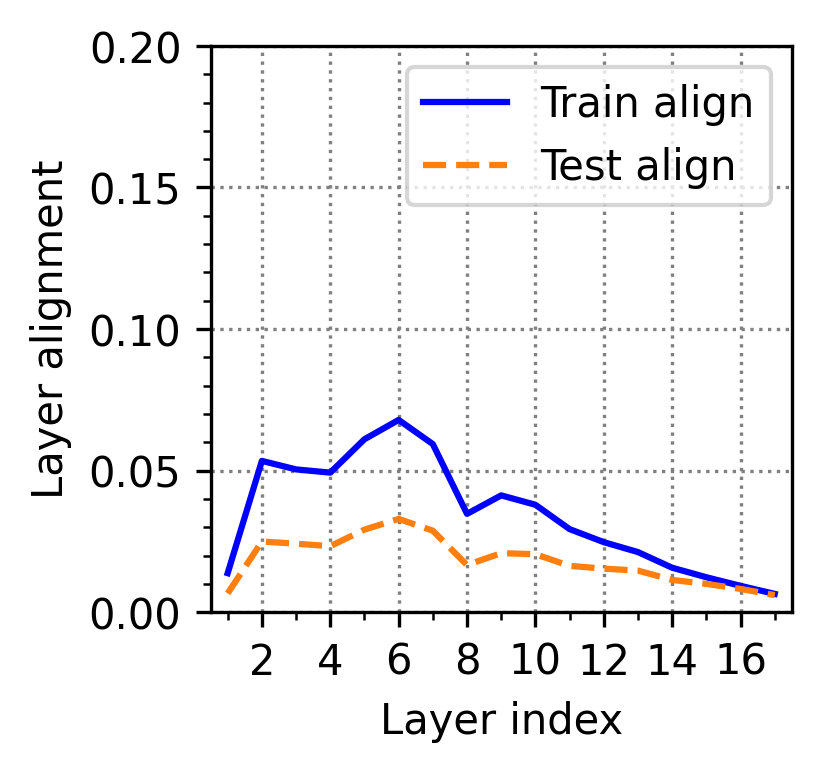}
  \caption{lr 0.005, Acc 53.9\%}
\end{subfigure}
  \begin{subfigure}{0.3\linewidth}
  \includegraphics[width=\linewidth]{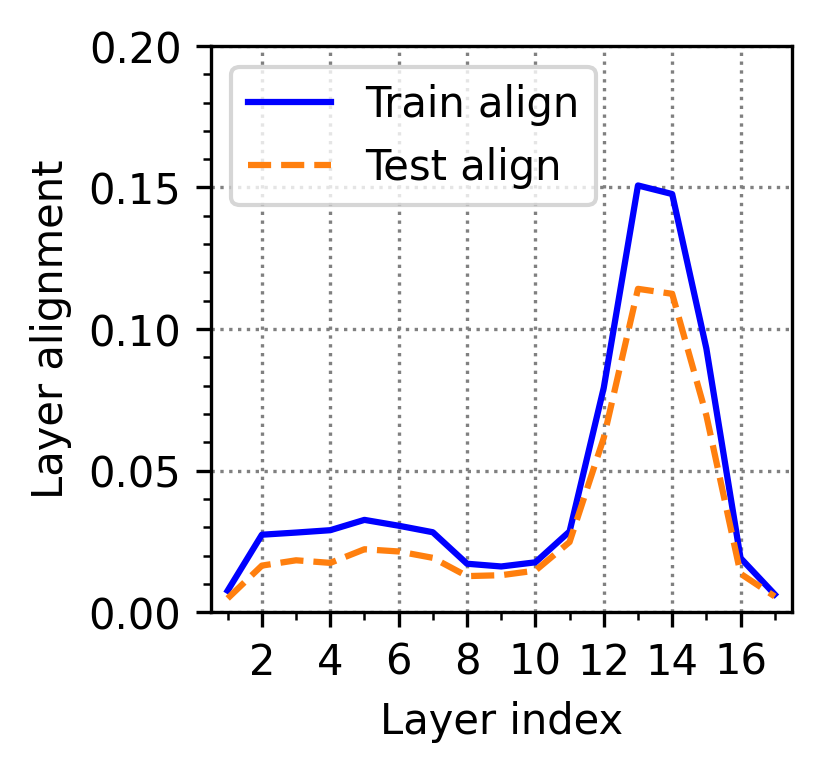}
  \caption{lr 0.01, Acc 62.5\%}
\end{subfigure}
\caption{(a), (b) and (c) show VGG19 trained with SGD (with momentum and weight decay) for 100 epochs on CIFAR100 dataset with three different learning rates (lr). This is an addition to \cref{fig:egs_main} with more complex architectures. To properly compare the three learning rates, training should be stopped at fixed training loss, as 100 epochs may not allow for convergence with the smaller learning rates (in (a) and (b)).}\label{fig:cifar100_hierarchy_generalisation}
\end{figure}

% \begin{figure}[H]
% \centering

% \begin{subfigure}{\linewidth}
%   \includegraphics[width=\linewidth]{fig/align_vs_epochs/cifar00005.png}
%   \caption{Learning Rate 0.0005, test accuracy 53.9\%, test loss 1.30\\$\quad$}
% \end{subfigure}

% \begin{subfigure}{\linewidth}
%   \includegraphics[width=\linewidth]{fig/align_vs_epochs/cifar0005.png}
%   \caption{Learning Rate 0.003, test accuracy 56.7\%, test loss 1.21\\$\quad$}
% \end{subfigure}

% \begin{subfigure}{\linewidth}
%   \includegraphics[width=\linewidth]{fig/align_vs_epochs/cifar005.png}
%   \caption{Learning Rate 0.05, test accuracy 51.4\%, test loss 1.41}
% \end{subfigure}

% \caption{Alignment progress during training. CIFAR10. Further experiments from \cref{fig:sfig1_2}.}\label{fig:align_train_cifar}
% \end{figure}

\begin{figure}[H]
\centering
\begin{subfigure}{0.23\linewidth}
  \includegraphics[width=\linewidth]{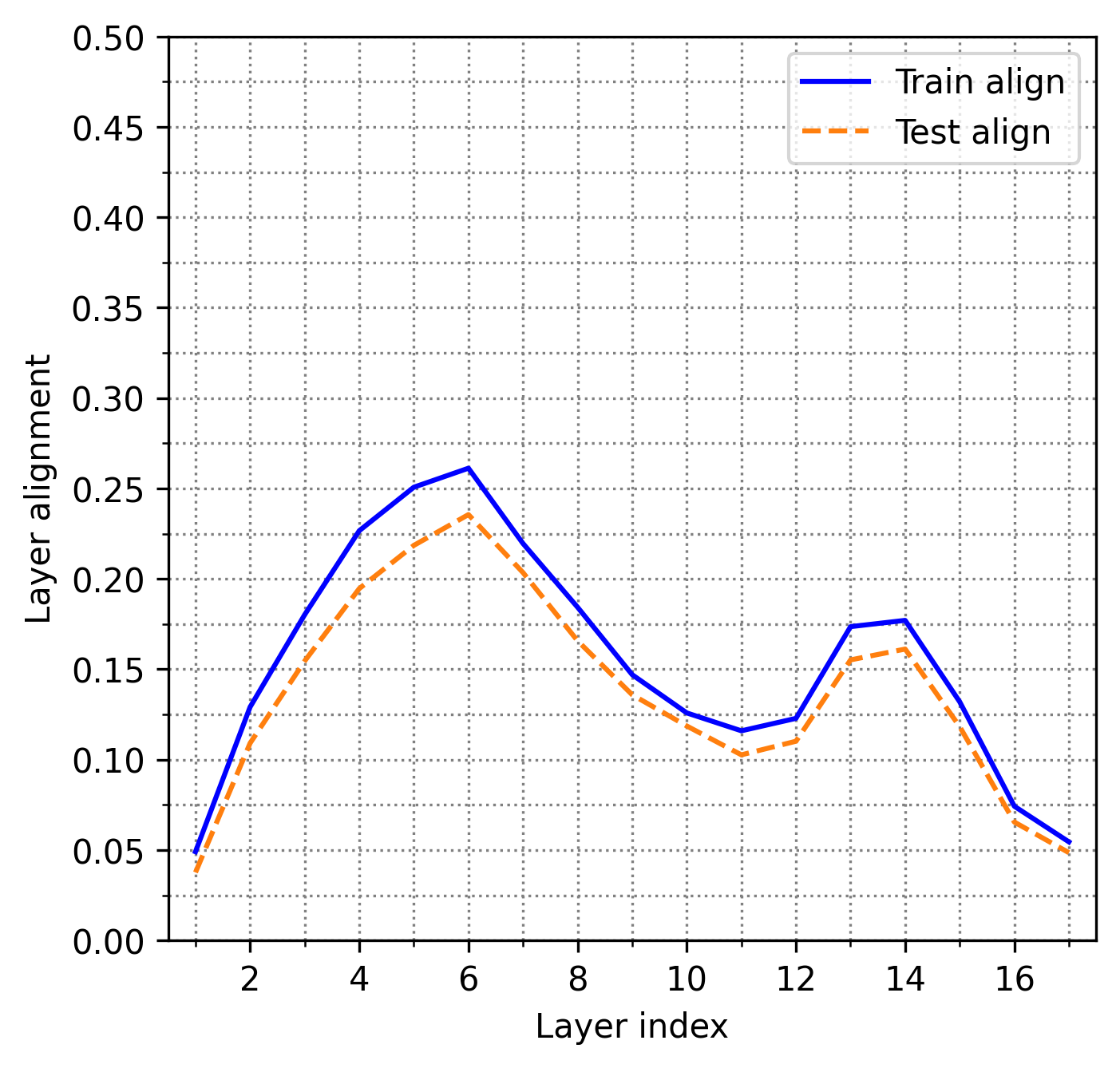}
  \caption{lr 0.0005, Acc Gap  13.1\%, Loss Gap 0.560}
\end{subfigure}
\begin{subfigure}{0.23\linewidth}
  \includegraphics[width=\linewidth]{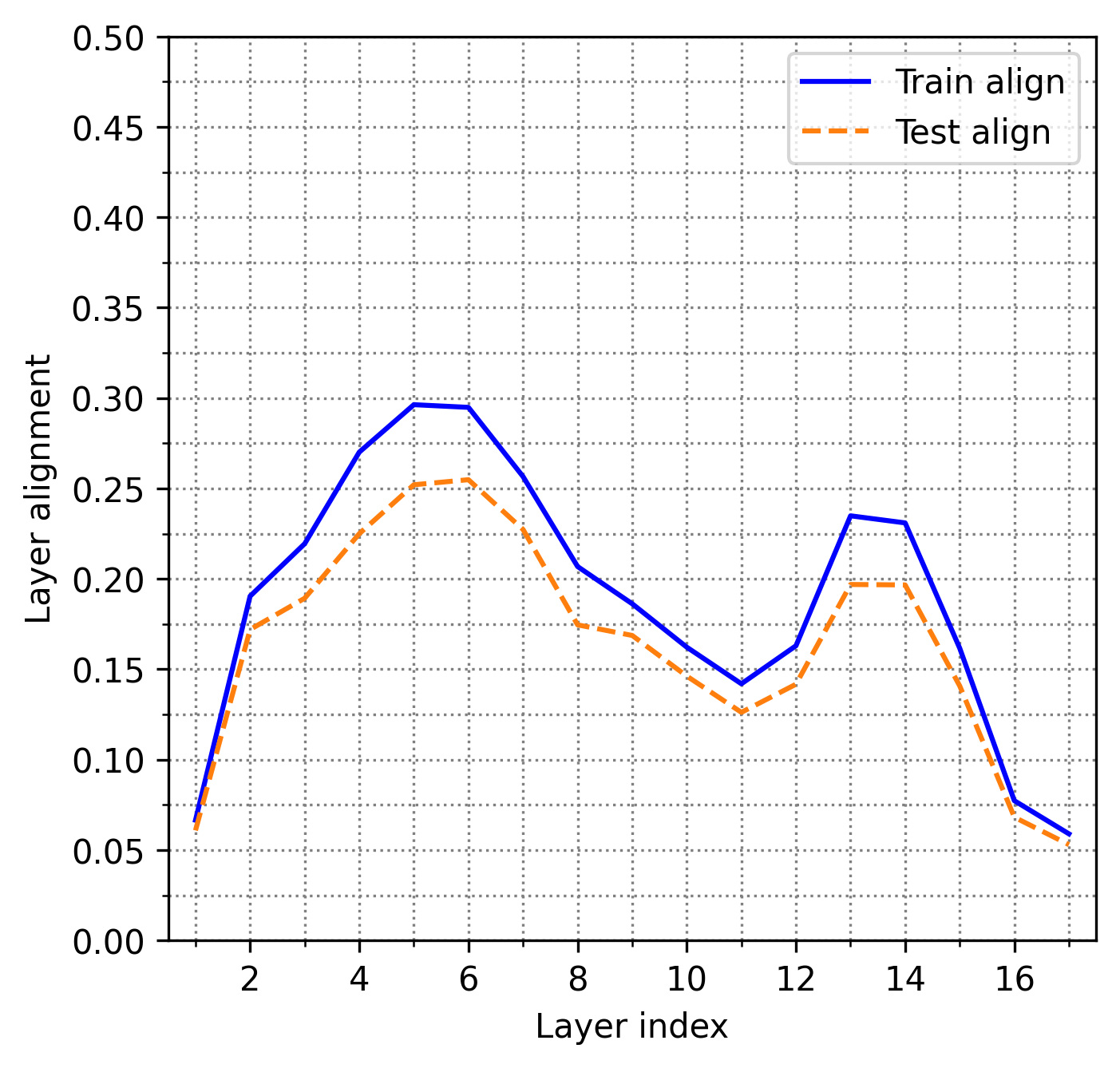}
  \caption{lr 0.001, Acc Gap  12.3\%, Loss Gap 0.484}
\end{subfigure}
  \begin{subfigure}{0.23\linewidth}
  \includegraphics[width=\linewidth]{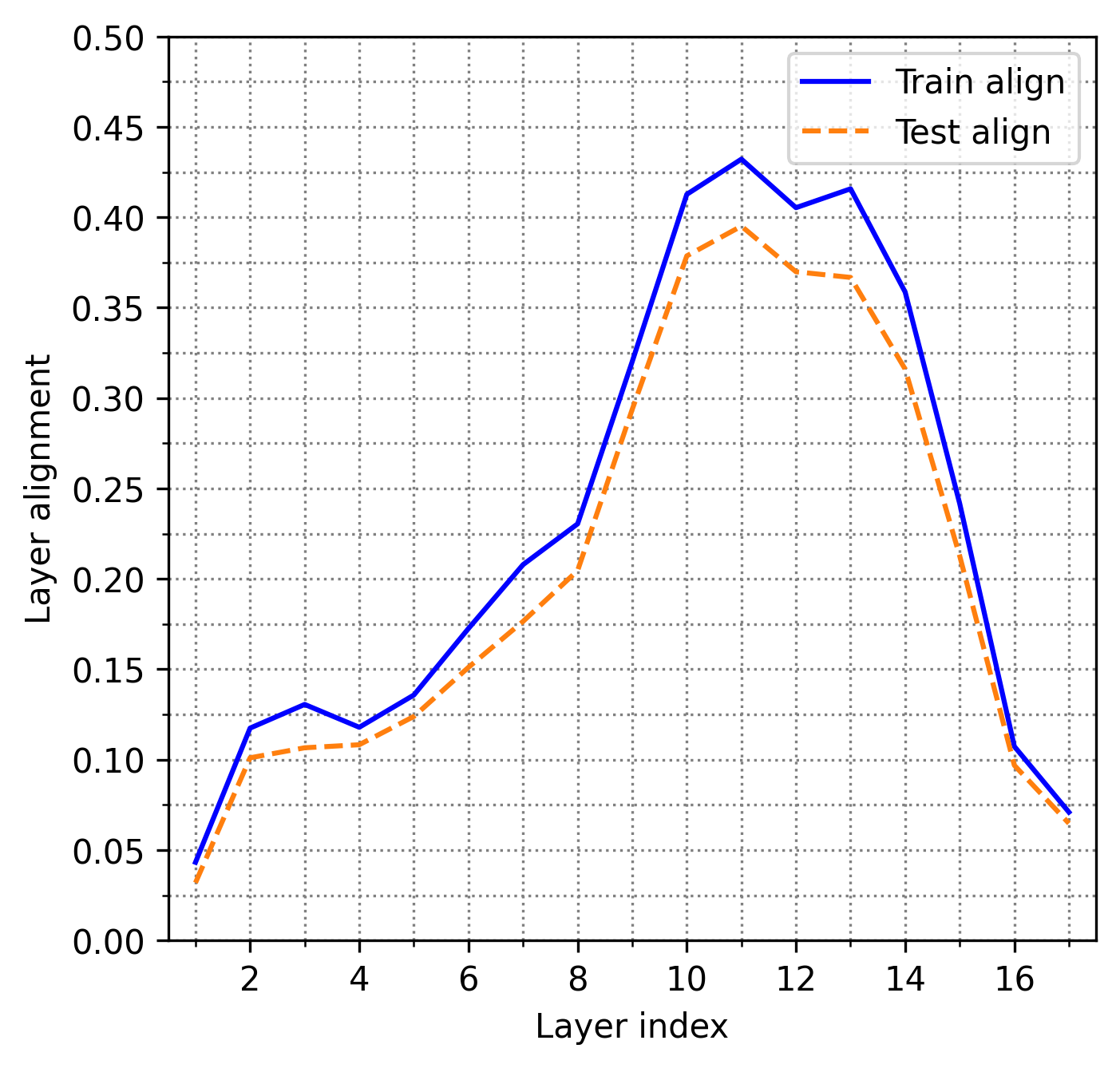}
  \caption{lr 0.0032, Acc Gap 8.89\%, Loss Gap 0.357}
\end{subfigure}
  \begin{subfigure}{0.23\linewidth}
  \includegraphics[width=\linewidth]{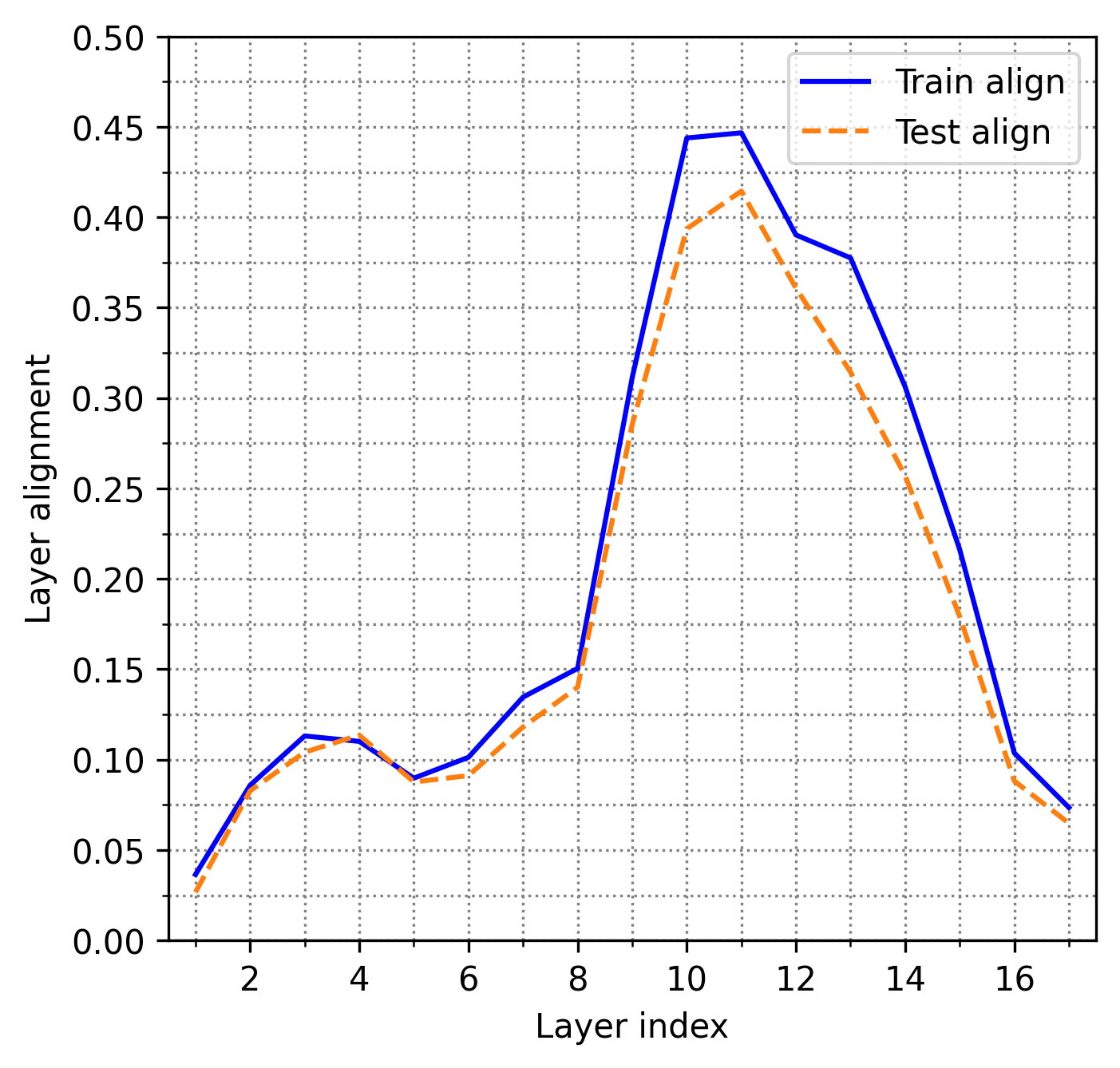}
  \caption{lr 0.01, Acc Gap 5.9\%, Loss Gap 0.224}
\end{subfigure}
\caption{(a), (b) and (c) show VGG19 trained with SGD (with momentum and weight decay) until convergence (train loss reaches 0.1) epochs on CIFAR10 dataset with three different learning rates (lr). This is an addition to \cref{fig:egs_main}.}\label{fig:cifar10_vgg19_hierarchy_generalisation}
\end{figure}

\begin{figure}[H]
\centering
\begin{subfigure}{0.3\linewidth}
  \includegraphics[width=\linewidth]{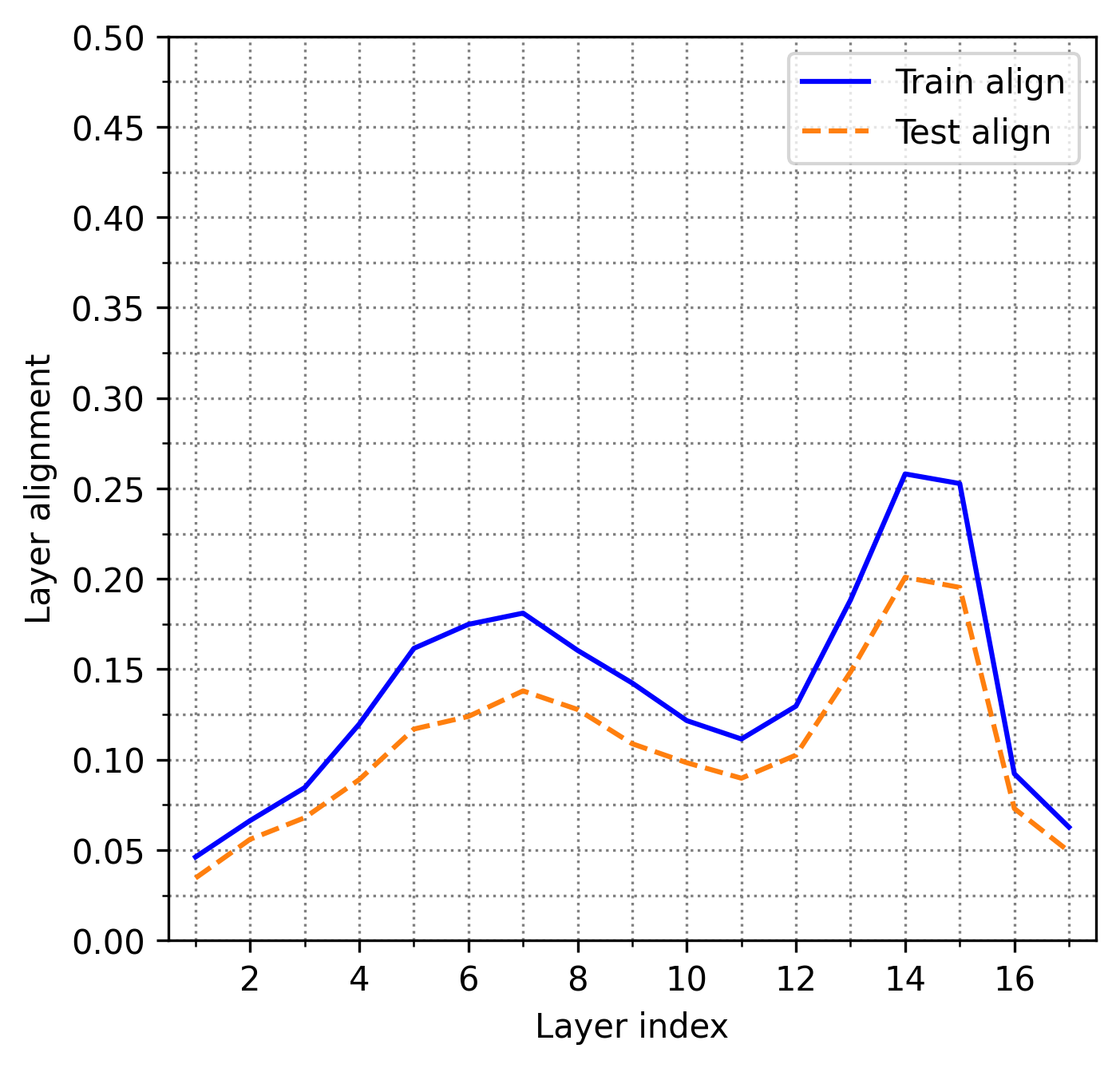}
  \caption{lr 0.000032, Acc Gap  17.4\%, Loss Gap 0.769}
\end{subfigure}
\begin{subfigure}{0.3\linewidth}
  \includegraphics[width=\linewidth]{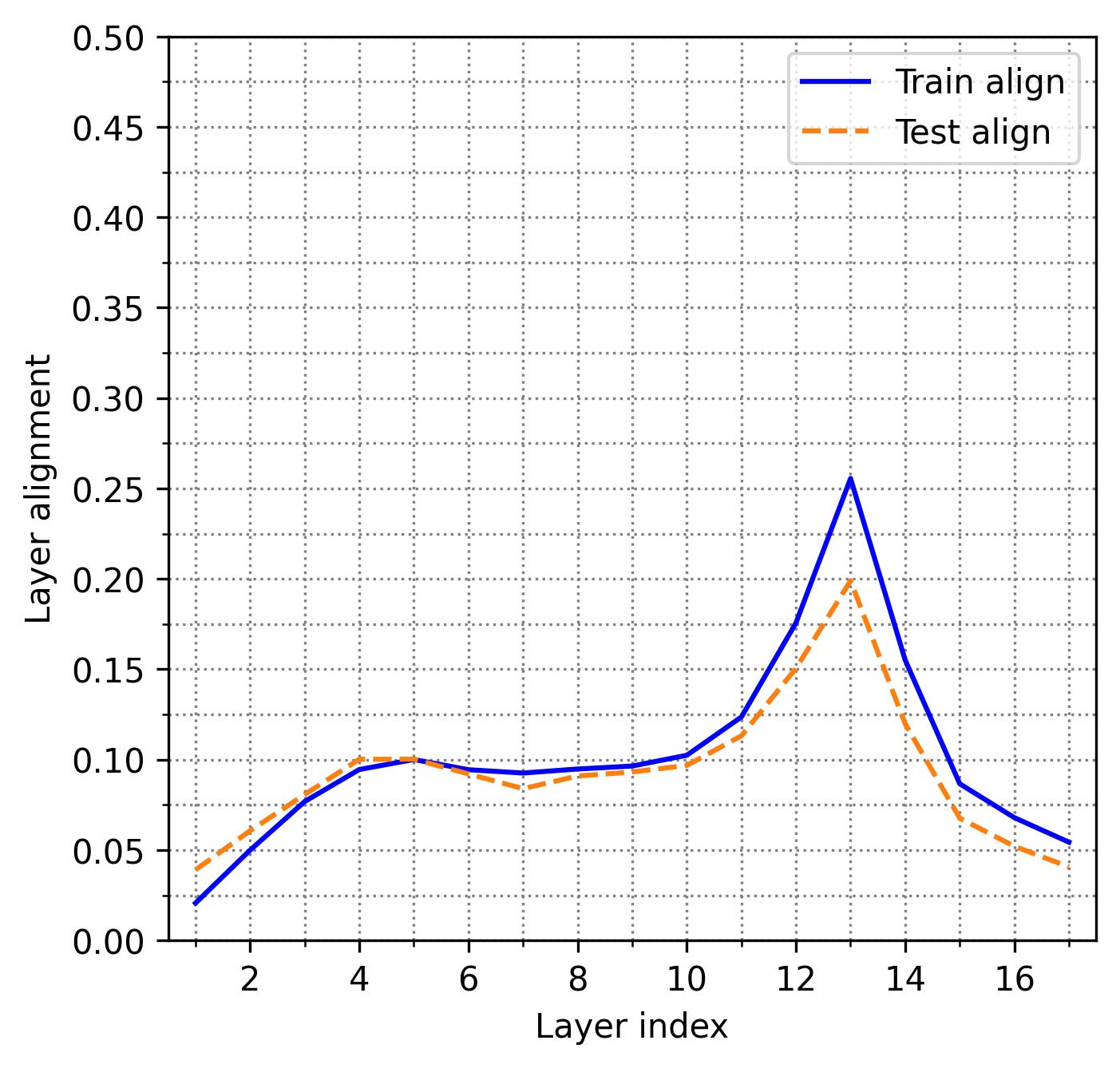}
  \caption{lr 0.0001, Acc Gap  12.4\%, Loss Gap 0.557}
\end{subfigure}
  \begin{subfigure}{0.3\linewidth}
  \includegraphics[width=\linewidth]{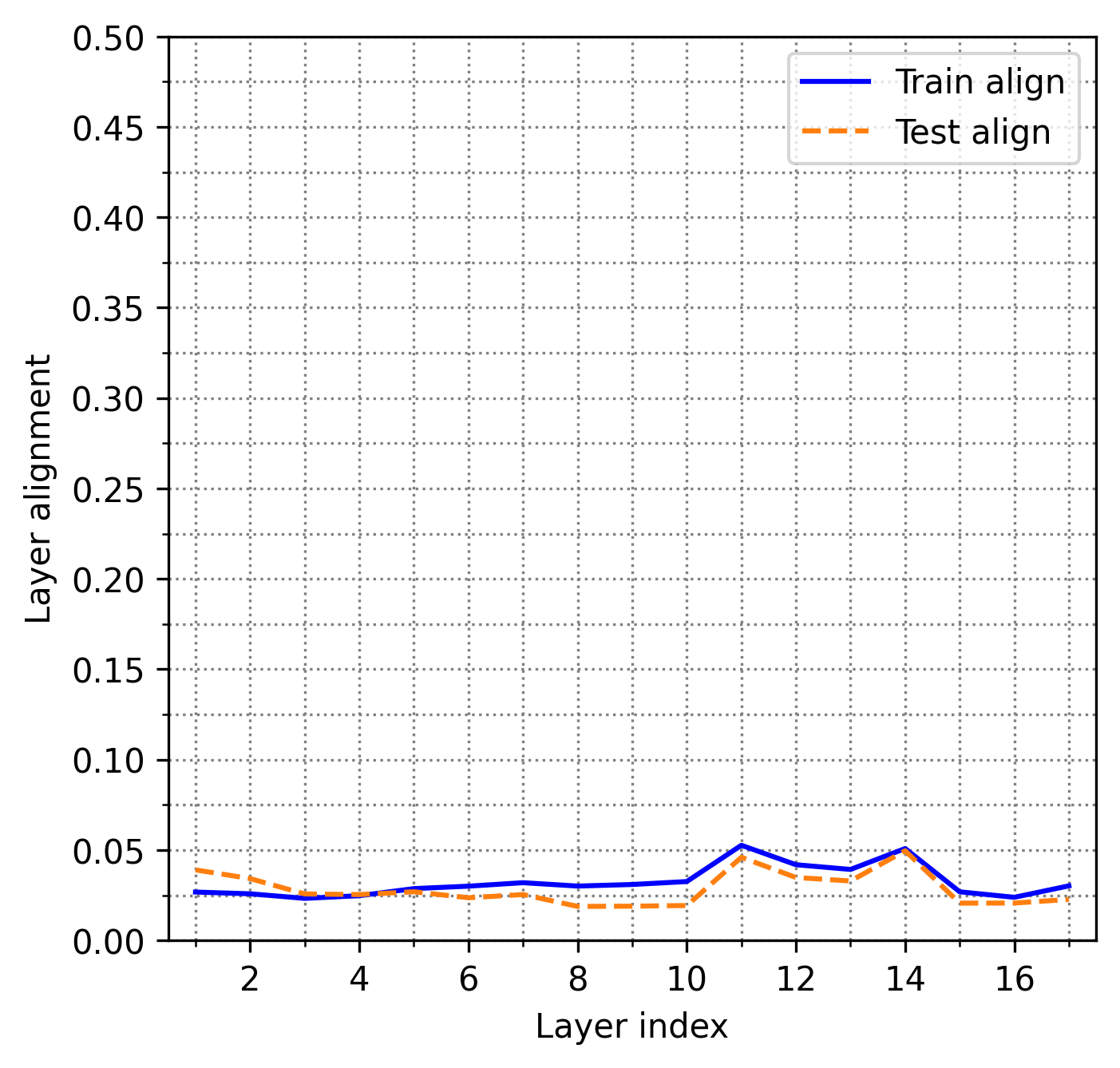}
  \caption{lr 0.00032, Acc Gap 11.9\%, Loss Gap 0.507}
\end{subfigure}
\caption{(a), (b) and (c) show VGG19 trained with ADAM until convergence (train loss reaches 0.1) epochs on CIFAR10 dataset with three different learning rates (lr). Note that ADAM exhibits worse gen. error and less salient alignment patterns.}\label{fig:cifar10_vgg19_hierarchy_generalisation_adam}
\end{figure}

\begin{figure}[H]
\centering
\begin{subfigure}{0.23\linewidth}
  \includegraphics[width=\linewidth]{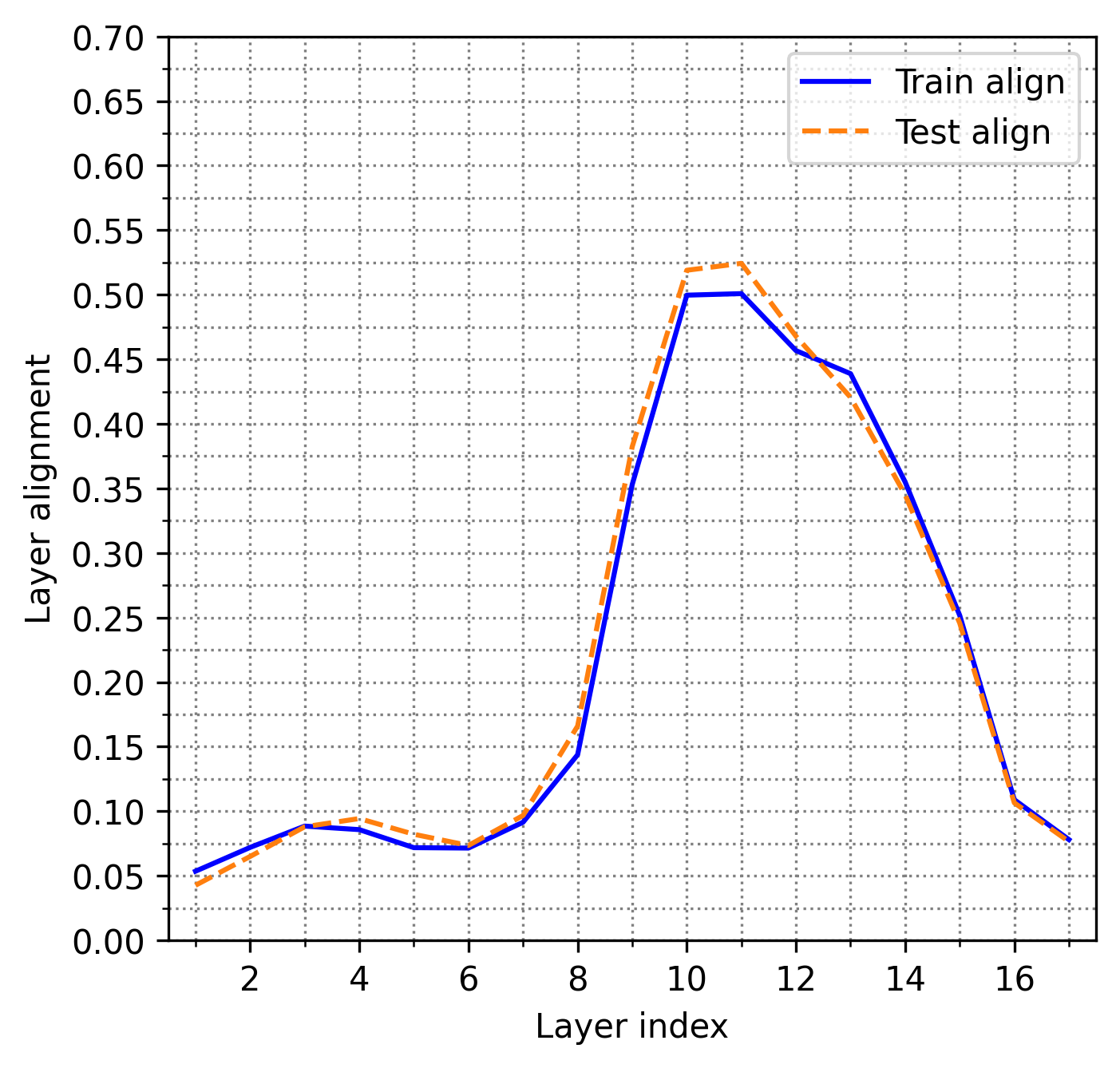}
  \caption{bs 32, Acc Gap  4.7\%, Loss Gap 0.236}
\end{subfigure}
\begin{subfigure}{0.23\linewidth}
  \includegraphics[width=\linewidth]{fig/fig10/cifar10_0-0032.png}
  \caption{bs 128, Acc Gap 8.89\%, Loss Gap 0.357}
\end{subfigure}
  \begin{subfigure}{0.23\linewidth}
  \includegraphics[width=\linewidth]{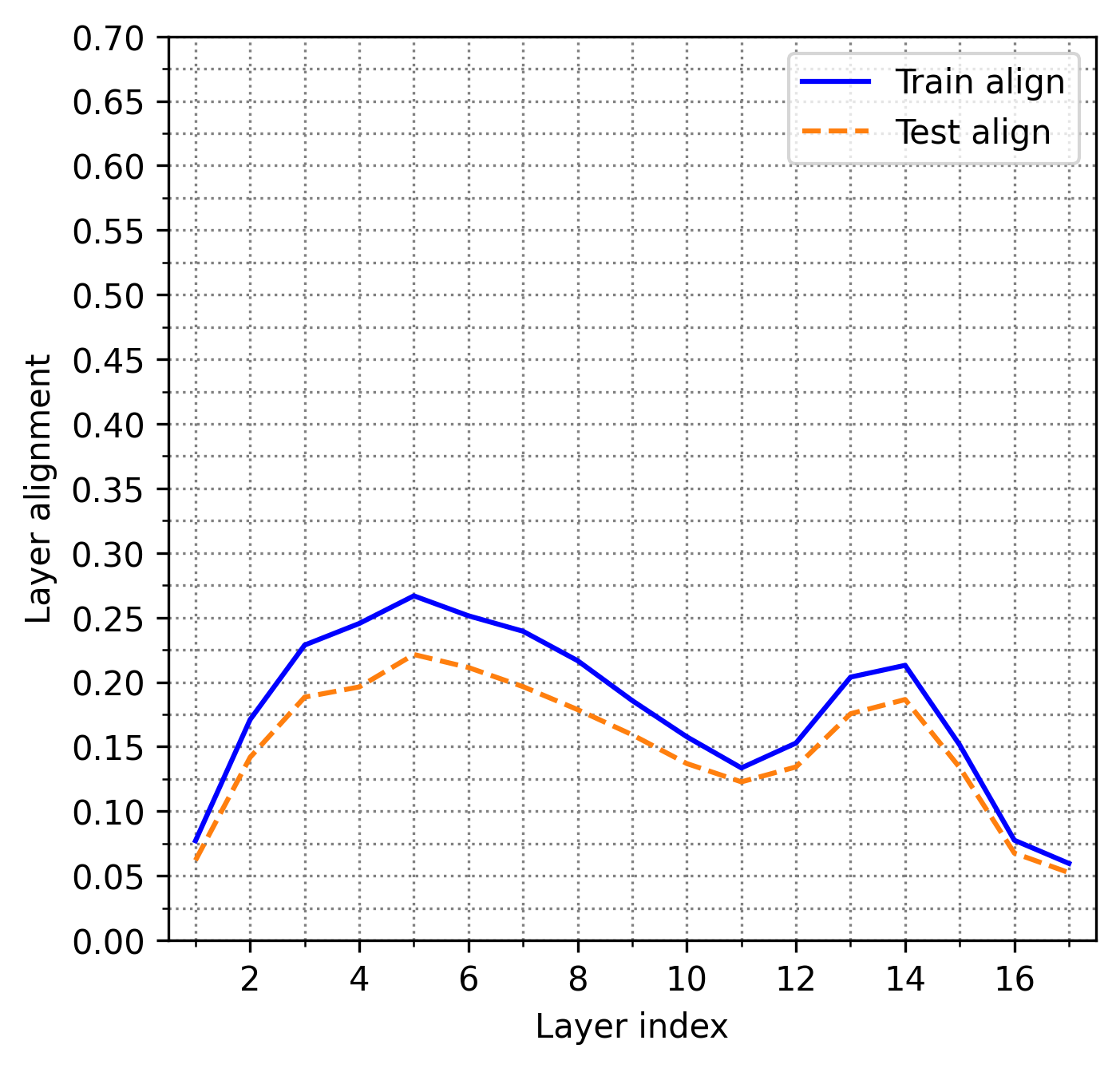}
  \caption{bs 512, Acc Gap 11.3\%, Loss Gap 0.562}
\end{subfigure}
  \begin{subfigure}{0.23\linewidth}
  \includegraphics[width=\linewidth]{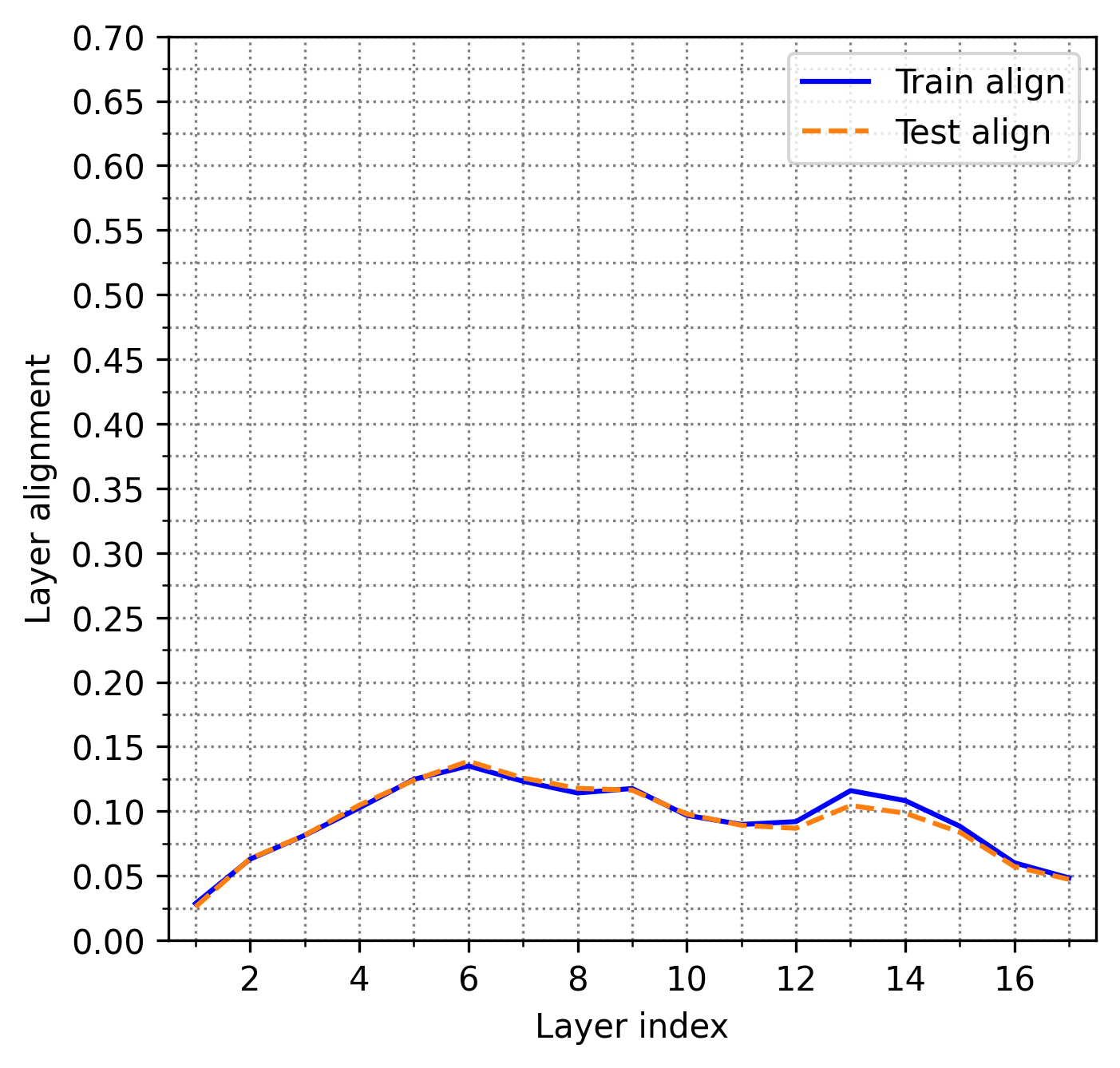}
  \caption{bs 2048, Acc Gap 12.2\%, Loss Gap 0.652}
\end{subfigure}
\caption{(a), (b) and (c) show VGG19 trained with SGD (with momentum and weight decay) until convergence (train loss reaches 0.1) epochs on CIFAR10 dataset with four different batch sizes (bs).}\label{fig:cifar10_vgg19_hierarchy_generalisation_bs}
\end{figure}

\begin{figure}[H]
\centering
\begin{subfigure}{0.23\linewidth}
  \includegraphics[width=\linewidth]{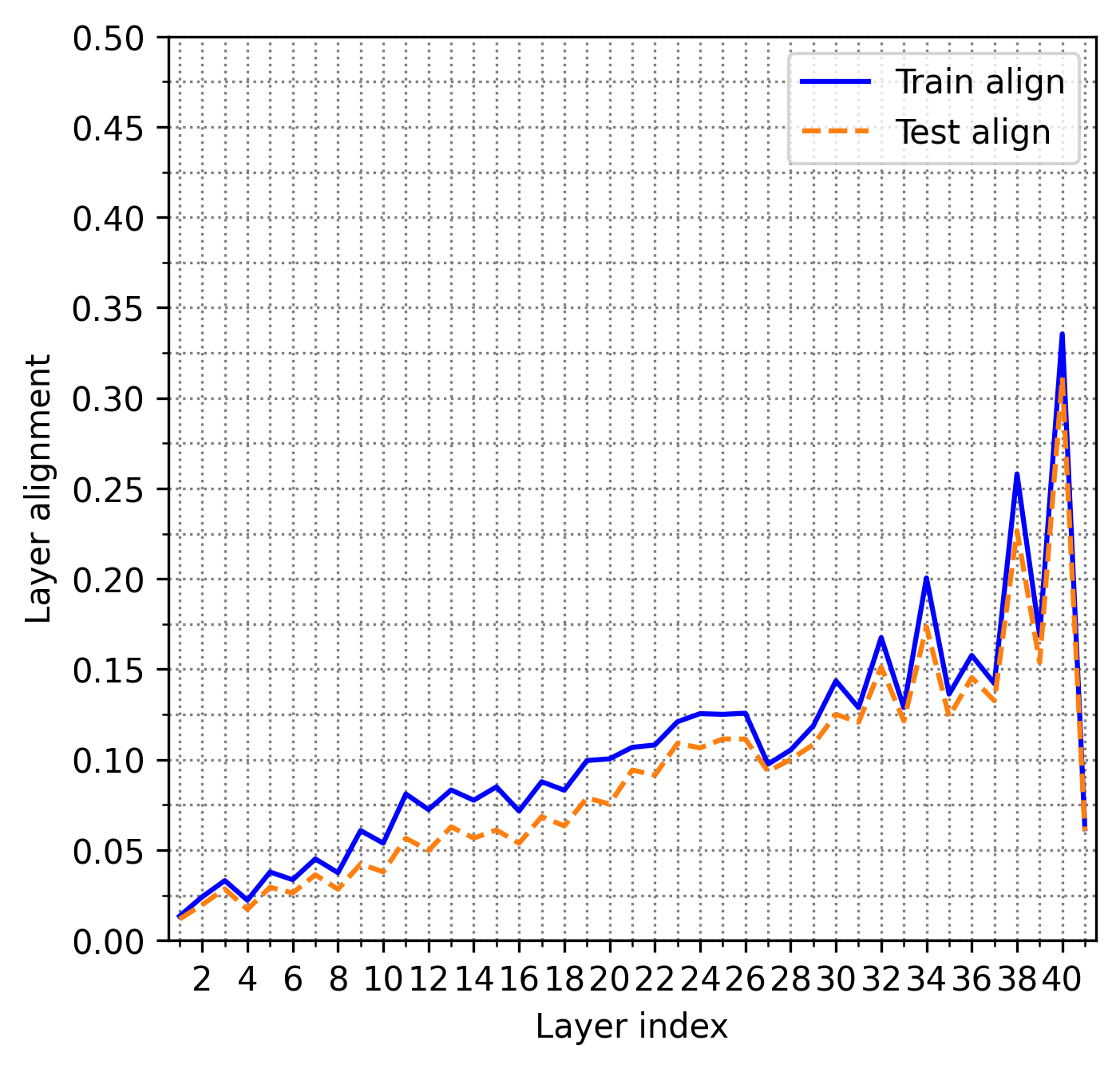}
  \caption{lr 0.001, Acc Gap  10.79\%, Loss Gap 0.395}
\end{subfigure}
\begin{subfigure}{0.23\linewidth}
  \includegraphics[width=\linewidth]{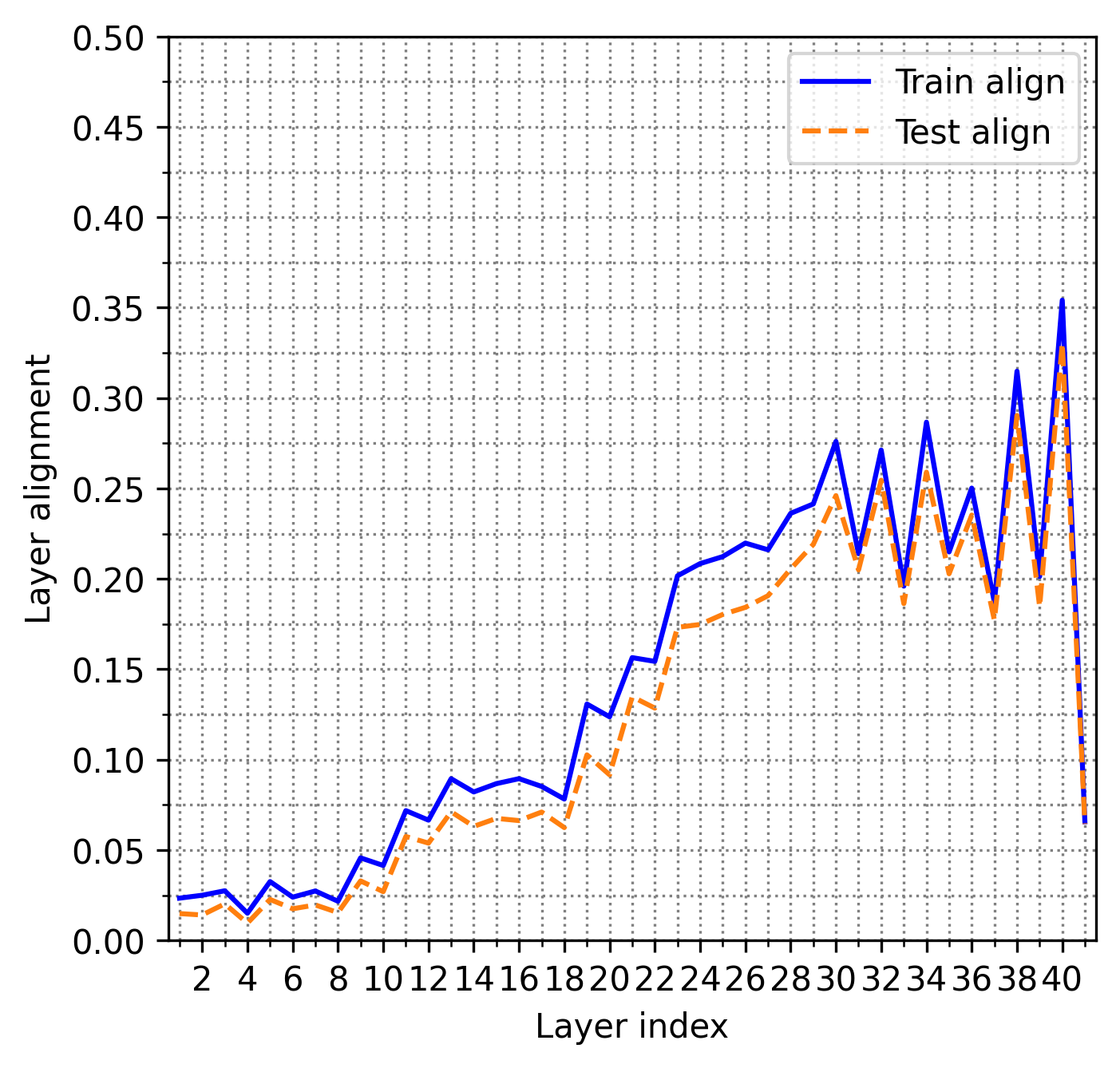}
  \caption{lr 0.003, Acc Gap 9.28\%, Loss Gap 0.397}
\end{subfigure}
  \begin{subfigure}{0.23\linewidth}
  \includegraphics[width=\linewidth]{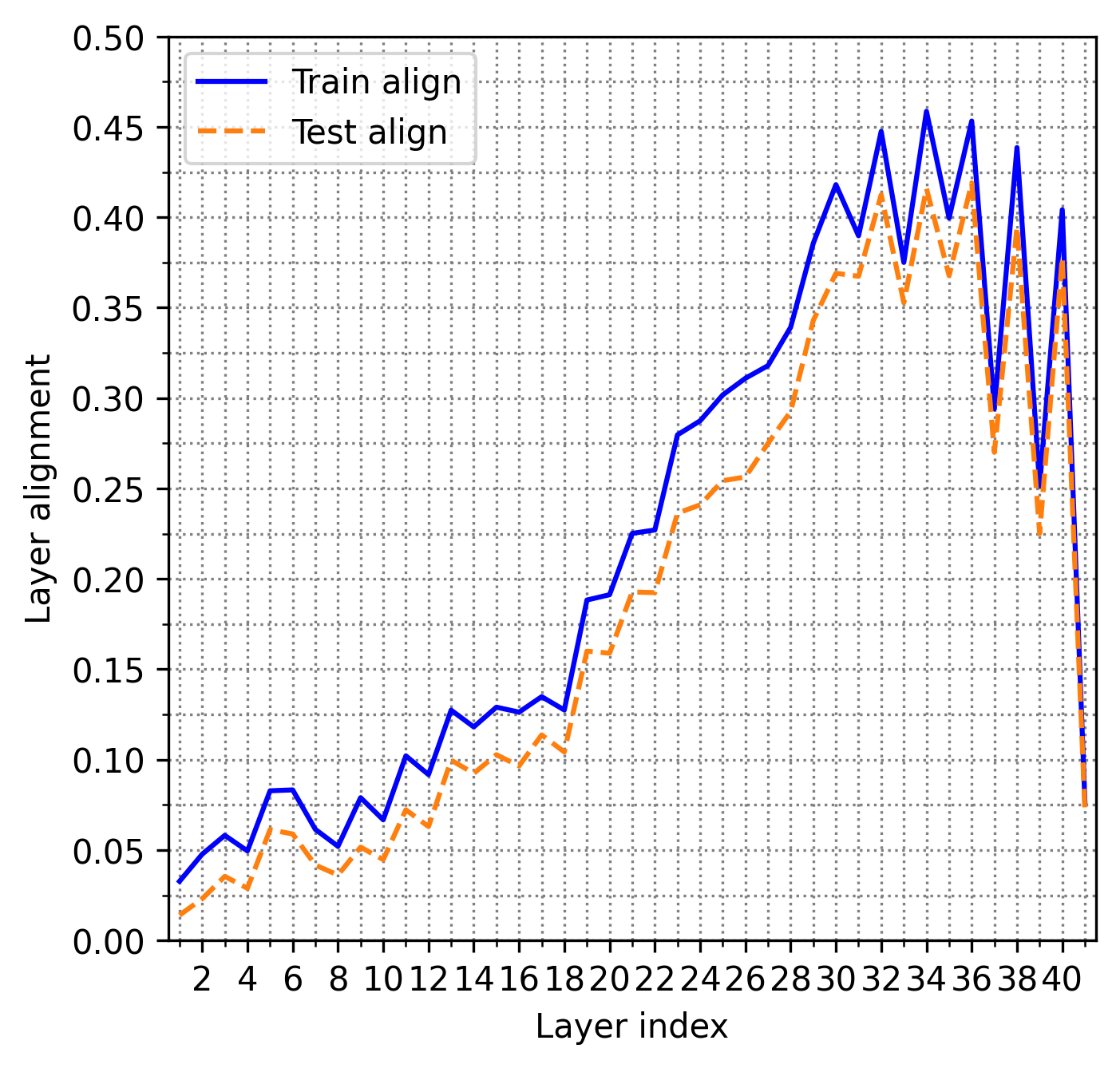}
  \caption{lr 0.01, Acc Gap 4.39\%, Loss Gap 0.152}
\end{subfigure}
  \begin{subfigure}{0.23\linewidth}
  \includegraphics[width=\linewidth]{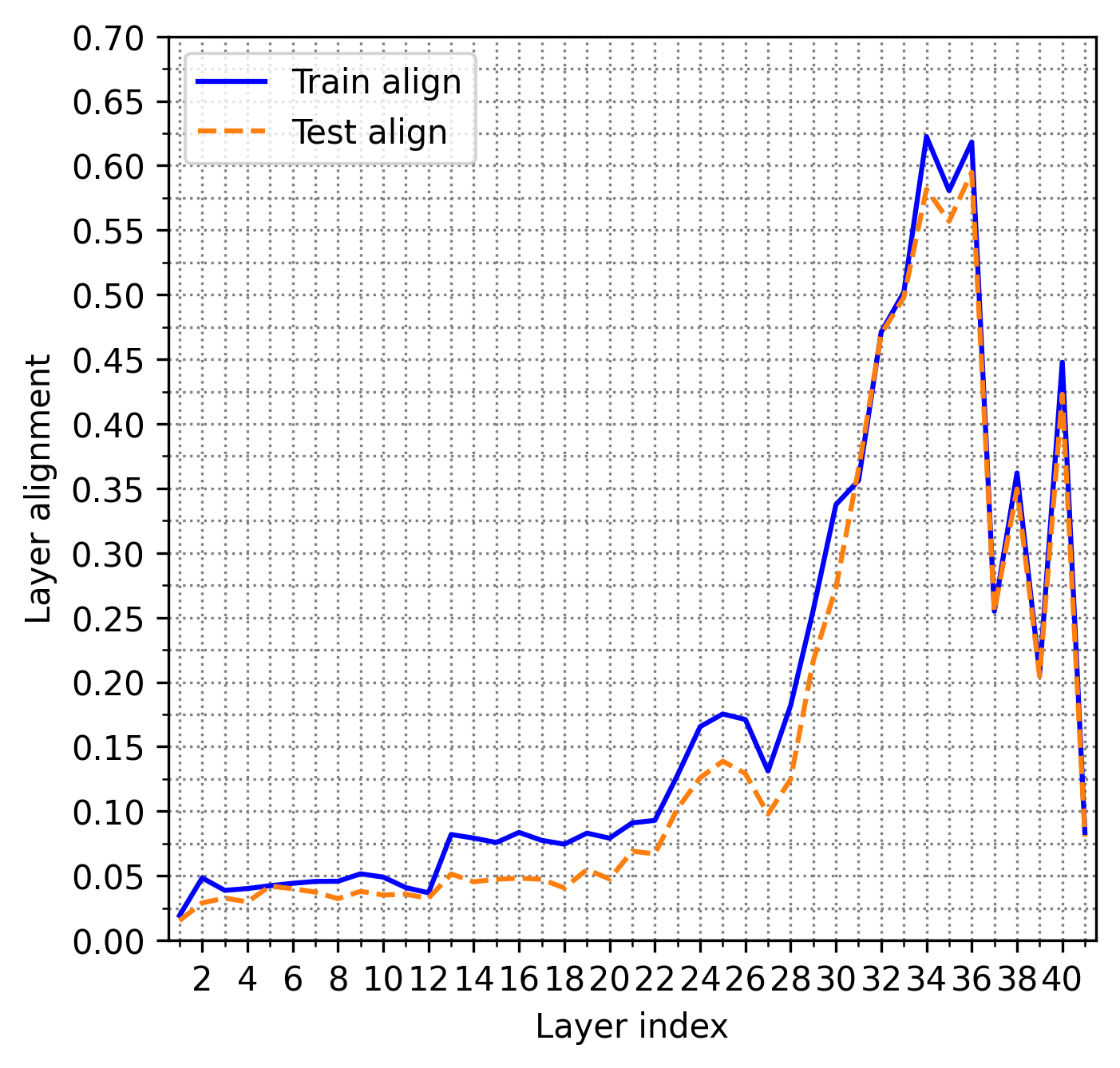}
  \caption{lr 0.02, Acc Gap 5.5\%, Loss Gap 0.182}
\end{subfigure}
\caption{(a), (b) and (c) show ResNet18 trained with SGD (with momentum and weight decay) until convergence (train loss reaches 0.1) epochs on CIFAR10 dataset with four different learning rates (lr).}\label{fig:cifar10_vgg19_hierarchy_generalisation_bs_adam}
\end{figure}

Details of experiments in \cref{fig:EH}:
Feed forward Neural Network on CIFAR10, Fashion MNIST and MNIST using SGD optimizer with momentum and weight decay are included in \cref{tab:cifar10_eh}, \cref{tab:fmnist_eh} and \cref{tab:mnist_eh} resp. The learning rates are chosen as the one that produces best out of sample accuracy. 

\begin{table}[H]
    \centering
    \begin{tabular}{C{1cm} C{1cm} C{2.5cm} C{1cm} C{3cm}}
         depth & width & learning rate & epochs & test accuracy \\
         \hline
         10 & 256 & 0.005 & 100 & 56$\%$\\
         20 & 256 & 0.003 & 100 & 58.3$\%$\\
         30 & 256 & 0.003 & 150 & 57.7$\%$\\
         40 & 256 & 0.001 & 200 & 58.7$\%$\\
         50 & 256 & 0.001 & 250 & 58$\%$\\
         60 & 256 & 0.0007 & 300 & 58.3$\%$\\
         70 & 256 & 0.0005 & 300 & 59.1$\%$\\
         80 & 256 & 0.0005 & 500 & 57.2$\%$\\
         90 & 256 & 0.0001 & 500 & 56.9$\%$\\
         100 & 256 & 0.0001 & 700 & 56$\%$\\
         \hline
    \end{tabular}
    \caption{CIFAR10 FFNN experiments to verify EH (\cref{fig:CIFAR_35}).}
    \label{tab:cifar10_eh}
\end{table}

\begin{table}[H]
    \centering
    \begin{tabular}{C{1cm} C{1cm} C{2.5cm} C{1cm} C{3cm}}
         depth & width & learning rate & epochs & test accuracy \\
         \hline
         10 & 100 & 0.003 & 100 & 88.3$\%$\\
         20 & 100 & 0.004 & 100 & 88.6$\%$\\
         30 & 100 & 0.004 & 100 & 89.6$\%$\\
         40 & 100 & 0.002 & 100 & 88.9$\%$\\
         50 & 100 & 0.001 & 100 & 88.4$\%$\\
         60 & 100 & 0.0007 & 100 & 87.7$\%$\\
         70 & 100 & 0.0003 & 200 & 88.7$\%$\\
         80 & 100 & 0.0002 & 200 & 87.9$\%$\\
         90 & 100 & 0.0001 & 300 & 87.6$\%$\\
         100 & 100 & $7\times10^{-5}$ & 300 & 88$\%$\\
         \hline
    \end{tabular}
    \caption{FMNIST FFNN experiments to verify EH (\cref{fig:FMNIST_35}).}
    \label{tab:fmnist_eh}
\end{table}

\begin{table}[H]
    \centering
    \begin{tabular}{C{1cm} C{1cm} C{2.5cm} C{1cm} C{3cm}}
         depth & width & learning rate & epochs & test accuracy \\
         \hline
         10 & 100 & 0.003 & 100 & 97$\%$\\
         20 & 100 & 0.003 & 100 & 97$\%$\\
         30 & 100 & 0.003 & 100 & 96.9$\%$\\
         40 & 100 & 0.002 & 100 & 97.6$\%$\\
         50 & 100 & 0.001 & 100 & 97.6$\%$\\
         60 & 100 & 0.0007 & 100 & 97.6$\%$\\
         70 & 100 & 0.0002 & 200 & 94.8$\%$\\
         80 & 100 & 0.0001 & 200 & 94.4$\%$\\
         90 & 100 & 0.0002 & 300 & 96.1$\%$\\
         100 & 100 & 0.0001 & 300 & 95.8$\%$\\
         \hline
    \end{tabular}
    \caption{MNIST FFNN experiments to verify EH (\cref{fig:MNIST_35}).}
    \label{tab:mnist_eh}
\end{table}

\section{Layer-wise alignment of the forward feature kernel}

\begin{algorithm}\label{alg:dbm}
\caption{Layer-wise maximisation of features}
\begin{algorithmic}\label{alg:mainexp}
\STATE \textbf{input:} DNN $N$ with $L$ layers, LeakyReLU activations $\phi$, stochastic optimiser $O$, batch size $b$.
\STATE \textbf{input:} Training dataset $S=\{(x_1,y_1), \ldots, (x_{n},y_n)\}$ and validation set $V=\{(x_1,y_1), \ldots, (x_{n'},y_{n'})\}$ with normalised $x_i$ (such that $||x_i||^2 = 1$).
\FOR{layers $l=1,\dots,L-1$}
    \STATE Normalise inputs to $l$, such that $||\phi(z_{l-1}(x))||^2=1$.
    \WHILE{True}
        \FOR{Minibatches $B$ in $S$}
            \STATE With optimiser $O$, update weights and biases in layer $l$ using loss function
            $
            L(B) = \sum_{x_i,x_j \in B}||\overrightarrow{K}_l(x_i,x_j)-\frac{1}{2}\delta_{y_i,y_j}||^2,
            $
            where $\delta_{y_i,y_j}=1$ if $y_i=y_j$ else 0, and the unnormalised forward features, $\overrightarrow{K}_l(x_i,x_j) = \phi(z_{l-1}(x_i)) \cdot \phi(z_{l-1}(x_j))$.
        \ENDFOR
        \STATE End while based on increase/plateau of loss on the validation set, $L(V)$.
    \ENDWHILE
    \STATE Normalise layer $l$. Return $\phi(z_{l}(x))$ as input for layer $l+1$
\ENDFOR
\STATE Train layer $L$ (the final classification layer) with optimiser $O$ and cross entropy loss.
\STATE Return $N$.
\end{algorithmic}
\end{algorithm}

Previous work has studied layer-wise training of neural networks. Here, we adapt a method from \cite{kulkarni2017layer}, which aims to maximise forward feature learning. Here, we confirm that our adapted method generalises well, and show the resulting layerwise CKA on CIFAR10 and MNIST (see \cref{fig:lfm_train_mnist,fig:lfm_train_cifar}). We will call the algorithm layer-wise feature maximisation (LFM) -- see below. Finally, in \cref{fig:frozen}, we show the layerwise CKA for networks with all but the last layer frozen during training. We find non-trivial CKA evolution, due to evolution of the backward feature kernel (despite trivial forward features).

\begin{figure}[H]
\centering

\begin{subfigure}{0.32\linewidth}
  \includegraphics[width=\linewidth]{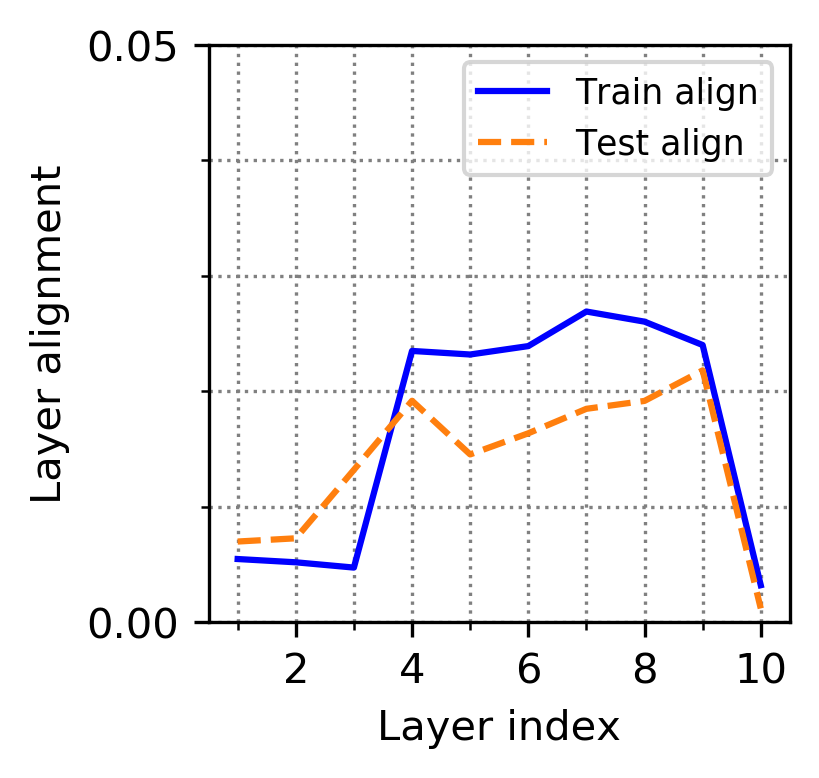}
  \caption{LFM width 32. Acc $93.6\%$}
\end{subfigure}
\begin{subfigure}{0.32\linewidth}
  \includegraphics[width=\linewidth]{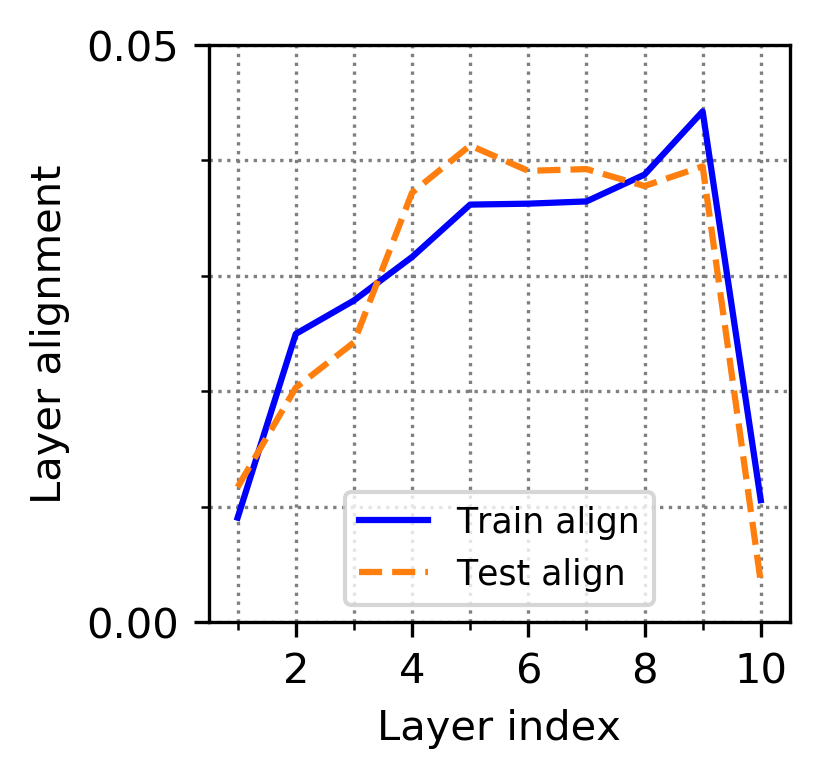}
  \caption{LFM width 256. Acc $95.9\%$}
\end{subfigure}
\begin{subfigure}{0.32\linewidth}
  \includegraphics[width=\linewidth]{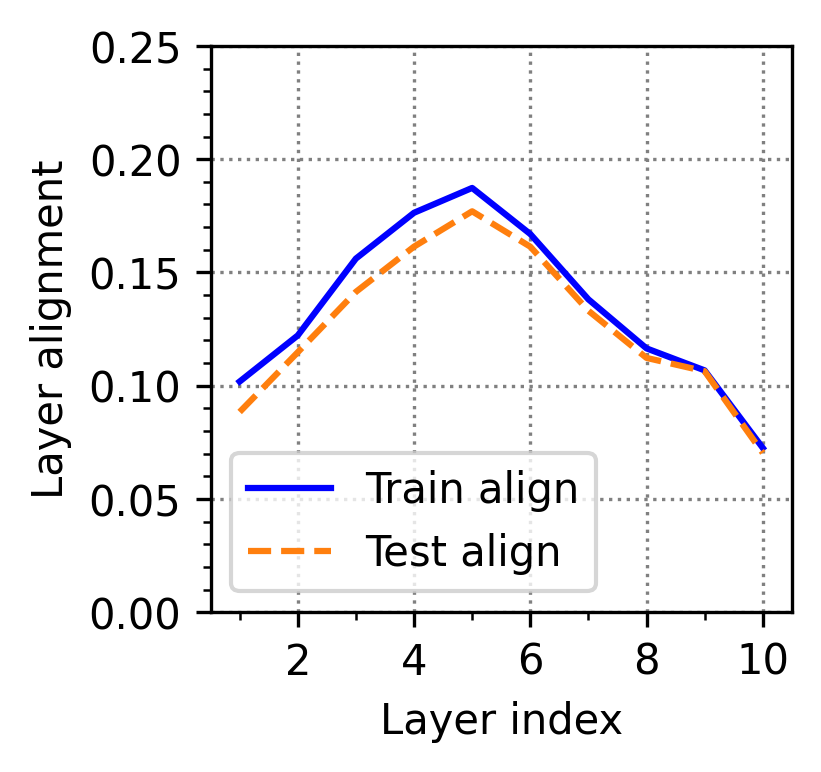}
  \caption{width 32. Acc $96.9\%$}
\end{subfigure}

\caption{MNIST. (a) and (b) are trained with LFM (widths 32 and 256 respectively) with ADAM and a learning rate of 0.01, and (c) trained end-to-end with learning rate 0.003. (a) and (b) used a train/validation set split of 45000/5000, and (c) used 50000 training images with no validation set.
The LFM does not produce a single peak in the same way an end-to-end trained network does (rather, several layers have approximately maximal alignment). The absolute magnitudes of alignment are also lower for the LFM -- determining whether this is an artefact of the layer-wise normalisation scheme or otherwise is a topic of future work.
Furthermore, as with experiments on CIFAR10 (\cref{fig:lfm_train_cifar}), the LFM underperforms relative to the end-to-end neural network. More sophisticated early stopping schemes or different optimisers (Adam was used as other optimisers struggled to converge) may improve generalisation.
}\label{fig:lfm_train_mnist}
\end{figure}

\begin{figure}[H]
\centering

\begin{subfigure}{0.32\linewidth}
  \includegraphics[width=\linewidth]{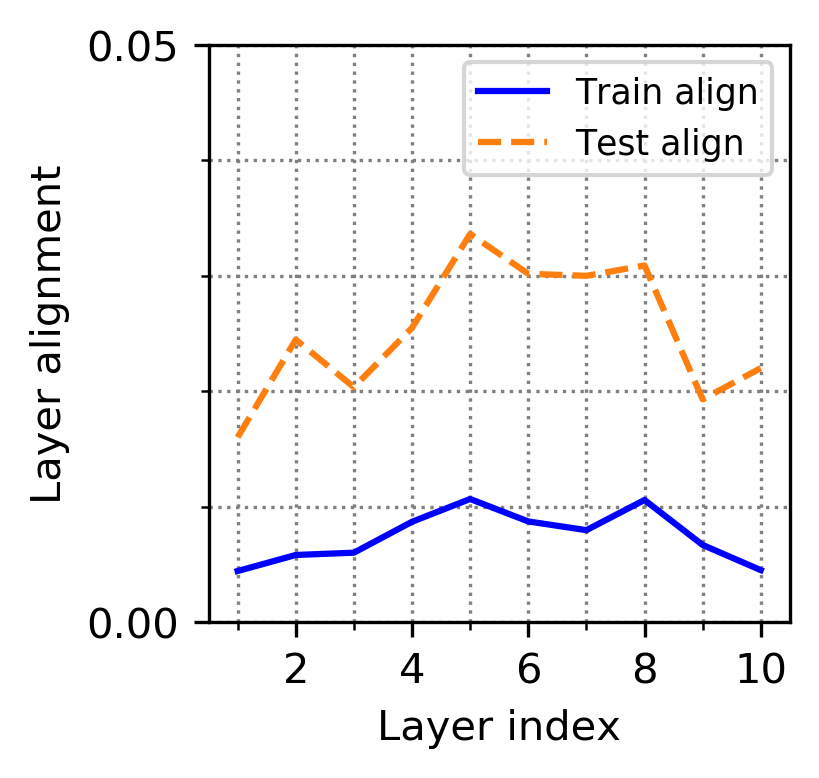}
  \caption{LFM width 256. Acc $48.3\%$}
\end{subfigure}
\begin{subfigure}{0.32\linewidth}
  \includegraphics[width=\linewidth]{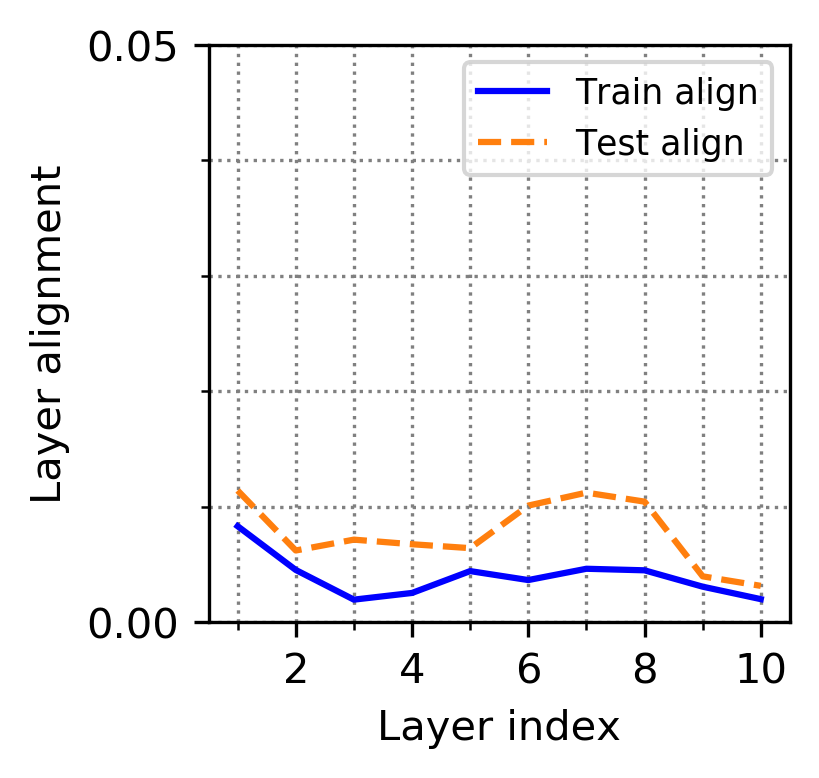}
  \caption{LFM width 1024. Acc $49.2\%$}
\end{subfigure}
\begin{subfigure}{0.32\linewidth}
  \includegraphics[width=\linewidth]{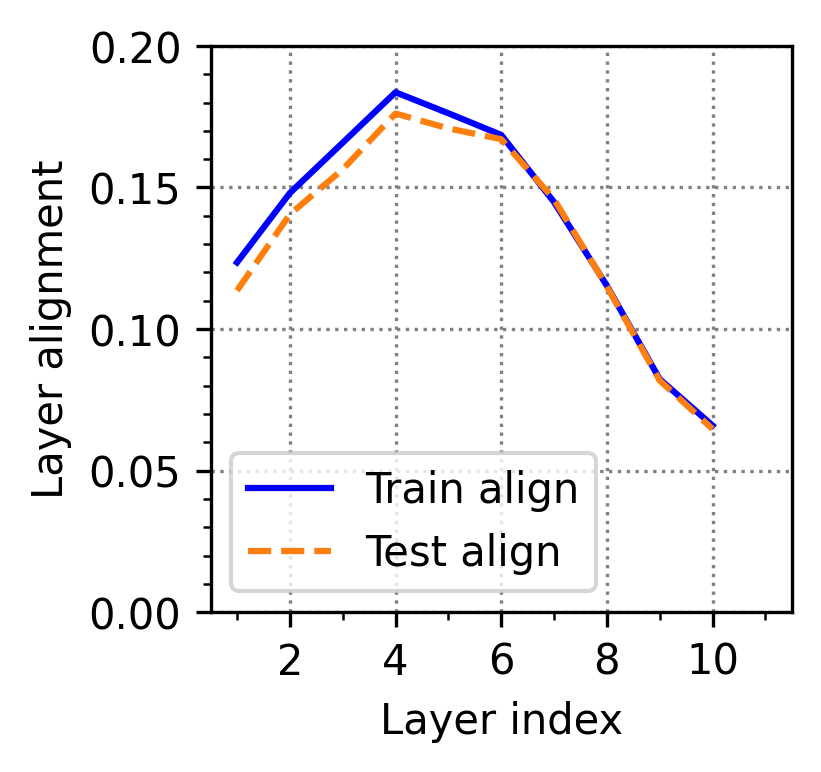}
  \caption{width 256. Acc $56.7\%$}
\end{subfigure}

\caption{CIFAR10. (a) and (b) are trained with LFM (widths 256 and 1024 respectively) with ADAM and a learning rate of 0.01, and (c) trained end-to-end with learning rate 0.003. (a) and (b) used a train/validation set split of 45000/5000, and (c) used 50000 training images with no validation set. The alignment for the LFMs is very different to the end-to-end trained system, and the generalisation error is noticeably worse. Understanding why, or improving results is a topic of future work. }\label{fig:lfm_train_cifar}
\end{figure}

\begin{figure}[H]
\centering

\begin{subfigure}{0.32\linewidth}
  \includegraphics[width=\linewidth]{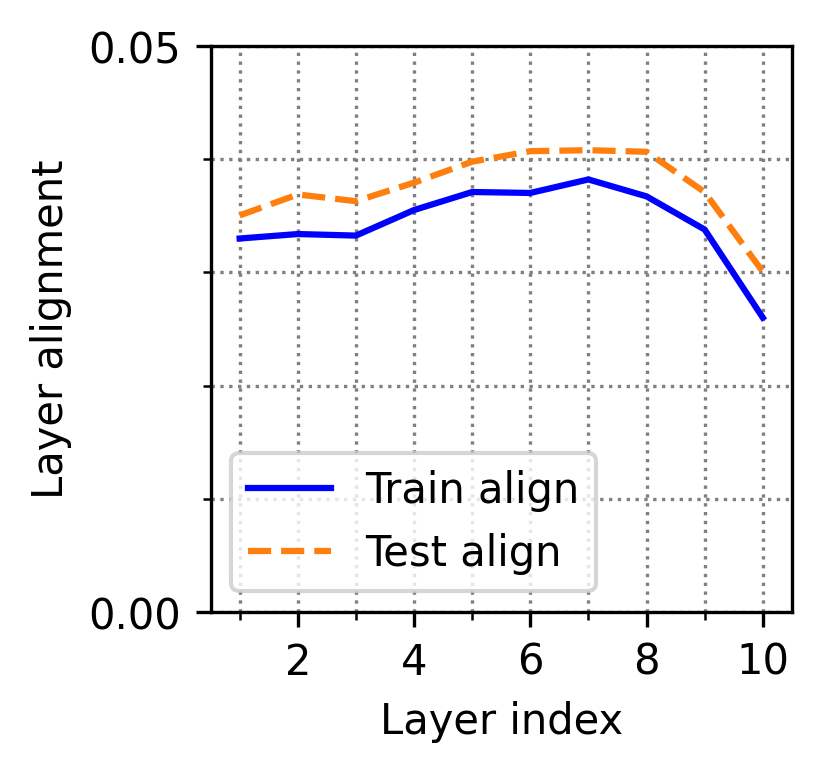}
  \caption{Width 2048. Noise scale: 0.01}
\end{subfigure}
\begin{subfigure}{0.32\linewidth}
  \includegraphics[width=\linewidth]{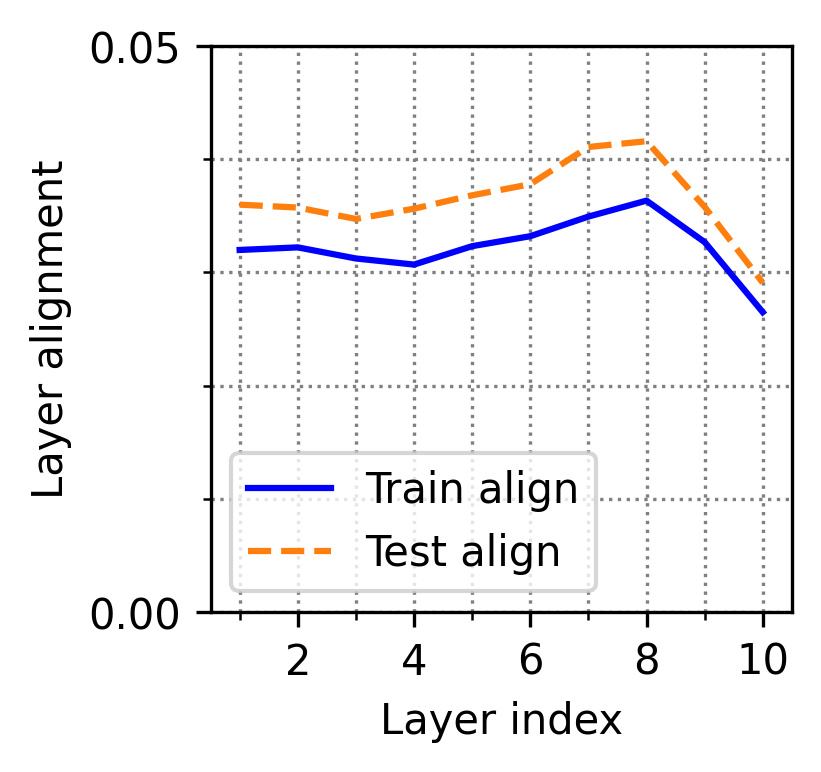}
  \caption{Width 2048. Noise scale: 1.0}
\end{subfigure}

\caption{MNIST. All but the last layer is frozen during training (so no feauture learning occurs). 
In the limit of infinite width, this is equivalent to sampling from an NNGP \citep{Matthews2017}. 
With 10 layers, 2048 is not sufficiently wide to obtain 100\% training accuracy (see \cref{tab:mnist}), but computing the CKA for each layer scales poorly with layer width.
(a) and (b) achieve test accuracies of 91.9\% and 88.1\% respectively (due to comparatively small layer width).
Noise scale determines the scale of the initialisation of the final layer -- parameters are sampled i.i.d.\ from $N(0,s^2\times 2/L_w)$ (for noise scale $s$), where $L_w$ is the width of the layer.
Decreasing the amount of noise in the last layer appears to shift the peak towards the center, although more careful study is required.
Due to the frozen layers, no forward features are learned (so $\protect\overrightarrow{K}_l(x_i,x_j)$ is trivial), but evidently backward features $\protect\overleftarrow{K}_l(x_i,x_j)$ are non trivial. This is unlike neural networks trained end-to-end, which have both non-trivial forward and backward feature kernels.}\label{fig:frozen}
\end{figure}

\begin{table}[H]
    \centering
    \begin{tabular}{c c c c c}
        \hline
         Dataset & Number of layers & Width & Max train acc & Max test acc \\
         \hline
         MNIST & 10 & 75000 & 97.6\% & 96.4\% \\
         MNIST & 10 & 10000 & 96.4\% & 95.6\% \\
         MNIST & 10 & 2048 & 92.6\% & 91.9\% \\
         \hline
    \end{tabular}
    \caption{Best train/test accuracy for frozen neural networks as a function of layer widths. Network parameters are initialised i.i.d.\ from  $N(0,2/L_w)$ where $L_w$ is the width of the layer. Clearly, layers have to be very wide before near parity can be achieved with finite width unfrozen networks.}
    \label{tab:mnist}
\end{table}

\section{Connecting CKA with Effective Rank, NTK, and Fisher information}\label{app:fim}

In the following appendix, we will summarize work related to the effective rank of $\Psi$ and their connection to CKA $A$, and provide some related novel experiments. \cite{baratin2021implicit,oymak2019generalization} independently observed that the following phenomenon holds when DNNs generalize:
\begin{enumerate}
  \itemsep 1em 
  \item $\Psi$ has a small number of large singular values while most other singular values are much smaller.
  \item The label vector $Y$ is aligned with large singular directions in $\Psi$.
\end{enumerate}
The two conditions above are also the conditions for maximizing CKA. As was argued in \cite{baratin2021implicit}, the CKA was introduced as the measure of model compression and feature selection. The first condition can be viewed as the measure of model compression, as effectively only a few directions in parameter space are relevant for changes of the function. The second condition can be interpreted as feature selection because we want the anisotropy of the tangent space to be skewed toward directions that leads to correct labels in function space. 

We will first show that above observations still hold for layerwise CKA in \cref{app:G1}. Then, in \cref{app:G:FIM}, we shed more light into the connection between the two conditions and the generalization via the Fisher information matrix. 

%We will summarize the results empirically and make links to the empirical Fisher Information Matrix $I(\theta)$ and flatness arguments.

%\cite{oymak2019generalization, li2020gradient} have previously studied the connection between the rank of $\Psi$ and generalization. Theoretical arguments and empirically evidence in \cite{oymak2019generalization} suggest that DNNs trained to low loss and generalise well exhibit the following properties: 
%We can interpret these two conditions on the singular values of $\Psi$ in the following way: the outputs of the neural network $f(x;\theta)$ are more robust to perturbations in the parameters. However perturbation in $f(x;\theta)$ may not always lead to perturbation in likelihood, and the robustness of prediction on perturbation of parameters is best understood via the Fisher information matrix.

%Finally, we interpret this result using the Fisher Information Matrix in \cref{app:G:FIM}.

\subsection{Connection between kernel alignment $A$ and the Effective Rank of $\mathbf{\Psi}^T\mathbf{\Psi}$}\label{app:G1}
Let us begin with observation 1: that only a handful of eigenvalues are large. As shown in \cref{fig:eigs_fcn}, the eigenvalues differ in logarithmic scale for both functions that do and do not generalize. Even though both eigenvalue distributions are spread on a logarithmic scale, the magnitude of the spread is different. To quantify this, we introduce the stable rank \citep{rudelson2007sampling}.\footnote{Other measures of effective rank (e.g. \cite{roy2007effective}) are also valid for our study as long as it can represent the exponential gap in eigenvalues.}

\begin{figure}[H]
\centering

\begin{subfigure}{0.4\linewidth}
  \includegraphics[width=\linewidth]{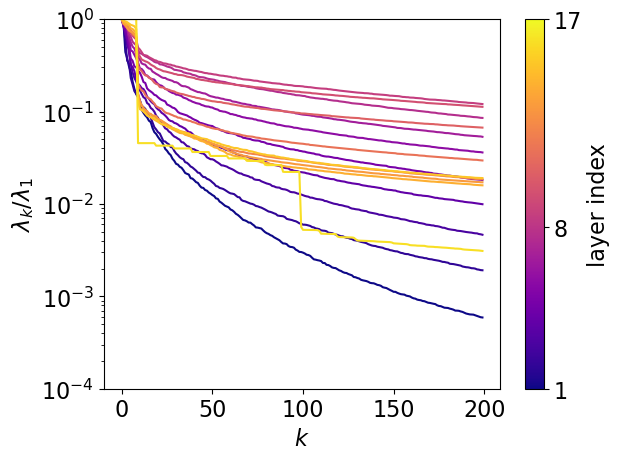}
  \caption{Trained on 20\% random labels, test accuracy = 70.41\%}
  \label{fig:eigs_fcn_bad}
\end{subfigure}
\begin{subfigure}{0.4\linewidth}
  \includegraphics[width=\linewidth]{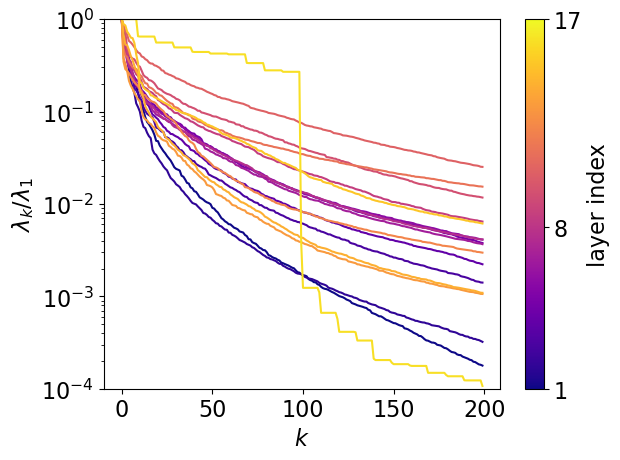}
  \caption{Trained without random labels, test accuracy = 90.59\%}
  \label{fig:eigs_fcn_good}
\end{subfigure}

\caption{The top 200 normalized eigenvalues of $C\mathbf{\Psi_l}^T\mathbf{\Psi_l}C$ for different layers $l$. VGG19 was trained on CIFAR10 (a) with 20\% random labels and (b) without random labels. The relative eigenvalues decay on logarithmic scale for both cases. The logarithmic gap in eigenvalues assert that $C\mathbf{\Psi_l}^T\mathbf{\Psi_l}C$ has low effectively rank. On average, the eigenvalue drops faster for generalizing case (b) compared to less generalizing case (a).}
\label{fig:eigs_fcn}
\end{figure}

For a matrix $W$ of rank $k$, the stable rank is 
\begin{align*}
R(W) = \frac{|| W ||^2_F}{||W||^2_2} = \frac{\sum_i \lambda^2_i}{\lambda_1^2},
\end{align*}
where the numerator and denominator are squares of the Frobenius norm and spectral norm respectively. Singular values $\lambda_i$ are given in descending orders measured by their absolute values. The stable rank is scale invariant and upper bounded by the true rank. Note that stable rank becomes 1 when $\lambda_1 >> \lambda_j$ for all $j > 1$.

Using the definition of stable rank, layerwise CKA can be written as
\begin{align*}
A_l &= \frac{\tilde{Y}^T C\mathbf{\Psi_l}^T\mathbf{\Psi_l}C \tilde{Y}}{||C\mathbf{\Psi_l}^T\mathbf{\Psi_l}C||_F} = \frac{\lambda_1}{||C\mathbf{\Psi_l}^T\mathbf{\Psi_l}C||_F} \frac{\tilde{Y}^T C\mathbf{\Psi_l}^T\mathbf{\Psi_l}C \tilde{Y}}{\lambda_1} \\
&= \frac{1}{\sqrt{R_l}} \left( \sum_i \frac{\lambda_i}{\lambda_1} \left<u_i,\tilde{Y}\right>^2\right),
\end{align*}
where $R_l$ is the stable rank of $C\mathbf{\Psi_l}^T\mathbf{\Psi_l}C$ and $\lambda_i, u_i$ are its eigenvalue and eigenvector respectively. We have assumed $\tilde{Y}$ has been normalized. 
Then, layerwise CKA can be divided into two terms: inverse square root of stable rank of $C\mathbf{\Psi_l}^T\mathbf{\Psi_l}C$ and the weighted correlation between the $u_i$ and $\tilde{Y}$. Each term is related to the observation 1 and 2 respectively.

Notice that $A_l$ is maximized when $R_l$ is minimized and the correlation term $\left( \sum_i \lambda_i/\lambda_1 \left<u_i,\tilde{Y}\right>^2 \right)$ is maximized. This result is trivial because CKA is maximizing the alignment with $\tilde{Y} \tilde{Y}^T$, which is a rank 1 matrix with $\tilde{Y}$ as its eigenvector. 

We observe in \cref{fig:ka_rank_rank} that the stable rank is consistently small (i.e. concentrated eigenvalues) for most of the layers (this relates to Observation 1). In addition, the stable rank is smaller when better generalization gap is achieved. This could be interpreted as signifying that only a few directions in parameter space can meaningfully alter the function, hence a function is more robust against perturbation in parameters, and effectively lower dimensional.

\subsection{Connection between kernel alignment $A$ and the Correlation $\sum_i \lambda_i/\lambda_1 \left<u_i,\tilde{Y}\right>^2$}\label{app:G2}
\begin{figure}[H]
\centering

\begin{subfigure}{0.32\linewidth}
  \includegraphics[width=\linewidth]{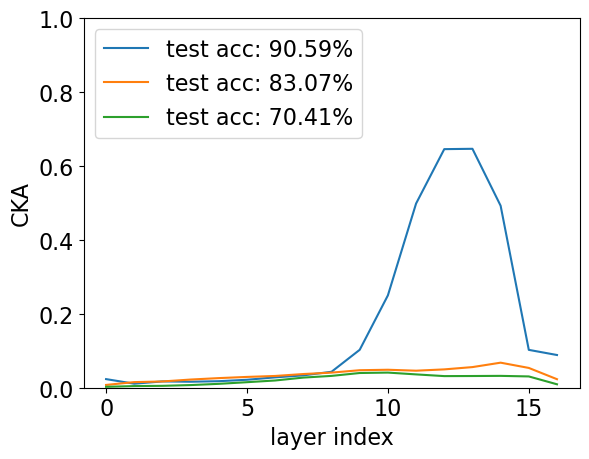}
  \caption{CKA}
  \label{fig:ka_rank_ka}
\end{subfigure}
\begin{subfigure}{0.32\linewidth}
  \includegraphics[width=\linewidth]{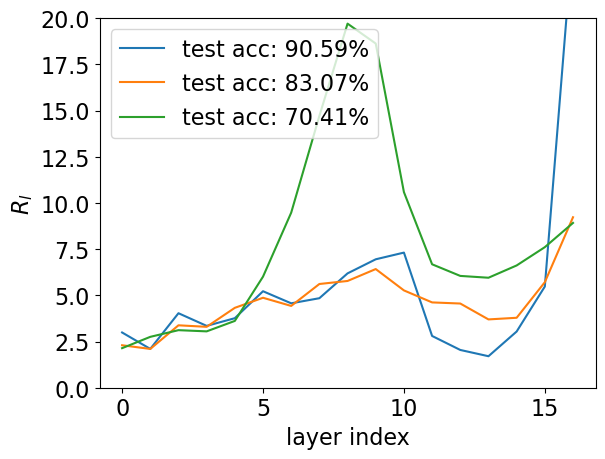}
  \caption{Stable rank}
  \label{fig:ka_rank_rank}
\end{subfigure}
\begin{subfigure}{0.32\linewidth}
  \includegraphics[width=\linewidth]{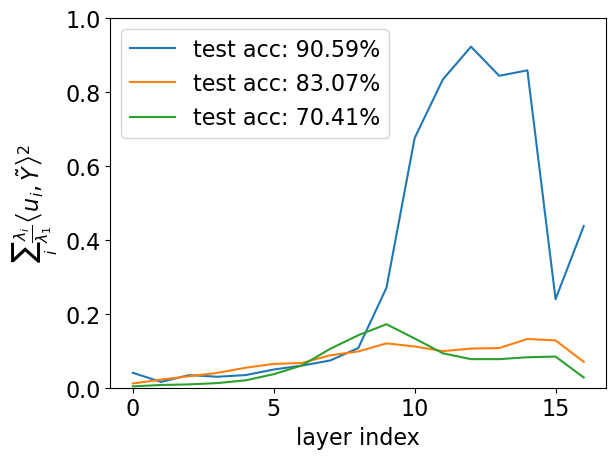}
  \caption{Correlation}
  \label{fig:ka_rank_corr}
\end{subfigure}

\caption{Plots for (a) CKA $A_l$, (b) stable rank $R_l$, and (c) correlation $\sum_i \frac{\lambda_i}{\lambda_1} \left<u_i,\tilde{Y}\right>^2$ after training. VGG19 was trained on CIFAR10 with following conditions to achieve different test accuracy: (blue) 90.59\%, lr=0.02; (orange) 83.07\%, lr=0.001; (green) lr=0.001 and 20\% random labels. (a) Higher test accuracy correlates with larger layer-wise CKA, as suggested by \cref{fig:egs_main} in the main text. (b) The stable rank is on average larger for a model trained with the random labels (green), while smaller for models trained without the random labels (blue, orange). This suggests that low rank is the first condition that must be satisfied for good generalization. The importance of low rank can be seen from the fact that it implies robustness to perturbation in parameters. (c) The correlation is large only for best generalising model (blue), and is small otherwise. This suggests that large correlation may be a second condition for generalization. A large correlation indicates that the only allowed deviation in function space is along $\tilde{Y}$. 
%$\tilde{Y}$ increases (decreases) all wrong labels for all datapoints uniformly.
}
\label{fig:ka_rank}
\end{figure}

As mentioned before, the correlation term measures how much $\tilde{Y}$ is aligned with the large eigenspace of $C\mathbf{\Psi_l}^T\mathbf{\Psi_l}C$: the quantitative measurement for observation 2. The correlation must also be large in order for $A_l$ to be large. When such is the case, we may approximate
\begin{align}
\Psi_l \approx v_l \sqrt{\lambda_1}\tilde{Y}^T,
\end{align}
and thus the effect of perturbation in $\theta$ on $f$,
\begin{align}
\Delta f^T \approx \left<\Delta \theta, v_l\right> \sqrt{\lambda_1}\tilde{Y}^T.
\end{align}
Thus, any infinitesimal change in parameter space for most directions cannot alter $f$. The only allowed direction of change is along $\tilde{Y}$. The direction along $\tilde{Y}$ is special because $f$ cannot be effectively changed to increase the probability of an incorrect label. %$f$ must be moved between incorrect labels, and all datapoints in order to create mis-classification. 
This combined with observation 1 suggests that most directions in parameter space are robust against perturbation, and the only meaningful direction of change is along the direction that uniformly increases the function values of misclassified labels.

The correlation term for models of different generalization can be seen from \cref{fig:ka_rank_corr}. Notice that the correlation is small for all of the layers for the two worse performing models (orange, green). However, for the model with the best generalization error (blue), the correlation increases to near 1 in the intermediate layers. The reason for this alignment hierarchy (due to the training process) was explained in \cref{sec:EH} of the main text.

\subsection{Fisher information matrix}\label{app:G:FIM_intro}

Fisher information is the metric tensor of a statistical manifold \citep{amari2000methods}, and it provides the local measure of how fast a prediction of model changes according to change in parameters.
Because DNNs are commonly trained using the gradients of the negative log likelihood (NLL), and not the gradient of the functions $f_\theta$\footnote{For MSE loss, the two become the same}, Fisher information becomes a natural choice for calculating the flatness of different hypotheses. 
Fisher information $I^{exp}(\theta)$ is 
\begin{align*}
I^{exp}_{ij}(\theta) &= \mathbf{E}_{x \sim q(x)} \left[\mathbf{E}_{y \sim p(y|x;\theta)} \left[ \left. \frac{\partial \log(p(y|x;\theta))}{\partial \theta_i} \frac{\partial \log(p(y|x;\theta))}{\partial \theta_j} \right|_{\theta} \right]\right],
\end{align*}
where $q(x)$ is the true distribution of the inputs, and $p(y|x;\theta)$ is the conditional probability predicted by a statistical model at $\theta$. 
%As a metric, Fisher information measures of how much $p(y|x;\theta)$ will change according to a infinitesimal deviation in parameters. 
To see why it is a measure of sensitivity of the likelihood along the parameters, Fisher information can be equivalently expressed as
\begin{align*}
I^{exp}_{ij}(\theta) &= -\mathbf{E}_{x \sim q(x)} \left[\mathbf{E}_{y \sim p(y|x;\theta)} \left[ \frac{\partial^2 \log(p(y|x;\theta))}{\partial \theta_i\partial \theta_j} \right]\right] = \left. \frac{\partial^2 D_{KL}\left(p(y|x;\theta)||p(y|x;\theta + \Delta\theta)\right)}{\partial \theta_i\partial \theta_j} \right|_{\Delta\theta=0},
\end{align*}
which is the curvature of KL divergence at $\theta$. Thus, quantifying the second order changes of $p(y|x;\theta)$ (i.e. large eigen direction of $I^{exp}_{ij}(\theta)$ leads to sharper change of $p(y|x;\theta)$, while small eigen direction leads to flatter changes). This quantity is more useful near the maximum likelihood estimate when the first order deviation terms disappear.

For our purposes, we will only consider the empirical fisher information
\begin{align*}
I(\theta) = \mathbf{\Psi} \frac{\partial ^2 \mathcal{L}}{\partial F^2} \mathbf{\Psi}^T,
\end{align*}
where $\mathcal{L}$ is the negative log likelihood, and $F \in \reals^{o \times n}$ is the concatenated network output. We have dropped the centering matrix $C$ following the observation from \cite{baratin2021implicit} that quantitatively similar results were obtained for CKA and KA. For the special case when $\mathcal{L}$ is MSE loss\footnote{MSE loss is used for NTK, and can be interpreted as the likelihood being a Gaussian distribution with fixed covariance.},
\begin{align*}
\frac{\partial ^2 \mathcal{L}}{\partial F_\alpha \partial F_\beta} = \frac{\partial ^2 \sum_i^{on}(F_i - y_i)^2}{\partial F_\alpha \partial F_\beta} = \delta_{\alpha\beta},
\end{align*}
and $I(\theta) = \mathbf{\Psi}\mathbf{\Psi^T}$. Therefore, $I(\theta)$ and NTK share same set of non-zero eigenvalues for MSE loss. For more general discussion beyond the empirical case, see Appendix A of \cite{baratin2021implicit}.

\subsection{Fisher Information and Generalization via the Stable Rank}\label{app:G:FIM}

It has been argued that "flatness" (albeit not being the sole factor) is related to good generalization \citep{keskar2016large, wu2017towards, neyshabur2017pac}. The flatness is calculated by the Hessian, which is equal to the Fisher information for MLE. Thus, measuring the stable rank of Fisher information can be a measure of the flatness of the model. Flatness of $f$ may be inadequate, because sharpness on $f$ does not always lead to sharpness on $\mathcal{L}$ (e.g. when the softmax function is saturated).

We will explore the case of cross-entropy loss to see what CKA can infer about the flatness of the loss landscape. For such settings,
\begin{equation*}
\frac{\partial ^2 \mathcal{L}}{\partial F \partial F} = 
\begin{pmatrix}
\frac{\partial ^2 \mathcal{L}}{\partial f(x_1) \partial f(x_1)}&  &  & \\
& \frac{\partial ^2 \mathcal{L}}{\partial f(x_2) \partial f(x_2)} &  &\\
& & \ddots &\\
& & &\frac{\partial ^2 \mathcal{L}}{\partial f(x_n) \partial f(x_n)}\\,
\end{pmatrix} 
\end{equation*}
where 
\begin{align*}
\frac{\partial ^2 \mathcal{L}}{\partial f^i_\theta(x) \partial f^j_\theta(x)} = \delta_{ij}p_i(x) -p_i(x)p_j(x),
\end{align*}
and $p_i(x)$ is  
\begin{align*}
p_i(x) = p(y = i | x;\theta) = \frac{e^{f^i_\theta(x)}}{\sum_j e^{f^j_\theta(x)}}.
\end{align*}

$\frac{\partial ^2 \mathcal{L}}{\partial F \partial F}$ is clearly different from the identity matrix and approaches 0 as the training converges. However, it is still possible to infer the properties of Fisher information from $\mathbf{\Psi}$.

By the property of rank under matrix multiplication, $\textrm{rank}(\mathbf{\Psi} \frac{\partial ^2 \mathcal{L}}{\partial F^2} \mathbf{\Psi}^T) \leq \textrm{rank}(\mathbf{\Psi^T}\mathbf{\Psi})$. Even though the stable rank does not strictly satisfy this condition, we can infer that the stable rank of Fisher information would still be similar or smaller than that of $\mathbf{\Psi^T}\mathbf{\Psi}$. In \cref{fig:fim_rank_comparison}, the rank of Fisher information indeed is observed to be upper bounded by the rank of $\mathbf{\Psi^T}\mathbf{\Psi}$. More surprisingly, the stable rank of Fisher information is consistently small across all layers, which is left as a future direction of investigation.

As seen in \cref{fig:fim_rank_all}, the stable rank of Fisher information not only is upper bounded by the stable rank of CKA, but loosely follows the trend of
CKA. In addition, generalizing models leads to smaller stable rank of Fisher information. We can empirically postulate that larger CKA leads to smaller stable rank of CKA, which in turn informs us about the stable rank of Fisher information.

%large increase along the component of $f$ along the correct label changes $f$, but the probability of the correct label is negligibly changed if $f$ along the true label were already large to start with (i.e saturation in softmax function).
\begin{figure}[htp]
\captionsetup[subfigure]{justification=centering}
\centering

\begin{subfigure}{0.32\linewidth}
  \includegraphics[width=\linewidth]{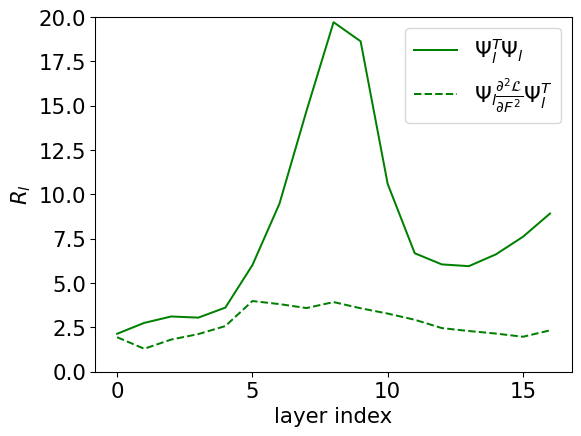}
  \caption{20\% random labels,\\ test accuracy = 70.41\%}
  \label{fig:fim_rank_bad}
\end{subfigure}
\begin{subfigure}{0.32\linewidth}
  \includegraphics[width=\linewidth]{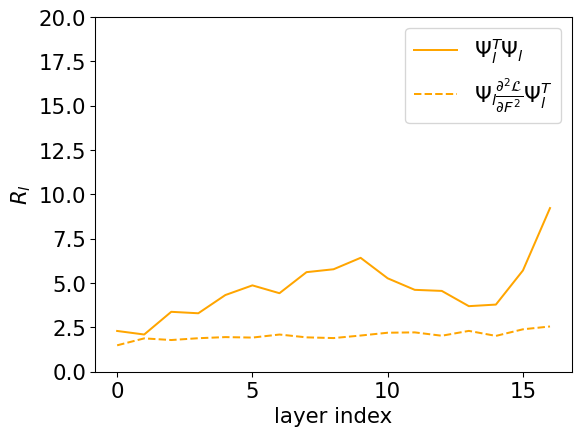}
  \caption{no random labels,\\ test accuracy = 83.07\%}
  \label{fig:fim_rank_neutral}
\end{subfigure}
\begin{subfigure}{0.32\linewidth}
  \includegraphics[width=\linewidth]{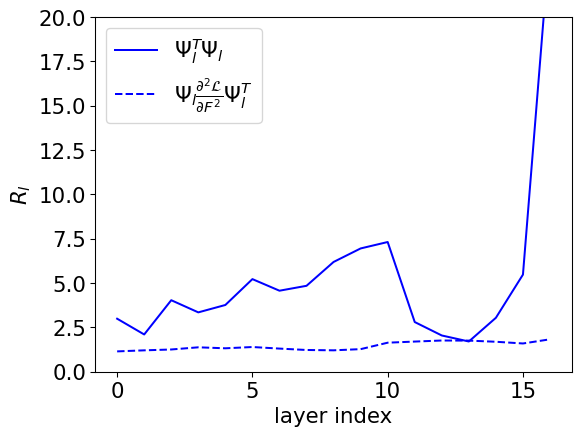}
  \caption{no random labels,\\ test accuracy = 90.59\%}
  \label{fig:fim_rank_good}
\end{subfigure}

\caption{Stable rank $R_l$ for NTK ($\mathbf{\Psi^T}\mathbf{\Psi}$) (solid) and Fisher information $\textrm{rank}(\mathbf{\Psi} \frac{\partial ^2 \mathcal{L}}{\partial F^2} \mathbf{\Psi}^T)$ (dashed) for models trained to different test accuracies. The training condition is equal to that of \cref{fig:ka_rank}, denoted by the same colors. Even though inequality of ranks for matrix multiplication does not hold strictly, it is observed that the rank of NTK upper bounds that of Fisher information. In addition, the stable rank of Fisher information is more consistent over the layers.}
\label{fig:fim_rank_comparison}
\end{figure}

\begin{figure}[H]
\centering

\begin{subfigure}{0.5\linewidth}
  \includegraphics[width=\linewidth]{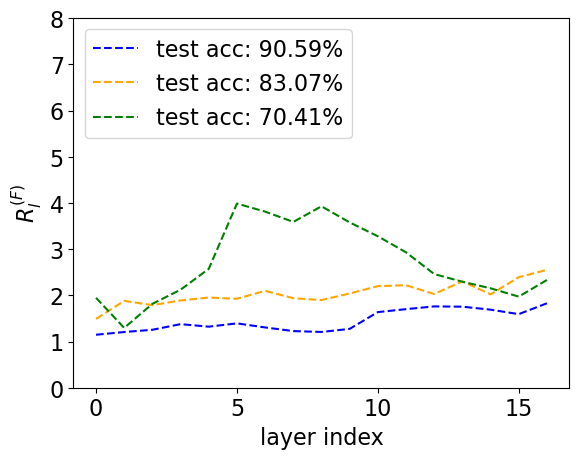}
  \caption{Ranks of Fisher information}
\end{subfigure}

\caption{The comparison of the ranks of Fisher information. Higher test accuracy correlates with smaller stable rank. It can be inferred from the fact that lower stable rank leads to more robustness in the prediction probabilities.}
\label{fig:fim_rank_all}
\end{figure}

\subsection{Fisher Information and Correlation}\label{app:G:FIM_corr}
Even though the empirical relationship between the stable rank of Fisher information and CKA is clear, the correlation term is less straightforward. For the CKA, the correlation is defined between the anisotropy of the tangent space with our direction of interest ($\tilde{Y}$) in function space. We can extend it to measurement of the anisotropy of tangent space introduced by the Fisher information.
First, let us define a square root of Fisher information as
\begin{align*}
\sqrt{I_l} = \sqrt{\frac{\partial ^2 \mathcal{L}}{\partial F \partial F}}\mathbf{\Psi_l} = \sum_i \mu^T_i \sqrt{\lambda^{(F)}_i} \nu_i,
\end{align*}
where $\mu_i$ and $\nu_j$ are left and right singular vectors respectively, and $\lambda^{(F)}_i$ is the corresponding eigenvalue of Fisher information. Then we define the Fisher correlation as
\begin{align*}
&\sum_i \frac{\lambda^{(F)}_i}{\lambda^{(F)}_1} \left<\mu_i,\tilde{Y}\right>^2.
\end{align*}
However, as seen in figure \cref{fig:fim_corr_all}, the correlation is not evident from the experiments. The investigation of why the correlation disappears for Fisher information is left as future work.

\begin{figure}[H]
\centering

\begin{subfigure}{0.5\linewidth}
  \includegraphics[width=\linewidth]{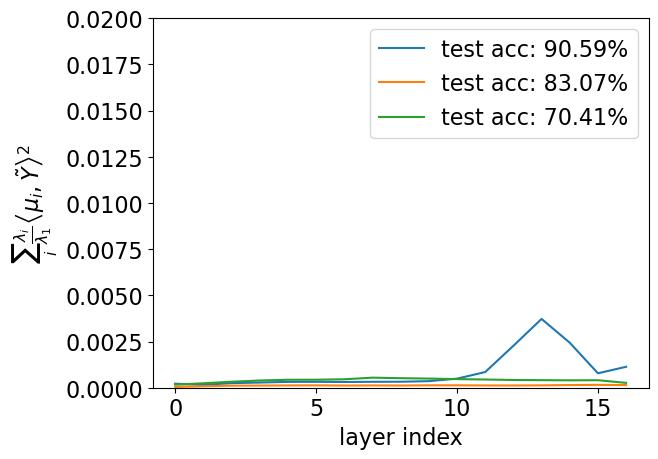}
  \caption{Correlation for Fisher information}
\end{subfigure}

\caption{The comparison of the correlation of Fisher information for models with different accuracies. The experiment conditionas are equal to that of \cref{fig:ka_rank}. The correlation is too small for meaningful argument.}
\label{fig:fim_corr_all}
\end{figure}

\end{document}